\theoremstyle{plain}
\newtheorem{lemma}{Theorem}[section]
\theoremstyle{definition}
\newtheorem{definition}[lemma]{Definition}
\theoremstyle{remark}
\DeclareMathOperator*{\argmin}{arg\,min}
\newcommand{\wrapper}{SOP}
\newcommand{\groupa}{group attribution}
\newcommand{\bracketd}{[d]}
\newcommand{\bracketdlong}{\{1,\dots, d\}}
\newcommand{\cmark}{\ding{51}}%
\newcommand{\xmark}{\ding{55}}%
\newcommand{\TheoremTitle}{Conjecture}
\newcommand{\LemmaTitle}{Theorem}
\newcommand{\foo}{\vspace{-0.4em}}
\titlespacing*{\paragraph} {0pt}{0.5ex plus 1ex minus .2ex}{1em}
\newcommand{\Om}{$\Omega_m$}
\newcommand{\seight}{$\sigma_{8}$}
\definecolor{myblueeee}{HTML}{000000}
\definecolor{myblueee}{HTML}{000000}
\definecolor{mybluee}{HTML}{000000}
\definecolor{myblue}{HTML}{000000}
\definecolor{myred}{HTML}{ff0000}
\newcommand\norm[1]{\left\lVert#1\right\rVert}
\newcommand{\groupsubset}{S \cap G_i \neq \emptyset}
\newcommand{\groupsubsetins}{G_i\subseteq S}
\newcommand{\sem}{SANN}
\newcommand{\senn}{SENN}
\icmltitlerunning{Sum-of-Parts: Self-Attributing Neural Networks with End-to-End Learning of Feature Groups}
\begin{document}

\twocolumn[
\icmltitle{Sum-of-Parts: Self-Attributing Neural Networks with \texorpdfstring{\\}{ }End-to-End Learning of Feature Groups}



\icmlsetsymbol{equal}{*}


\begin{icmlauthorlist}
\icmlauthor{Weiqiu You}{cis}
\icmlauthor{Helen Qu}{phy}
\icmlauthor{Marco Gatti}{phy}
\icmlauthor{Bhuvnesh Jain}{phy}
\icmlauthor{Eric Wong}{cis}
\end{icmlauthorlist}

\icmlaffiliation{cis}{Department of Computer and Information Science, University of Pennsylvania, Philadelphia, PA, USA}
\icmlaffiliation{phy}{Department of Physics and Astronomy, University of Pennsylvania, Philadelphia, PA, USA}

\icmlcorrespondingauthor{Weiqiu You}{weiqiuy@seas.upenn.edu}
\icmlcorrespondingauthor{Eric Wong}{exwong@seas.upenn.edu}

\icmlkeywords{Machine Learning, ICML, Explainability}

\vskip 0.3in
]



\printAffiliationsAndNotice{}  

\begin{abstract}
\textit{Self-attributing neural networks (SANNs)}
present a potential path towards interpretable models for high-dimensional problems, but often face significant trade-offs in performance.
In this work, we formally prove a lower bound on errors of per-feature SANNs, whereas group-based SANNs can achieve zero error and thus high performance.
Motivated by these insights, we propose \textit{Sum-of-Parts (SOP)}, a framework that transforms any differentiable model into a group-based SANN, where feature groups are learned end-to-end without group supervision.
SOP achieves state-of-the-art performance for SANNs on vision and language tasks, and we validate that the groups are interpretable on a range of quantitative and semantic metrics.
We further validate the utility of SOP explanations in model debugging and cosmological scientific discovery.
\footnote{Our code is available at \url{https://github.com/BrachioLab/sop}}
\end{abstract}

\section{Introduction}
\label{sec:intro}

Machine learning (ML) models are powerful at complex tasks, but also notoriously opaque.
In high-stakes domains such as science~\citep{li2021kepler,zednik2022scientific} and medicine~\citep{Reyes2020,tjoa2020survey}, experts need explanations to trust the models' decisions.
For instance, physicists use interpretable coefficients to validate an ML model’s rediscovery of Kepler's first law~\citep{li2021kepler}, while physicians require explanations to trust ML-driven diagnostic decisions~\citep{klauschen2024toward}.

Self-explaining neural networks (\senn{}s) were proposed as a way to create neural networks with guaranteed linear interpretations~\citep{melis2018towards}.
Specifically, \senn{}s express predictions as linear combinations of interpretable atoms scaled by learnable coefficients, a natural generalization of the statistical interpretation of the classic linear model.
These interpretable atoms represent semantic notions such as object segments in images or 
example image prototypes~\citep{melis2018towards}.

\begin{figure}[t]
    \centering
    \includegraphics[width=\linewidth]{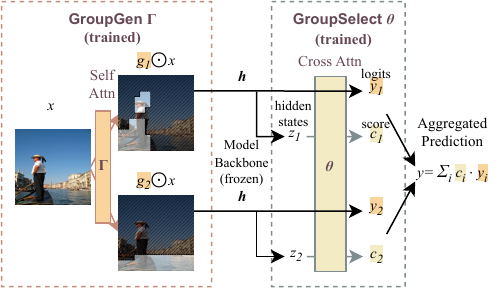}
    \caption{Sum-of-Parts (\wrapper{}) linearly aggregates outputs from multiple feature groups. This maintains performance while ensuring interpretability. \wrapper{} first generates groups using a \textit{group generator} $\Gamma$, predicts with a pre-trained \textit{backbone} $h$, and aggregates the group predictions with a \textit{group selector} $\theta$.
    }
    \label{fig:main-figure}
\end{figure}

A common class of self-explaining neural networks, which we refer to as \textit{Self-\textbf{Attributing} Neural Networks} (S\textbf{A}NNs), use embedded feature subsets as interpretable atoms~\citep{brendel2018bagnets,Jain2020LearningTF,agarwal2021neural}.
\sem{}s have predictions faithfully decomposable as linear combinations of feature subset contributions.
On the other hand, post-hoc feature attributions fail to pass the sanity checks for faithfulness~\citep{intgrad,adebayo2018sanity}.


However, \sem{}s often exhibit performance trade-offs and rely on specific components such as per-feature modules in NAM~\citep{agarwal2021neural}, convolutional layers in BagNet~\citep{brendel2018bagnets}, or attention mechanisms in FRESH~\citep{Jain2020LearningTF}.
This architecture requirement hinders \sem{}
from leveraging pre-trained models that achieve high performance on target tasks.

To understand these limitations, we first analyze a theoretical barrier that limits \sem{}'s performance.
Specifically, we prove a lower bound on the error of a class of self-attributing neural networks that we refer to as per-feature \sem{}s. This result shows that it is impossible for per-feature \sem{}s to achieve high performance when features are highly correlated, a critical limitation in high-dimensional vision and language data. In contrast, we further prove that group-based \sem{}s can achieve high performance in these settings, but require a careful selection of feature groups.

To overcome these provable limitations for per-feature \sem{}s and inflexible group-based \sem{}s, we propose \textbf{S}um-\textbf{o}f-\textbf{P}arts \textbf{(\wrapper{})}, a flexible framework that transforms any differentiable model into a group-based \sem{} (Figure~\ref{fig:main-figure}).
Specifically, given a backbone model and an input, the framework (1) identifies feature groups with a learned attention module, (2) encodes each group using a model-agnostic backbone, and 
(3) aggregates predictions with a second sparse attention module.
This framework can then be learned end-to-end with only the final prediction labels, notably without the direct supervision of feature groups.
Learned feature groups can capture the dynamic correlations in high-dimensional data, enabling \wrapper{} to overcome the theoretical limitations of per-feature and fixed-group-based \sem{}s.

Our main contributions are as follows:
\begin{enumerate}
    \item We propose Sum-of-Parts (\wrapper{}), a model-agnostic framework which transforms any model into a group-based \sem{}. The groups in \wrapper{} are end-to-end learnable without the need for group label supervision.
    \item We formally prove that groups are essential for \sem{}s to achieve low errors for highly correlated features. In contrast, we prove a lower bound on per-feature \sem{}s' errors, which grows as the number of features increases.
    \item We show that \wrapper{} achieves state-of-the-art performance among \sem{}s on vision and language tasks as informed by the theory, with learned interpretable groups validated on a range of quantitative and semantic metrics.
    \item We validate the utility of SOP in debugging if correct/incorrect model predictions rely more on the background/objects, as well as a scientific discovery setting within cosmology by using the groups and scores to uncover new insights about galaxy formation.
\end{enumerate}


\section{Overcoming Self-Attributing Neural Networks' Limitations with Groups}
\label{sec:theory}
In this section, we first review self-attributing neural networks (Section~\ref{sec:self-explaining-models}), and prove that the poor performance of previously explored per-feature \sem{}s is theoretically limited due to correlated features in high dimensional data (Section~\ref{sec:per-feat}). In contrast, we further prove that group-based \sem{}s can overcome these fundamental limitations, motivating our proposed framework for learnable group-based \sem{}s (Section~\ref{sec:group-based}).

\subsection{Self-Attributing Neural Networks}
\label{sec:self-explaining-models}

Self explaining neural networks model predictions as a linear combination $f(x) = \sum_i \theta(x)_i h(x)_i$, where $\theta(x)$ are linear coefficients and $h(x)$ are referred to as interpretable atoms~\citep{melis2018towards}. A common strategy for creating self-explaining neural networks is to use feature subsets as interpretable atoms. For example, BagNet~\citep{brendel2018bagnets} or Neural Additive Models~\citep{agarwal2021neural} decompose predictions into a linear combination of terms, where each term is directly computed from and attributed to a subset of input features. We denote such models, which combine the interpretability of linearity with guaranteed attributions to input features, as Self-Attributing Neural Networks (\sem{}s). 
\begin{definition}
A \emph{self-attributing neural network} $f:\mathbb{R}^d\rightarrow \mathbb{R}$ given input $x\in\mathbb{R}^d$ decomposes predictions as follows:
\begin{gather}
    f(x) = \sum_{i=1}^m
 \theta (x)_i h(x_{G_i})
 \label{eqn:sem}
\end{gather}
where $\theta (x)\in\mathbb{R}^m$  are linear coefficients, and $h(x_{G_i})\in\mathbb{R}^m$ are embeddings of the feature subset $x_{G_i}$ corresponding to the subset $G_i\subseteq [d]$.  
Note that $m$ can be different from the number of raw features $d$.

\end{definition}

The resulting linear combination constitutes a 
faithful-by-construction explanation for the decision process of the model~\citep{lyu2022towards}.
\sem{}s are only as interpretable as the underlying feature subsets~\citep{melis2018towards,zytek2022need}, and different \sem{}s have explored various feature subsets $x_{G_i}$. For example, NAM~\citep{agarwal2021neural} uses individual features, BagNet~\citep{brendel2018bagnets} relies on large patches, and FRESH~\citep{Jain2020LearningTF} selects a single subset using attention scores. However, across these subsets, \sem{}s have consistently exhibit significant trade-offs in performance in exchange for interpretability. In this section, we theoretically analyze the underlying cause for this trade-off and how \sem{}s can overcome these barriers. 


\begin{figure*}[t]
    \centering
    
     \begin{subfigure}[t]{0.49\linewidth}
        \centering
        \includegraphics{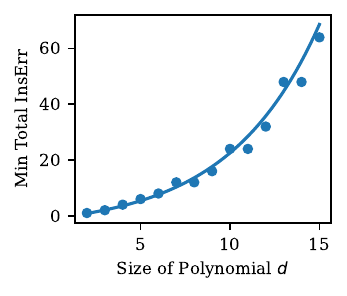}
        \caption{Minimum total insertion error for binomials.  
        } 
        \label{fig:polynomial}
     \end{subfigure}
     \hfill
     \begin{subfigure}[t]{0.49\linewidth}
         \centering
        \includegraphics{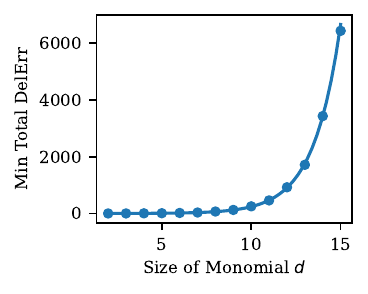}
        \caption{Minimum total deletion error for monomials. 
        }
        \label{fig:monomial}
     \end{subfigure}
     \caption{Errors for per-feature \sem{}s grow fast unavoidably. The minimum (a) total insertion error of monomials of size $d$ and (b) total deletion errors of binomials of size $d$ are the minima over all possible per-feature self-explaining models. The dots are the lower bounds computed by the solver, while the line is a best-fit exponential function. 
     }
     \label{fig:perturbations}
\end{figure*}

\subsection{Lower Bounds on the Error of Per-feature \sem{}s}
\label{sec:per-feat}

One class of \sem{}s uses individual features as interpretable atoms~\citep{agarwal2021neural}, 
where each feature subset $G_i=\{i\}$ and the corresponding interpretable atom
$h(x_{G_i}) = h(x_i)$ is the encoding of exactly one feature $x_i$.
We begin by analyzing the error of per-feature \sem{}s. 


\begin{algorithm*}[t]
\caption{The Sum-of-Parts Framework}
\label{alg:sop}
\begin{algorithmic}
\REQUIRE $\mathrm{\;Backbone\;Pred\;} h$, $\mathrm{\;Backbone\;Encoder\;}h_h$, $\mathrm{\;Embedding\;Model\;}h_e$,
$\mathrm{\;Class\;Weights\;}C_h$
\REQUIRE $\mathrm{\;GroupGen\;}\Gamma$, $\mathrm{\;GroupSelect\;}\theta$, 
$\mathrm{\;Number\;of\;Groups\;}\;m$,
$\mathrm{\;Input\;}  x$
\vspace{0.2cm}
\STATE $(g_1, \dots, g_m) \gets \Gamma(x) \gets \mathrm{SoftSelfAttn}_{\tau=0.2}\left(h_e(x)\right)$ 
\quad\quad\quad\quad\quad\quad\quad\quad\quad\quad\quad\quad\quad\COMMENT{Group Generating (\textbf{trained})}
\vspace{0.2cm}
\FOR{$i = 1 \to m$} 
\STATE $y_i \gets h(g_i \odot x)$ 
\quad\quad\quad\quad\quad\quad\quad\quad\quad\quad\quad\quad\quad\quad\quad\quad\quad\quad\quad\quad\quad\quad\quad\quad\quad\COMMENT{Predicting with \textit{frozen} Backbone}
\ENDFOR
\vspace{0.2cm}
\STATE $z \gets (h_h(g_1 \odot x), \dots, h_h(g_m \odot x))$
\vspace{0.2cm}
\STATE $(c_1, \dots, c_m) \gets \theta(\Gamma(x), x) \gets \mathrm{SparseCrossAttn} \left( C_h, z \right)$ 
\quad\quad\quad\quad\quad\quad\quad\quad\quad\quad\quad\quad\quad\;\;\COMMENT{GroupSelect (\textbf{trained})}
\vspace{0.2cm}
\STATE $y \gets \sum_{i=1}^m c_i y_i $ 
\end{algorithmic}
\end{algorithm*}

In order for such models to be accurate, the contribution of a feature to the true label should match the change in prediction when the feature is removed or added. Similarly, the contribution of a subset, $\sum_{G_i\subseteq S}\alpha_i$, should capture the change in prediction when the subset $S$ is excluded or included. We formalize this difference between a \sem{} prediction and the ground truth when inserting or deleting subsets of features as insertion and deletion errors, respectively. 

\begin{definition}
(Insertion Error) Let $\alpha_i = \theta(x)_i h(x_{G_i}) $ be the total contribution of the $i$th feature group to the prediction of a \sem{}. Then, the \emph{insertion error} of a self attributing neural network $f(x)=\sum_{i=1}^m \theta(x)_i h(x_{G_i}) = \sum_i \alpha_i$ for a target function $f^*:\mathbb R^d\rightarrow\mathbb R$ when inserting a subset of features $S$ to an input $x$ is 
\begin{equation*}
\begin{aligned}
    \mathrm{InsErr}(G, \alpha, S) &= \bigg|f^*(x_{S}) - f^*(0_d) - \sum_{G_i \subseteq S} \alpha_i\bigg|
        \quad\\
        &\textrm{where}\;\; (x_{S})_j = \begin{cases}
        x_j \quad \text{if}\;\; j \in S\\
        0 \quad \text{otherwise}
    \end{cases}
\end{aligned}
\end{equation*}
The total insertion error over all possible insertions is $\sum_{S\subseteq\bracketd{}} \mathrm{InsErr}(G, \alpha,S)$. 
\end{definition}

The insertion error captures the difference between the ground truth effect of inserting a subset of features $S$ and the corresponding change in the \sem{}. If these two quantities are close, then the error is low. The insertion error of a per-feature \sem{} is a special case where $G = \{\{1\}, \dots, \{d\}\}$. 

For brevity of presentation, in this section we focus on insertion, and present analogous definitions and theorems for deletion to Appendix~\ref{app:theory}. We note that the insertion and deletion procedures are analogous to insertion and deletion tests for post-hoc explanations~\citep{Petsiuk2018RISERI,samek2017evaluating}, but used here to  capture the error of a \sem{}.

\paragraph{Error Lower Bounds for Data with Correlated Features.}
We now prove that it is impossible for per-feature \sem{}s to perform well when the data contains correlated features. Specifically, in Theorem~\ref{lem:binomial_exact}, we show that when estimating polynomials function with correlated features, per-feature \sem{}s have a non-trivial lower bound on the total insertion error. 



\begin{restatable}[Lower Bound on Insertion Error for Binomials]{lemma}{binomialexact}
\label{lem:binomial_exact}
    Let $p:\{0,1\}^d\rightarrow \{0,1,2\}$ be a multilinear binomial polynomial function. Furthermore suppose that the features can be partitioned into $(S_1,S_2,S_3)$ of equal sizes where $p(x) = \prod_{i\in S_1 \cup S_2} x_i + \prod_{j\in S_2\cup S_3} x_j$. 
    Then, $ \sum_{S\subseteq\bracketd{}} \mathrm{InsErr}(G, \alpha,S) \geq D_{ins}(\hat{\lambda})$, where $D_{ins}(\hat{\lambda}) = (\hat \lambda_1 - \hat \lambda_2)^\top c$ is the lower bound, $\hat \lambda$ is a dual feasible point, and $c$ is a constant as defined in~\eqref{eqn:binomial-lp-dual}.
    
\end{restatable}
To derive this lower bound, we formulated the minimum total error of any \sem{} as a linear program, and used a dual feasible point to compute a lower bound. The proof and an analogous theorem for deletion error on monomials (Theorem~\ref{lem:monomial_exact}) are presented in Appendix \ref{app:theory}. 

\paragraph{Lower Bounds Grow Rapidly with Dimension.} In Figure \ref{fig:perturbations}, we calculate the lower bound using ECOS~\citep{bib:Domahidi2013ecos} and plot the lower bound for total insertion and deletion errors as the feature dimension grows (Figure~\ref{fig:perturbations}), and observe that these errors increase exponentially with $d$. Altogether, these theoretical lower bounds and empirical trends suggest that per-feature \sem{}s are fundamentally incapable of modeling high-dimensional data, as they cannot model even simple polynomials. 

\subsection{Group-based Self-Attributing Neural Networks Can Overcome the Performance Barrier}
\label{sec:group-based}

The fundamental limitation of per-feature \sem{}s comes from its choice of interpretable atom: individual features are unable to capture correlations between multiple features. If we allow \sem{}s to use more expressive interpretable atoms composed of feature groups, can we get past this limitation? 
To answer this question, we carry out an analogous analysis for more general, group-based \sem{}s. 

 In this section, we summarize our main theoretical result in Theorem~\ref{lem:group-m-sketch}: we prove that there exist group-based \sem{}s that can not only perfectly capture the earlier settings in \LemmaTitle{}~\ref{lem:binomial_exact} (and \LemmaTitle{}~\ref{lem:monomial_exact} in the appendix), but also far more complex, general polynomials with \emph{zero} error. In other words, the right groups can enable \sem{}s to capture correlated signals and overcome the performance barrier. 

\begin{restatable}[Informal: Zero Group Insertion and Deletion Error]{lemma}{groupm}
\label{lem:group-m-sketch}
For any general $m$-nomial polynomial $p$, having at most $m$ groups is sufficient for a group-based self-attributing neural network to achieve zero insertion and deletion error. 
See \LemmaTitle{}~\ref{lem:group-m} for full theorem and proof.
\end{restatable}
Intuitively, a \sem{} can achieve low error if its groups align with the correlations in the data. Specifically, consider data generated from a polynomial with multiple terms $p(x) = q_1(x_{G_1'}) + \dots  q_m(x_{G_m'})$, where each $q_i(x_{G_i'})$ is a multiplicative term that depends on the group of features in $G_i'$. Then, a group-based \sem{} can achieve low error if each correlated feature group $G_i'$ aligns with a \sem{} group $G_i$. 
On the other hand, misaligned or insufficiently many groups lead to nonzero errors, highlighting how group-alignment is key to \sem{} performance. 

While this theorem demonstrates that \sem{}s can model highly complex polynomials, it also provides insight into why existing \sem{}s have suffered major performance trade-offs. For example, \sem{}s that rely on rigid patterns~\citep{brendel2018bagnets}, use groups that are too small~\citep{agarwal2021neural} or use too few groups~\citep{Jain2020LearningTF} are unlikely to align with the ground truth correlations. In contrast, a high-performing \sem{} requires the ability to use many groups of flexible patterns to capture the diverse signals in high-dimensional data. These criteria motivate a new type of \sem{} that can overcome this theoretical performance barrier: the Sum-of-Parts framework.

\section{The Sum-of-Parts Framework}
\label{sec:method}

In this section, we introduce our main technical contribution Sum-of-Parts (SOP), a framework that transforms any differentiable model into a group-based self-attributing model.

Suppose we have an input $x\in\mathbb{R}^d$ and a backbone model $h:\mathbb{R}^d\rightarrow \mathbb{R}$ that makes a prediction with the input, and we hope to convert $h$ to a \sem{}.
A \sem{} requires components to generate and encode feature subsets and another component to assign them coefficients.

SOP therefore naturally 
consists of three parts: a \textbf{group generator} $\Gamma:\mathbb{R}^d\rightarrow \{0, 1\}^{m\times d}$ that generates groups $g_1,\dots , g_m\in \{0, 1\}^d$, a \textbf{backbone predictor} $h : \mathbb{R}^d \rightarrow \mathbb{R}$ that makes a prediction with each group of features, and a \textbf{group selector} $\theta : \{0,1\}^{m\times d} \times \mathbb{R}^d\rightarrow [0,1]^m$ that assigns scores to the groups:
\begin{equation*}
\begin{aligned}
    f(x) 
    &= \sum_{i=1}^m \underbrace{\theta( \Gamma(x), x)_i}_{\begin{array}{c}  \text{\small group} \\  \text{\small selector} \\ \text{\scriptsize (\textbf{trained})}\end{array}} \cdot \underbrace{h(g_i \odot x),}_{\begin{array}{c}  \text{\small backbone} \\  \text{\small predictor} \\  \text{\scriptsize (\textit{frozen})} \end{array}} \;\;\;\textrm{where}\; \underbrace{g_i\in \Gamma(x).}_{\begin{array}{c}  \text{\small group} \\  \text{\small generator} \\  \text{\scriptsize (\textbf{trained})} \end{array}}
\end{aligned}
\label{eqn:sop}
\end{equation*}
Here we consider a single predicted logit, while the process can be repeated in batch for multiple logits or classes.
Our algorithm is illustrated in Figure~\ref{fig:main-figure} and Algorithm~\ref{alg:sop}.


\paragraph{Group Generator} $\Gamma: \mathbb{R}^d \rightarrow \{0,1\}^{m\times d}$ takes in an input $x\in\mathbb{R}^d$ and outputs $m$ binary group masks $g_1,\dots, g_m \in \{0, 1\}^d$, where $g_{ij} = 1$ if and only if the feature $j$ is included in group $g_i$.\footnote{We use binary groups to avoid leaking information resulting in unfaithful explanations.}
It uses a multi-headed self-attention module~\citep{vaswani2017attention} to assign scores to features, and threshold each attention distribution to include top $\tau=20\%$ features to each group.
\begin{gather*}
    \Gamma(x) = (g_1, \dots, g_m) = \mathrm{SoftSelfAttn}_{\tau=20\%}\left(h_e(x)\right)
    \label{eq:groupgen-attn}
\end{gather*}
where the encoder $h_e:\mathbb{R}^d\rightarrow \mathbb{R}^{d\times e}$, which we typically take up to the penultimate layer of the backbone model, embeds each feature $x_i$ into a vector with embedding dimension $e$.
The \textit{learnable} group generator dynamically creates feature groups for each input, enabling better correlation compared to fixed groups (e.g., patches). 
Moreover, it has no specific architectural constraints on the backbone such as attention mechanisms.

\paragraph{Backbone Predictor} $h:\mathbb{R}^d\rightarrow \mathbb{R}$ then makes a prediction with the input $x$ masked by each binary group mask $g_i$:
\begin{gather*}
     y_i = h(g_i\odot x), \quad i = 1, \dots, m
\end{gather*}
where $y_i\in\mathbb{R}$ is the output logit and $\odot$ is Hadamard product.
The backbone predictor can be \textit{arbitrary} high-performing pre-trained model.

\paragraph{Group Selector} $\theta : \{0,1\}^{m\times d} \times \mathbb{R}^d \rightarrow [0,1]^m$ then takes in the encoding of each group and uses a sparse cross-attention module to assign each group a score.
\begin{gather*}
    \theta(\Gamma(x), x) = (c_1, \dots, c_m) = \mathrm{SparseCrossAttn} \left( C_h, z \right)
    \label{eq:groupsel}
\end{gather*}
where the query $C_h\in\mathbb{R}^k$ is initialized using the target class's weights with $k$ hidden dimensions and the key $z=(h_h(g_1\odot x),\dots , h_h (g_m\odot x)) \in \mathbb{R}^{m\times k}$ are last hidden states of all groups.
As using a sparse number of groups avoids overloading human users, we replace the softmax in the cross attention with a sparse variant, the sparsemax operator~\citep{Martins2016FromST}.
\textit{Dynamically} assigning scores allows the model to choose the most helpful groups for prediction, while the \textit{sparse} number of groups ensures easy human interpretability.

\paragraph{The final prediction} is made by aggregating predictions from each group $g_i$ with its assigned score $c_i$.
\begin{gather*}
    f(x) = y = c_1 y_1 + \dots + c_m y_m
\end{gather*}

To address the gradient flow issue caused by binarized groups, we incorporate a scaling factor based on the attention score in the final loss, as detailed in Appendix~\ref{app:scale}. Additional details on self-attention, cross-attention, and embedding models are provided in Appendix~\ref{app:method}.

\begin{table*}[t]
\small
    \centering
        \scalebox{0.86}{ 
\begin{threeparttable}
\begin{tabular}{c|c|c|cc|cc|cc}
    \toprule
    \multirow{2}{*}{Category} & \multirow{2}{*}{Method} & \multirow{2}{*}{\makecell{Model-\\Agnostic}} & \multicolumn{2}{c|}{\textbf{ImageNet-S} - ViT} & \multicolumn{2}{c|}{\textbf{CosmoGrid} - CNN} & \multicolumn{2}{c}{\textbf{MultiRC} - BERT} \\
    & & & Err.$\downarrow$ & IOU$\uparrow$ & MSE.$\downarrow$ & Pur.$\uparrow$ & Err.$\downarrow$ & IOU$\uparrow$ \\
    \midrule
    - & Backbone & - & 0.097 $\pm$ 0.011 & - & 0.009 $\pm$ 0.001 & - & 0.318 $\pm$ 0.021 & - \\
    \midrule
    \multirow{11}{*}{\makecell{Post-hoc-\\Converted}} & LIME-F & Yes & 0.327 $\pm$ 0.014 & 0.360 $\pm$ 0.012 & 0.049 $\pm$ 0.003 & 0.375 $\pm$ 0.018 & 0.475 $\pm$ 0.031 & \textbf{0.177} $\pm$ 0.012 \\
 & SHAP-F & Yes & \underline{0.306} $\pm$ 0.027 & 0.391 $\pm$ 0.011 & \underline{0.028} $\pm$ 0.002 & 0.397 $\pm$ 0.016 & 0.455 $\pm$ 0.032 & 0.135 $\pm$ 0.020 \\
 & IG-F & Yes & 0.581 $\pm$ 0.013 & 0.324 $\pm$ 0.003 & 0.042 $\pm$ 0.003 & 0.391 $\pm$ 0.011 & 0.485 $\pm$ 0.027 & 0.119 $\pm$ 0.008 \\
 & GC-F & Yes & 0.455 $\pm$ 0.016 & 0.398 $\pm$ 0.015 & 0.036 $\pm$ 0.002 & 0.438 $\pm$ 0.019 & 0.485 $\pm$ 0.015 & 0.099 $\pm$ 0.001 \\
 & FG-F & Yes & 0.448 $\pm$ 0.024 & \underline{0.511} $\pm$ 0.018 & 0.036 $\pm$ 0.002 & \underline{0.529} $\pm$ 0.016 & 0.396 $\pm$ 0.011 & 0.107 $\pm$ 0.005 \\
 & RISE-F & Yes & 0.732 $\pm$ 0.009 & 0.131 $\pm$ 0.009 & 0.036 $\pm$ 0.003 & 0.342 $\pm$ 0.006 & \textbf{0.366} $\pm$ 0.025 & 0.150 $\pm$ 0.018 \\
 & Archi-F & Yes & 0.526 $\pm$ 0.016 & 0.290 $\pm$ 0.010 & 0.069 $\pm$ 0.002 & 0.487 $\pm$ 0.004 & 0.515 $\pm$ 0.011 & 0.098 $\pm$ 0.002 \\
 & MFABA-F & Yes & 0.493 $\pm$ 0.016 & 0.383 $\pm$ 0.010 & 0.035 $\pm$ 0.003 & 0.498 $\pm$ 0.014 & 0.426 $\pm$ 0.023 & 0.113 $\pm$ 0.006 \\
 & AGI-F & Yes & 0.407 $\pm$ 0.011 & 0.439 $\pm$ 0.012 & 0.040 $\pm$ 0.002 & 0.522 $\pm$ 0.010 & 0.446 $\pm$ 0.019 & 0.147 $\pm$ 0.012 \\
 & AMPE-F & Yes & 0.484 $\pm$ 0.016 & 0.417 $\pm$ 0.012 & 0.037 $\pm$ 0.002 & 0.366 $\pm$ 0.037 & 0.475 $\pm$ 0.028 & 0.116 $\pm$ 0.011 \\
 & BCos-F\footnote{} & No & 0.954 $\pm$ 0.006 & 0.234 $\pm$ 0.003 & - & - & - & - \\
\midrule
\multirow{4}{*}{\makecell{Self-Attributing}} & XDNN\footnotemark[3] & No & 0.871 $\pm$ 0.007 & 0.332 $\pm$ 0.004 & - & - & - & - \\
 & BagNet\footnotemark[3] & No & 0.501 $\pm$ 0.011 & 0.314 $\pm$ 0.016 & - & - & - & - \\
 & FRESH\footnote{} & No & 0.537 $\pm$ 0.020 & 0.464 $\pm$ 0.015 & - & - & 0.386 $\pm$ 0.039 & 0.176 $\pm$ 0.016 \\
 & \textbf{SOP (ours)} & Yes & \textbf{0.267} $\pm$ 0.017 & \textbf{0.630} $\pm$ 0.006 & \textbf{0.025} $\pm$ 0.002 & \textbf{0.647} $\pm$ 0.011 & \textbf{0.366} $\pm$ 0.021 & \underline{0.176} $\pm$ 0.008 \\
\bottomrule
\end{tabular}
\end{threeparttable}
        }
        \caption{(Main Results: Error vs. Purity) This table presents error rate/MSE and IOU/purity metrics results comparing self-explaining models on ImageNet, CosmoGrid and MultiRC. We find that \wrapper{} achieves state-of-the-art performance comparing with all 14 baselines.
        The best result for each metric is bolded, and the second-best is underlined. For non-model-agnostic baselines, we only include for ImageNet-S where specialized pretrained models readily exist.
        Details of the metrics are explained in Appendix~\ref{app:experiments}.
        }
        \label{tab:main_table}
\end{table*}


In summary, the learnable group generator dynamically creates correlated groups needed for high-performing \sem{}s, the model-agnostic framework supports arbitrary backbone predictors, and the sparse group selector assigns contributions for a small number of groups. Together, these components are essential for \wrapper{} to be a high-performing \sem{}, as informed by theory in Section~\ref{sec:theory}.

\section{Empirical Evaluations of Sum-of-Parts}
\label{sec:experiments}

We conduct experiments using our framework on image and text tasks to see if our theory-informed framework actually uses the learned groups for \textbf{(RQ1)} high \textit{performance}.
Next, we quantitatively measure the interpretability of \wrapper{} with \textbf{(RQ2)} the performance at different inference-time \textit{sparsity} levels, \textbf{(RQ3)} various \textit{faithfulness} metrics, and \textbf{(RQ4)} whether the group masks are \textit{leaking} predictive signals. 

Then, we measure the interpretability of \wrapper{} semantically using \textbf{(RQ5)} \textit{semantic coherence} of the groups.
Finally, we validate \wrapper{}'s utility on \textbf{(RQ6)} \textit{model debugging} for unwanted behaviors in correct and incorrect predictions and \textbf{(RQ7)} \textit{scientific discovery} in cosmology.

\paragraph{Experiment Setups.}
We evaluate on two vision and one language datasets:
ImageNet-S~\citep{imagenet-s} image classification using Vision Transformer~\citep{dosovitskiy2021an} backbone, CosmoGrid~\citep{cosmogrid1} cosmology image regression using CNN~\citep{matilla2020weaklensing}, and MultiRC~\citep{MultiRC2018} text classification using BERT~\citep{devlin-etal-2019-bert}.
We use patches as image features and tokens as text features.
For baselines, we compare with other \sem{}s including XDNN~\citep{xdnn}, BagNet~\citep{brendel2018bagnets}, and FRESH~\citep{Jain2020LearningTF}.
Additionally, we convert post-hoc feature attribution methods to self-attributing models by using top 20\% scored features as single groups and denote with ``-F'' (e.g. ``LIME-F''),
including LIME~\citep{Ribeiro2016WhySI}, SHAP~\citep{shap}, IG~\citep{intgrad}, GC~\citep{gradcam}, FG~\citep{srinivas2019fullgrad}, RISE~\citep{Petsiuk2018RISERI}, Archipelago~\citep{tsang2020how}, MFABA~\citep{zhu2023mfaba}, AGI~\citep{agi}, AMPE~\citep{zhu2023attexplore} and BCos~\citep{bcos}.
We use a maximum of 20 groups for \wrapper{} and the same 20 forward passes for all controllable baselines (Table~\ref{tab:computation_cost}).
Details for datasets, models and baselines are in Appendix~\ref{app:experiments}.

\begin{table*}[t]
\centering
\small
\begin{tabular}{ll}
\toprule
\textbf{Computation Cost} & \textbf{Methods} \\
\midrule
1× forward pass & IG-F, GC-F, FG-F, BCos-F, XDNN, BagNet, FRESH \\
20× forward passes & LIME-F, SHAP-F, RISE-F, MFABA-F, AGI-F, AMPE-F, SOP \\
$\mathcal{O}(d^2)$ forward passes & Archipelago-F (due to pairwise interaction testing) \\
\bottomrule
\end{tabular}
\caption{Computation cost of different attribution methods in terms of number of forward passes. \wrapper{} uses 20x forward passes, and we use the same number of forward passes for perturbation-based baselines that we compare with.}
\label{tab:computation_cost}
\end{table*}

\subsection{Performance}
\label{sec:performance}

\paragraph{(RQ1) How Well Does \wrapper{} Perform?}
As the theory suggests, \sem{}s can only perform well with groups of features that align with underlying correlations. 
We evaluate whether the learned groups allow \wrapper{} to achieve lower errors comparing to previous \sem{}s which rely on fixed or limited feature groups.
Table~\ref{tab:main_table} shows that \wrapper{} achieves the lowest errors and MSE for all tasks.
SHAP-F is the second best on vision tasks but lags on MultiRC. No other \sem{} consistently performs well on all tasks, demonstrating that \wrapper{}'s learnable groups do enable state-of-the-art performance across diverse settings.
\footnotetext[3]{Requires specialized architectures and thus only included for ImageNet-S where pre-trained models exist.}

\subsection{Quantitative Measures for Interpretability}
\label{sec:sparsity}
\label{sec:faithfulness}
\label{sec:leakage}
\paragraph{(RQ2) Can \wrapper{} Perform Well at Different Sparsity Levels?}
Sparser explanations are easier to understand for humans~\citep{LOMBROZO2007232, Poursabzi2021manipulating}, but the best sparsity level is often unknown at training time. \wrapper{}'s group generator learns to generate groups at a specific sparsity, and we test whether it performs well across other sparsity levels without retraining.
Figure~\ref{fig:sparsity-imagenet} shows that as sparsity increases ($\geq 80\%$, keeping $\leq 20\%$ features per group), \wrapper{}'s error grows more slowly than other \sem{}s, maintaining much lower errors at extreme sparsity. Similar trends are observed in CosmoGrid and MultiRC (Appendix~\ref{app:sparsity}). For MultiRC, FRESH slightly outperforms \wrapper{} for untrained sparsity levels, as it is optimized for language tasks. 
Overall, \wrapper{} trained on one sparsity also performs well on other sparsity levels at inference time.



\begin{figure}[t]
    \centering
    \includegraphics[width=\linewidth]{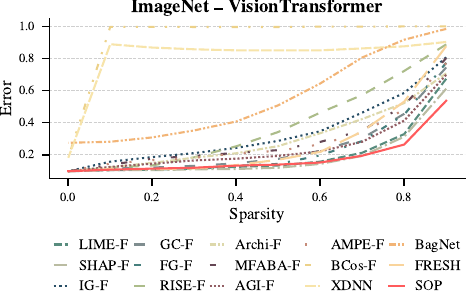}
    \caption{(ImageNet Sparsity vs. Error $\downarrow$) We report how error increases when sparsity increases (fewer input features are included in each group), where SOP's slowest increase is the most desired.}
    \label{fig:sparsity-imagenet}
\end{figure}

Beyond group size, we examine how the total number of groups affects performance.
\footnotetext[4]{Requires Transformer backbone and not applicable to CNN.}
Ablation studies on ImageNet-S using 1, 2, 5, 10, 20 groups show that performance improves with more groups but saturates at 5 groups. Using only 2 groups increases the error rate by 6 percentage points compared to 5 groups, suggesting \wrapper{} can achieve approximately 4x greater computational efficiency while maintaining comparable performance.



\paragraph{(RQ3) How Faithful Is \wrapper{} with Respect to Classic Metrics?}
While self-attributing neural networks ensure faithfulness by construction through linear aggregation, we check how well \wrapper{} performs on classic faithfulness metrics: \textit{fidelity}~\citep{yu2017towards,chen2019scalable}, \textit{insertion}, and \textit{deletion}~\citep{Petsiuk2018RISERI,samek2017evaluating}.
Fidelity measures how well summed explanations match the model's prediction using KL-divergence, while insertion and deletion evaluate the impact of high-scoring features via perturbations and area-under-curve (AUC) computation.
Definitions and results are in Appendix~\ref{app:fidelity}, \ref{app:ins_del}.

All \sem{}s, including \wrapper{}, achieve a perfect fidelity score of 0, while no post-hoc method does (\cref{tab:combined_fidelity}), indicating only \sem{}s faithfully match predictions. \wrapper{} outperforms all baselines on insertion across all tasks, and on deletion for ImageNet, while methods like Archipelago, LIME, and FRESH perform better on deletion in some cases (\cref{tab:imagenet_faithfulness,tab:combined_ins_del_single}). 
In fact, as deletion score measures how fast the predicted probability drops when the most scored features are deleted, it biases towards models that use a small number of features, regardless of how faithful the model is.
\wrapper{} which relies on signals from multiple groups then naturally performs better on insertion than deletion.
Ablations using smaller step sizes and occlusion values (\cref{tab:imagenet_ins_del_ablations}) show consistent and thus robust results.
In summary, \wrapper{} performs strong on most classic faithfulness metrics in addition to its built-in faithfulness.

\paragraph{(RQ4) Do \wrapper{}'s Attributions Contain Predictive Signals?}
The \sem{}'s claim of linear interpretability relies on the assumption that feature groups (which form the interpretable atoms) do not inherently encode label information. If these groups already contain predictive signals, the backbone model—not the coefficients—would drive predictions, rendering the coefficients uninformative. To validate this, we train probing models to predict labels solely from group masks: high accuracy indicates that label information is pre-encoded in the groups.
Figure~\ref{fig:info-leak} shows that group masks from \wrapper{} achieves random accuracy (0.10\%) on ImageNet-S using a CNN probing model.
In comparison, probing models for groups from other \sem{}s achieve significantly higher accuracies, such as FG-F (13.40\%) and AGI-F (10.66\%).
Thus \wrapper{}'s generated groups do not leak information about the label compared to other \sem{}s, and the model's interpretablity is not weakened.

\begin{table}[t]
    \centering
    \small
        \scalebox{0.87}{  
       \begin{tabular}{c|c|ccc}
\toprule
\multirow{2}{*}{Category} & \multirow{2}{*}{Method} & \multicolumn{2}{c}{\textbf{ImageNet}} \\
& & Ins.$\uparrow$ & Del.$\downarrow$ \\
\midrule
\multirow{11}{*}{Post-Hoc-Converted} & LIME-F & 0.859 $\pm$ 0.005 & 0.476 $\pm$ 0.004 \\
& SHAP-F & \textit{0.878} $\pm$ 0.007 & 0.421 $\pm$ 0.008 \\
& IG-F & 0.661 $\pm$ 0.006 & 0.664 $\pm$ 0.008 \\
& GC-F & 0.817 $\pm$ 0.007 & 0.416 $\pm$ 0.007 \\
& FG-F & 0.805 $\pm$ 0.006 & 0.430 $\pm$ 0.004 \\
& RISE-F & 0.635 $\pm$ 0.007 & 0.708 $\pm$ 0.003 \\
& Archi.-F & 0.719 $\pm$ 0.004 & 0.548 $\pm$ 0.004 \\
& MFABA-F & 0.720 $\pm$ 0.005 & 0.547 $\pm$ 0.010 \\
& AGI-F & 0.781 $\pm$ 0.007 & 0.509 $\pm$ 0.007 \\
& AMPE-F & 0.723 $\pm$ 0.006 & 0.581 $\pm$ 0.005 \\
& BCos-F & 0.308 $\pm$ 0.005 & 0.339 $\pm$ 0.009 \\
\midrule
\multirow{4}{*}{Self-Attributing} & XDNN & 0.251 $\pm$ 0.007 & \textit{0.210} $\pm$ 0.003 \\
 & BagNet & 0.626 $\pm$ 0.014 & 0.595 $\pm$ 0.009 \\
& FRESH & 0.759 $\pm$ 0.003 & 0.417 $\pm$ 0.004 \\
& SOP & \textbf{0.930} $\pm$ 0.003 & \textbf{0.109} $\pm$ 0.000 \\
\bottomrule
\end{tabular}
        }  
        \caption{(ImageNet Insertion/Deletion) We evaluate insertion/deletion metrics on ImageNet, and find that \wrapper{} achieves best insertion and deletion scores. This table reports percent insertion and deletion scores for ImageNet with interval of 10\%. The best result for each metric is bolded, and the second-best is italicized.}
        \label{tab:imagenet_faithfulness}
\end{table}

\begin{figure}[t]
    \centering
\includegraphics[width=\linewidth]{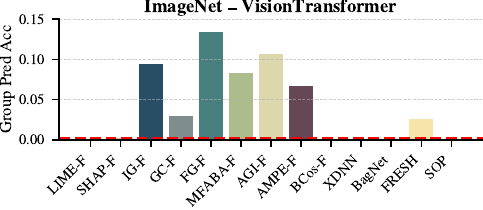}
    \caption{(ImageNet Group Probing Accuracy) 
    The powerful group generator in \wrapper{} is not doing all the work and not compromising \wrapper{}'s interpretability. A CNN model trained on group masks from \wrapper{} is unable to obtain accuracies more than random (0.1\% accuracy), while MFABA-F, AMPE-F, IG-F etc. do.
    RISE and Archipelago is omitted for the significant computational cost.
    Results for linear and ViT probing models are in Appendix~\ref{app:info_leak}.
    }
    \label{fig:info-leak}
\end{figure}
\begin{figure*}[t]
    \centering
    \begin{subfigure}[t]{0.112\textwidth}
        \includegraphics[width=\linewidth]{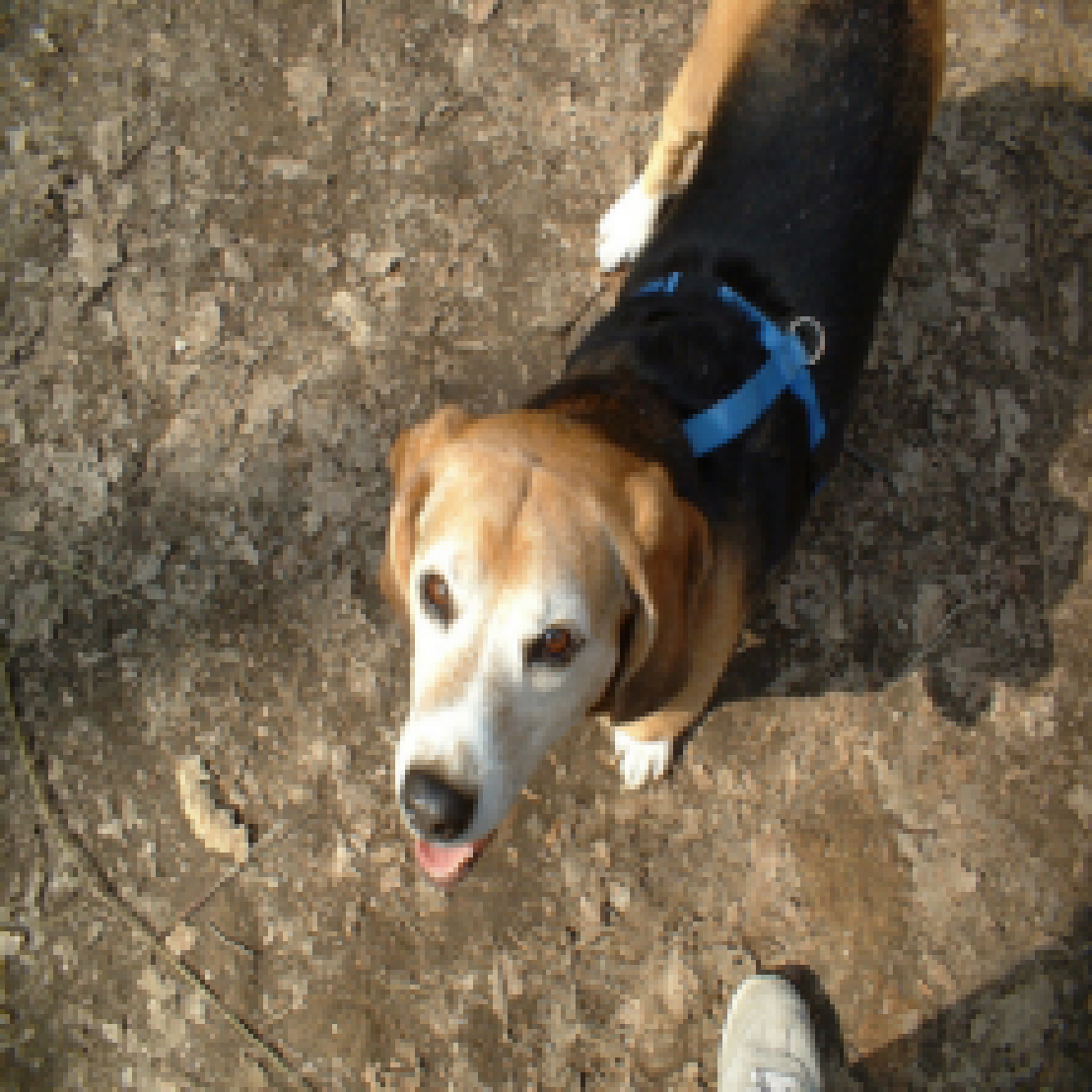}
        \captionsetup{justification=centering}
        \caption{\scriptsize Image: Beagle}
    \end{subfigure}
    \hfill
    \begin{subfigure}[t]{0.112\textwidth}
        \includegraphics[width=\linewidth]{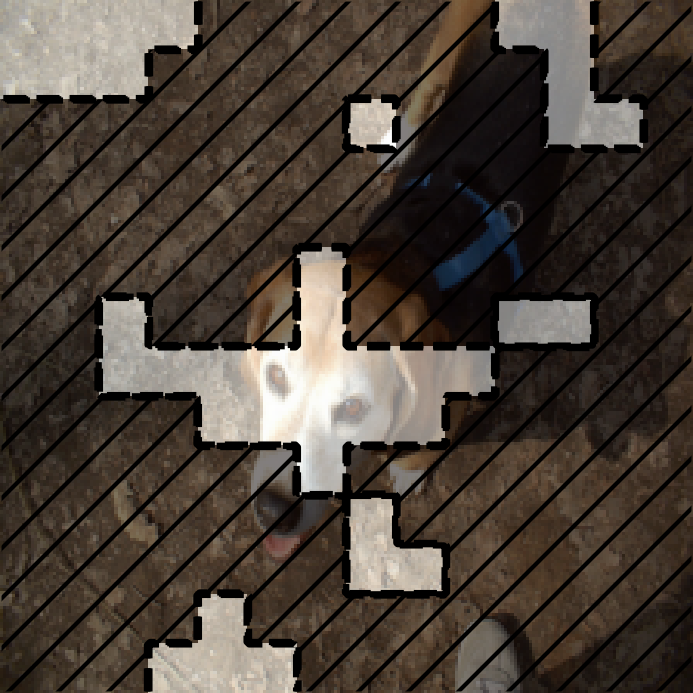}
        \captionsetup{justification=centering}
        \caption{\scriptsize LIME-F \\ \cite{Ribeiro2016WhySI}}
    \end{subfigure}
    \hfill
    \begin{subfigure}[t]{0.112\textwidth}
        \includegraphics[width=\linewidth]{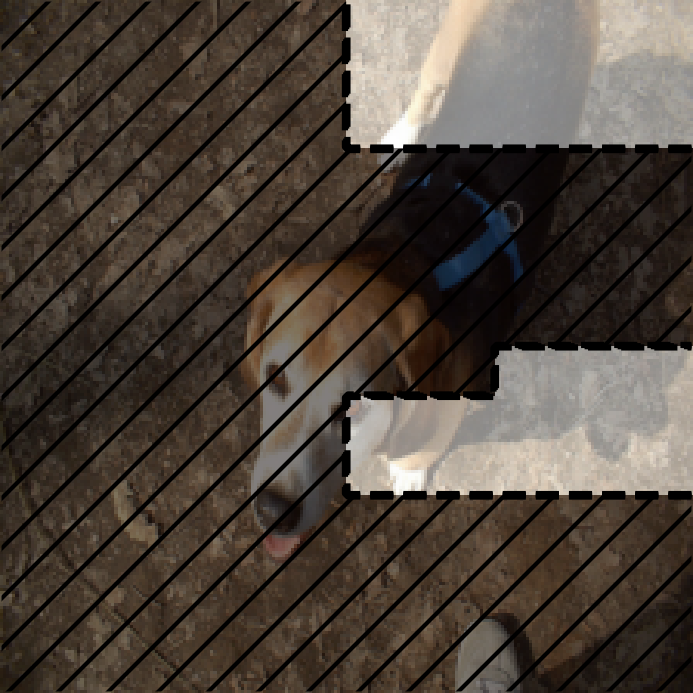}
        \captionsetup{justification=centering}
        \caption{\scriptsize SHAP-F \\ \cite{shap}}
    \end{subfigure}
    \hfill
    \begin{subfigure}[t]{0.112\textwidth}
        \includegraphics[width=\linewidth]{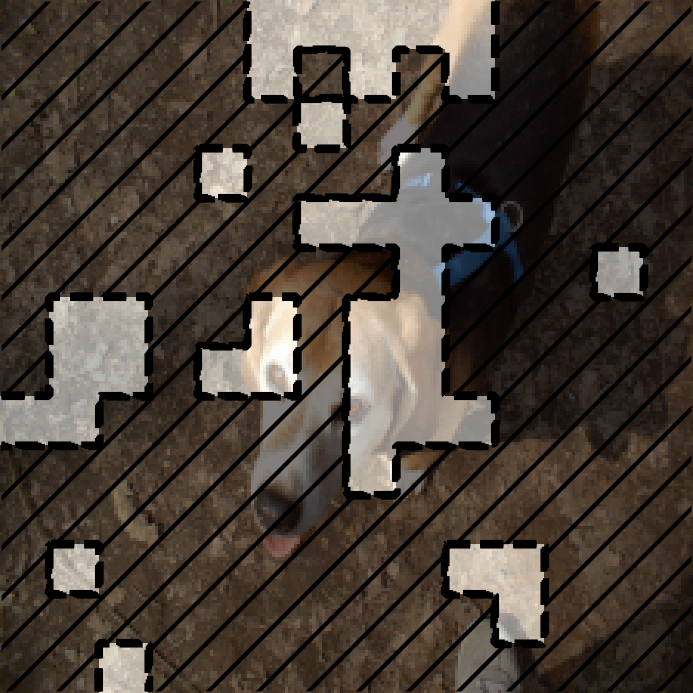}
        \captionsetup{justification=centering}
        \caption{\scriptsize IG-F \\ \cite{intgrad}}
    \end{subfigure}
    \hfill
    \begin{subfigure}[t]{0.112\textwidth}
        \includegraphics[width=\linewidth]{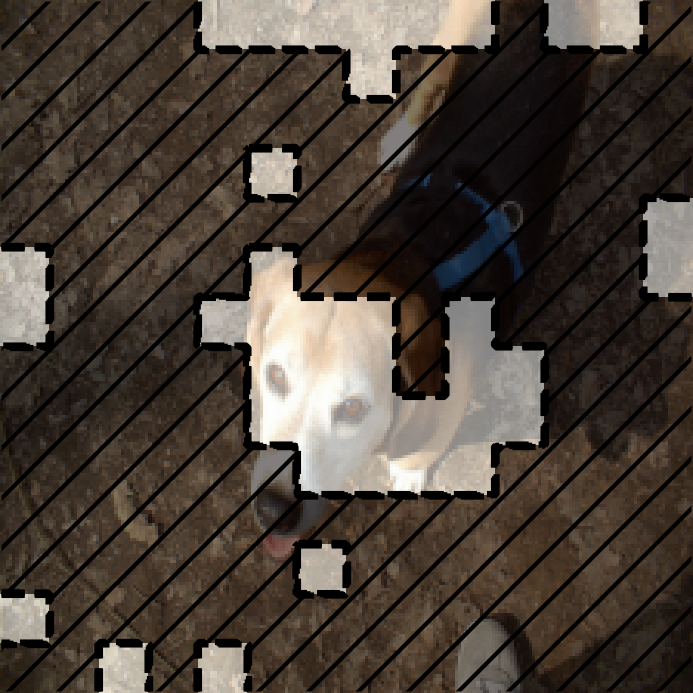}
        \captionsetup{justification=centering}
        \caption{\scriptsize GC-F \\ \cite{gradcam}}
    \end{subfigure}
    \hfill
    \begin{subfigure}[t]{0.112\textwidth}
        \includegraphics[width=\linewidth]{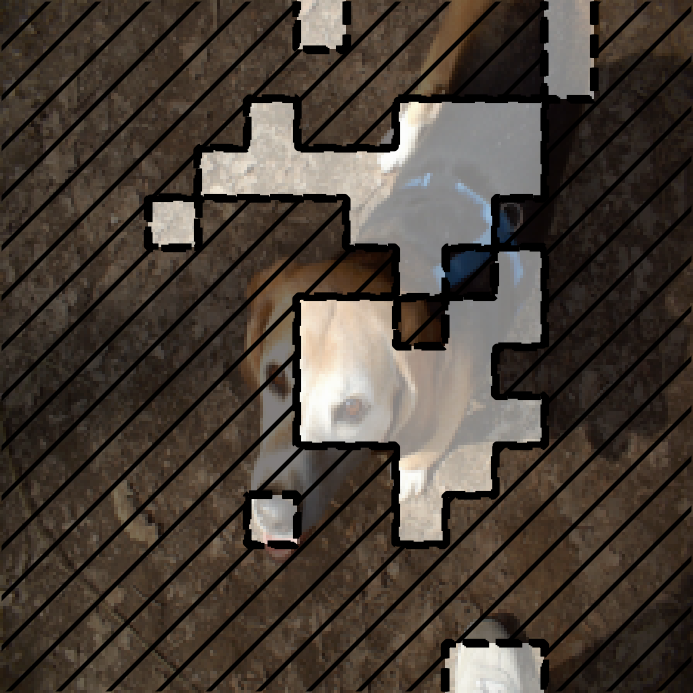}
    \captionsetup{justification=centering}
        \caption{\scriptsize FG-F \\ \cite{srinivas2019fullgrad}}
    \end{subfigure}
    \hfill
    \begin{subfigure}[t]{0.112\textwidth}
        \includegraphics[width=\linewidth]{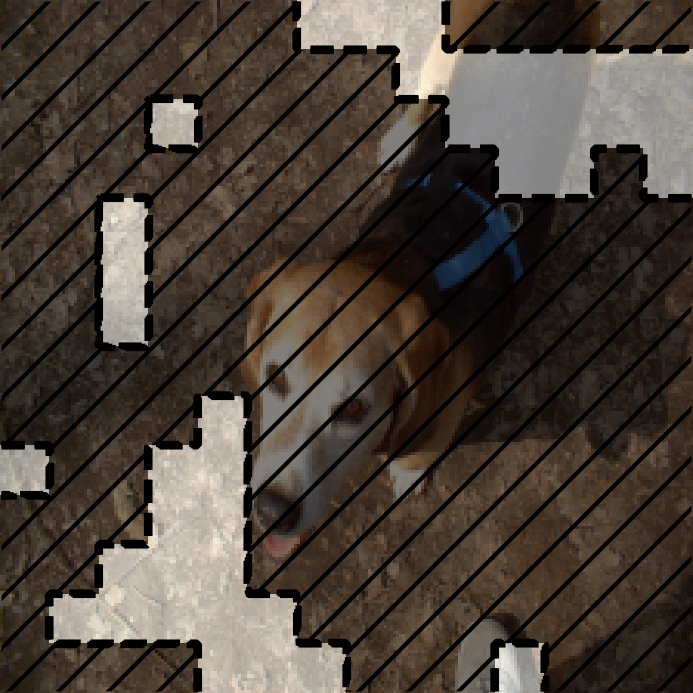}
        \captionsetup{justification=centering}
        \caption{\scriptsize RISE-F \\ \cite{Petsiuk2018RISERI}}
    \end{subfigure}
    \hfill
    \begin{subfigure}[t]{0.112\textwidth}
        \includegraphics[width=\linewidth]{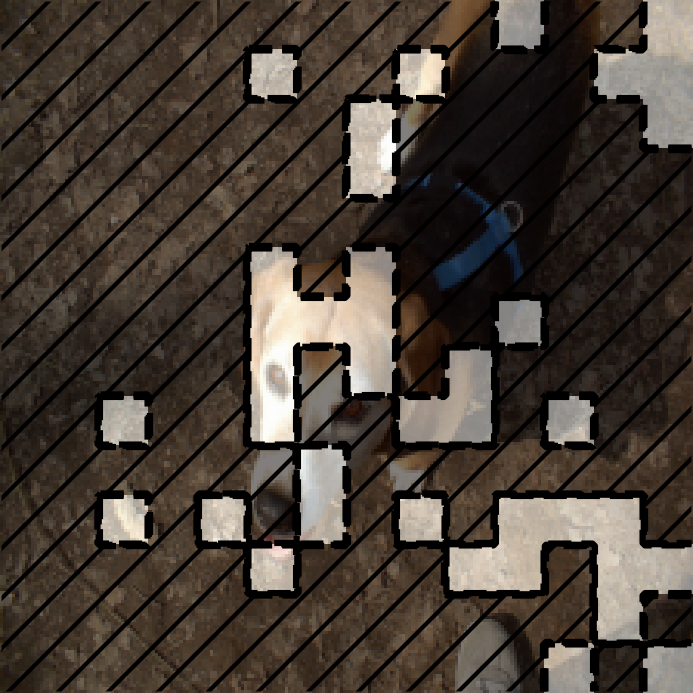}
        \captionsetup{justification=centering}
        \caption{\scriptsize Archi.-F \\ \cite{tsang2020how}}
    \end{subfigure}
    \hfill
    \begin{subfigure}[t]{0.112\textwidth}
        \includegraphics[width=\linewidth]{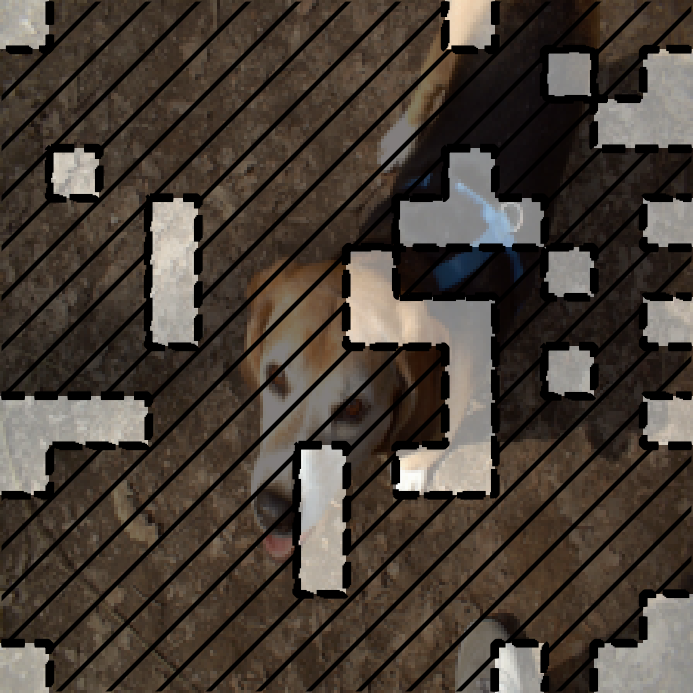}
        \captionsetup{justification=centering}
        \caption{\scriptsize MFABA-F \\ \cite{zhu2023mfaba}}
    \end{subfigure}
    \hfill
    \begin{subfigure}[t]{0.112\textwidth}
        \includegraphics[width=\linewidth]{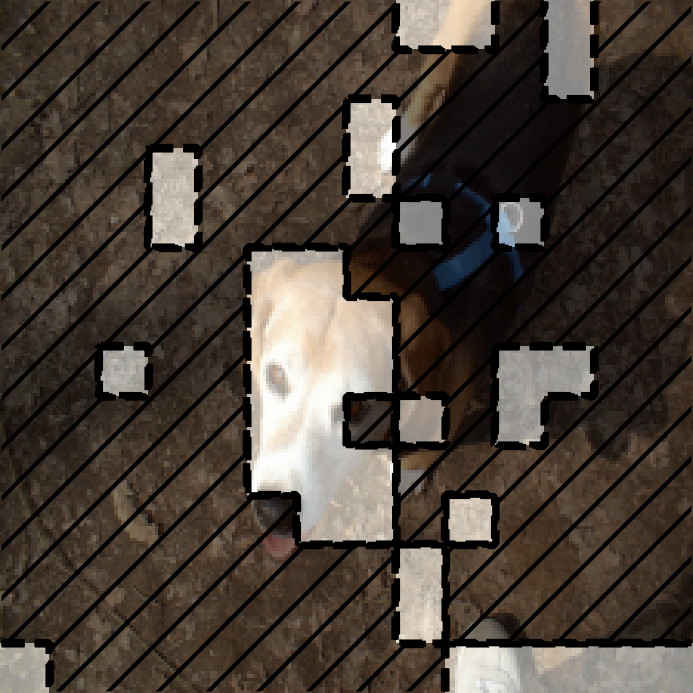}
        \captionsetup{justification=centering}
        \caption{\scriptsize AGI-F \\ \cite{agi}}
    \end{subfigure}
    \hfill
    \begin{subfigure}[t]{0.112\textwidth}
        \includegraphics[width=\linewidth]{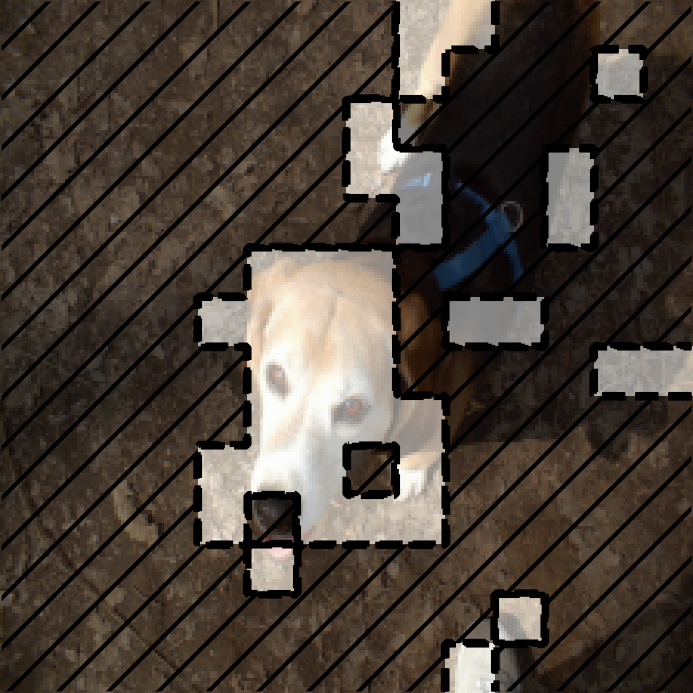}
        \captionsetup{justification=centering}
        \caption{\scriptsize AMPE-F \\ \cite{zhu2023attexplore}}
    \end{subfigure}
    \hfill
    \begin{subfigure}[t]{0.112\textwidth}
        \includegraphics[width=\linewidth]{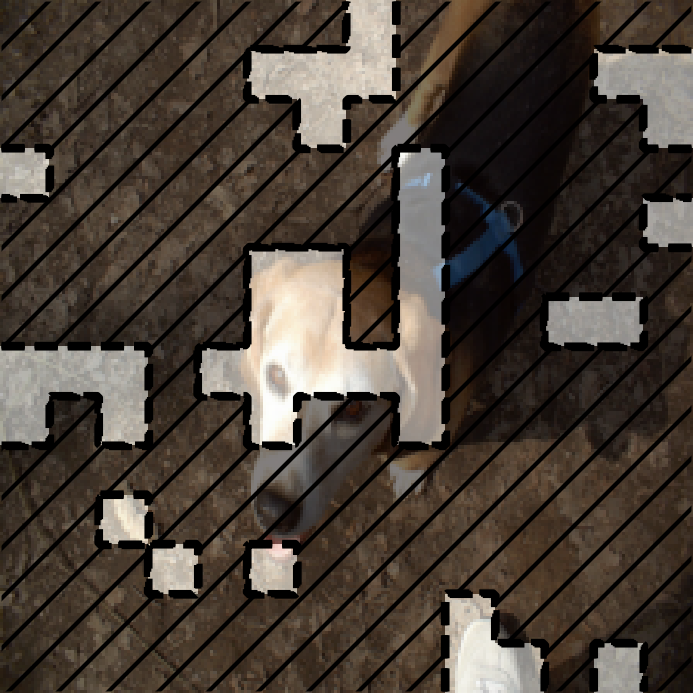}
        \captionsetup{justification=centering}
        \caption{\scriptsize BCos-F \\ \cite{bcos}}
    \end{subfigure}
    \hfill
    \begin{subfigure}[t]{0.112\textwidth}
        \includegraphics[width=\linewidth]{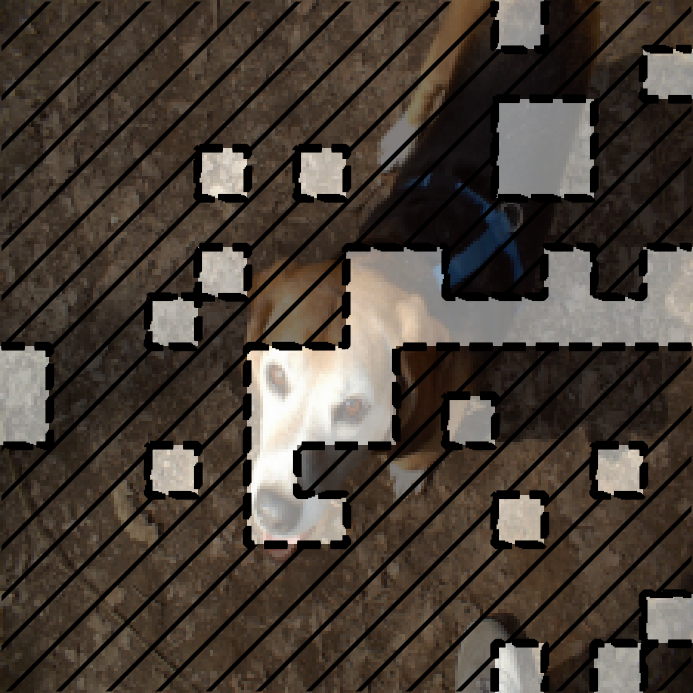}
        \captionsetup{justification=centering}
        \caption{\scriptsize XDNN \\ \cite{xdnn}}
    \end{subfigure}
    \hfill
    \begin{subfigure}[t]{0.112\textwidth}
        \includegraphics[width=\linewidth]{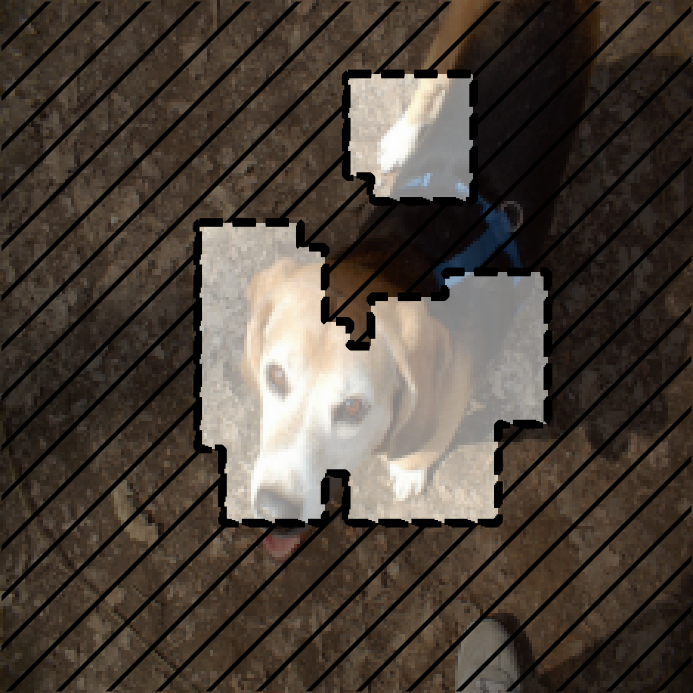}
        \captionsetup{justification=centering}
        \caption{\scriptsize BagNet \\ \cite{brendel2018bagnets}}
    \end{subfigure}
    \hfill
    \begin{subfigure}[t]{0.112\textwidth}
        \includegraphics[width=\linewidth]{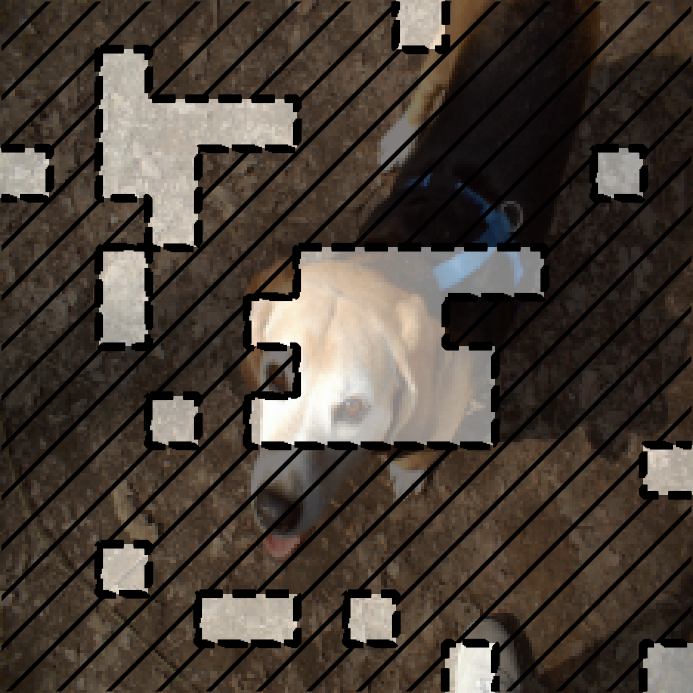}
        \captionsetup{justification=centering}
        \caption{\scriptsize FRESH \\ \cite{Jain2020LearningTF}}
    \end{subfigure}
    \hfill
    \begin{subfigure}[t]{0.112\textwidth}
        \includegraphics[width=\linewidth]{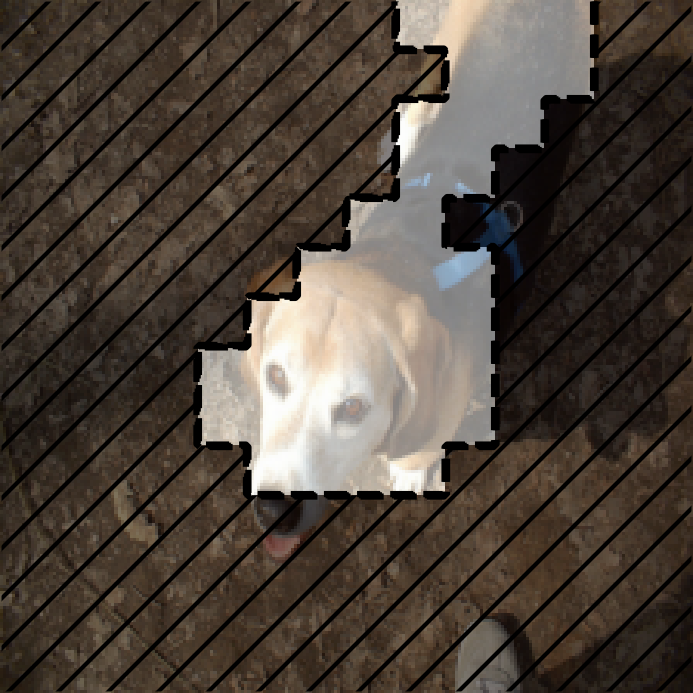}
        \captionsetup{justification=centering}
        \caption{\scriptsize \textbf{SOP (Ours)}}
    \end{subfigure}
    \foo{}
    \caption{We show example groups from different \sem{}s for a Beagle in ImageNet, and find that SOP learns to generate groups more semantically coherent than other \sem{}s. ``-F'' indicates self-attributing models converted from post-hoc methods. The highlights show the groups selected by each method for ImageNet, with unused patches hatched-out. Each group has 20\% features.}
     \label{fig:eg-first}
     \foo{}
\end{figure*}

\subsection{Semantic Measures for Interpretability}
\label{sec:semantic}
\label{sec:human}
\paragraph{(RQ5) How Semantically Coherent are \wrapper{} Groups?}
Explanations need to be semantically coherent such as relating to object segments or human-understandable concepts. 
We thus ask how semantically coherent are the group explanations generated by \wrapper{} compared to other models? To evaluate this, we compute \textit{intersection-over-union} (IOU) for ImageNet-S and MultiRC where ground-truth annotations exist, and \textit{threshold-based purity} for CosmoGrid using expert-informed metrics for cosmological structures~\citep{matilla2020weaklensing}.
Table~\ref{tab:main_table} shows that \wrapper{} has the best semantic coherency on two vision tasks and a close second on the language task. Figure~\ref{fig:eg-first} illustrates an example of an image displaying a beagle, where \wrapper{} learns more semantically coherent groups than other methods without direct supervision. The exact formulations for IOU and purity, threshold ablations for CosmoGrid, and additional examples can be found in Appendix~\ref{app:semantic}. Additional experiments for human evaluation can be found in Appendix~\ref{app:human}.



\subsection{Utility of \wrapper{} Explanations}
\label{sec:correct_vs_incorrect}
\label{sec:discovery}
\paragraph{(RQ6) Can We Use \wrapper{} Explanations to Debug Models?}
As \wrapper{} explanations directly compose the final prediction, we want to see if we can use the group explanations to identify undesirable model behaviors.
One such behavior is overly relying on background features for correct predictions, indicating potential spurious correlations.
We analyze \wrapper{}'s explanations to check which feature groups the model use to make correct and incorrect predictions.
We find that SOP uses more objects in correct (64.9\%) than incorrect (57.3\%) examples. Figure~\ref{fig:purity-obj-ratio-correct-incorrect-imagenet} 
shows similar behavior for other \sem{} baselines. We conjecture that using objects, instead of relying on spurious background correlations, helps the model make correct predictions. Thus, explanations from \wrapper{} and other \sem{}s can help illuminate the reasoning behind model behaviors.

\paragraph{(RQ7) Can \wrapper{} Explanations Assist Scientific Discovery?}
The ultimate goal of interpretability methods is for domain experts to use these tools and explanations in real settings.
To validate the usability of our approach, we collaborate with cosmologists and use \wrapper{} to discover new knowledge about the expansion of the universe and the growth of the cosmic structure.

\begin{figure}[t]
    \centering
    \includegraphics[width=\linewidth]{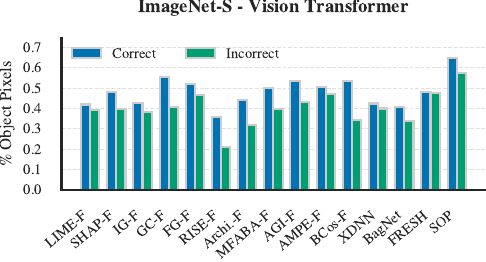}
    \caption{(ImageNet-S Percent of Groups that are Objects) We analyze the proportion of groups that are objects on ImageNet-S, and find that groups consistently consist more of objects when predicting correctly. This trend is consistently across methods.}
    \label{fig:purity-obj-ratio-correct-incorrect-imagenet}
\end{figure}

\textbf{Problem Formulation.}
Cosmologists hope to understand the relations of cosmological structures with two cosmological parameters related to the initial state of the universe: \Om{} and \seight{}.
The parameter \Om{} captures the average energy density of all matter in the universe, while \seight{} describes the fluctuation of matter distribution~\citep{Abbott_2022}.
However, these parameters are not directly measurable.
What we can obtain are weak lensing mass maps, which are spatial distribution of matter density in the universe calculated using precise measurements of the shapes of $\sim 100$ million galaxies~\citep{y3-shapecatalog}.
While the direct relation from weak lensing maps to \Om{} and \seight{} is unknown, cosmologists create simulated weak lensing maps from different \Om{} and \seight{} values and train CNNs~\citep{ribli2019weak,matilla2020weaklensing,Fluri_2022} to reversely predict \Om{} and \seight{} from the mass maps.
An open question in cosmology remains:

\begin{center}
\emph{What structures from weak lensing maps drive the inference of the cosmological parameters \Om{} and \seight{}?}
\end{center}

As validated in Section~\ref{sec:semantic} (RQ5), the groups from \wrapper{} correspond to more cosmological structures--voids and clusters--than other self-attributing models.
Voids are large regions under-dense relative to the mean density (pixel intensity $<0 \sigma$) and appear as dark regions in the mass maps, whereas clusters are areas of concentrated high density (pixel intensity $>3 \sigma$) and appear as bright dots, as shown in Figure~\ref{fig:void-vs-clusters}.

\textbf{Cosmological Findings.} We then use the groups automatically generated by \wrapper{} and find the following new findings related to voids/clusters and \Om{}/\seight{}:
\begin{enumerate}
    \item Figure~\ref{fig:om_vs_s8} shows that for both voids and clusters, when they are used in the prediction, they weigh higher when predicting \Om{} than \seight{}. 
    \item Figure~\ref{fig:void_vs_cluster_contribution} shows that a lot of voids contribute 100\% to the prediction, while most clusters contribute partially. This finding is consistent with previous work~\citep{matilla2020weaklensing} that found that voids are the most important feature in prediction for low noise maps, as when the model does use voids, it relies on them more.
\end{enumerate}
Therefore, we are able to obtain explanations meaningful to expert cosmologists from \wrapper{}, which is a promising step towards meaningful scientific discovery.
We include additional cosmology background in Appendix~\ref{app:cosmo}.

\section{Related Works}
\label{sec:related}

\textbf{Self-attributing neural networks} uses feature-based interpretable atoms and include per-feature models such as NAM~\citep{agarwal2021neural}, and group-based models such as BagNet~\citep{brendel2018bagnets} and FRESH~\citep{Jain2020LearningTF}.
While previous works did use per-feature \sem{}s mainly in tabular data domains~\citep{agarwal2021neural} and group-based \sem{}s in image and text domains~\citep{brendel2018bagnets,Jain2020LearningTF}, they did not formalize the problem of growing error in per-feature \sem{}s.
Also, previous models rely on specific architectures while our framework allows the use of arbitrary pre-trained models~\citep{gnam,natm}.
A concurrent work trains SANNs that use concise sufficient features for predictions~\citep{bassan2025explain}.
Other self-explaining neural networks include prototype-based~\citep{ma2024interpretable,wen2024gaprotonet} and concept-based~\citep{cbm,yang2023language,lai2024faithful,yang2024knobo} models, which explain with prototypical examples or concepts instead of input features, and are therefore not directly comparable to our self-attributing models.
Program-execution models~\citep{lyu2023faithful}
provide faithful reasoning but also do not attribute to features.

For \textbf{evaluating} \sem{}s, \citet{Nauta_2023} advocates prioritizing performance over faithfulness as the primary metric; we adopt this approach in our evaluation.
    Feature-based explanations are often evaluated on their stability~\citep{xue2023stability}, contiguity~\citep{kim2024evaluating}, 
    minimality~\citep{bassan2023towards}, and fidelity~\citep{yu2017towards,chen2019scalable}.
    Coherence~\citep{Nauta_2023} evaluates how well explanations align with domain knowledge ground truth, via metrics such as Intersection over Union \citep{Bau2017NetworkDQ,wang2020scout}, outside-inside relevanace ratio~\citep{nam2020relative}, 
    and pointing game accuracy~\citep{du2018towards,Huang_2020_CVPR}.
    We thus use IOU when there is ground truth~\citep{imagenet-s,deyoung-etal-2020-eraser} and an expert-informed threshold-based purity when the ground truth is not available~\citep{matilla2020weaklensing}.
    Human evaluations are used to validate utilities~\citep{Kim2022HIVE,Akula2020CoCoXGC,hase-bansal-2020-evaluating}. We are using a standard human distinction task from the HIVE protocol~\citep{Kim2022HIVE}.
    \citet{matilla2020weaklensing} attempted to use post-hoc explanations in scientific discovery for cosmology. Our work uses a self-attributing model to validate the same results.


\begin{figure}[t]
    \centering
    \begin{subfigure}[t]{0.23\linewidth}
        \includegraphics[width=\linewidth]{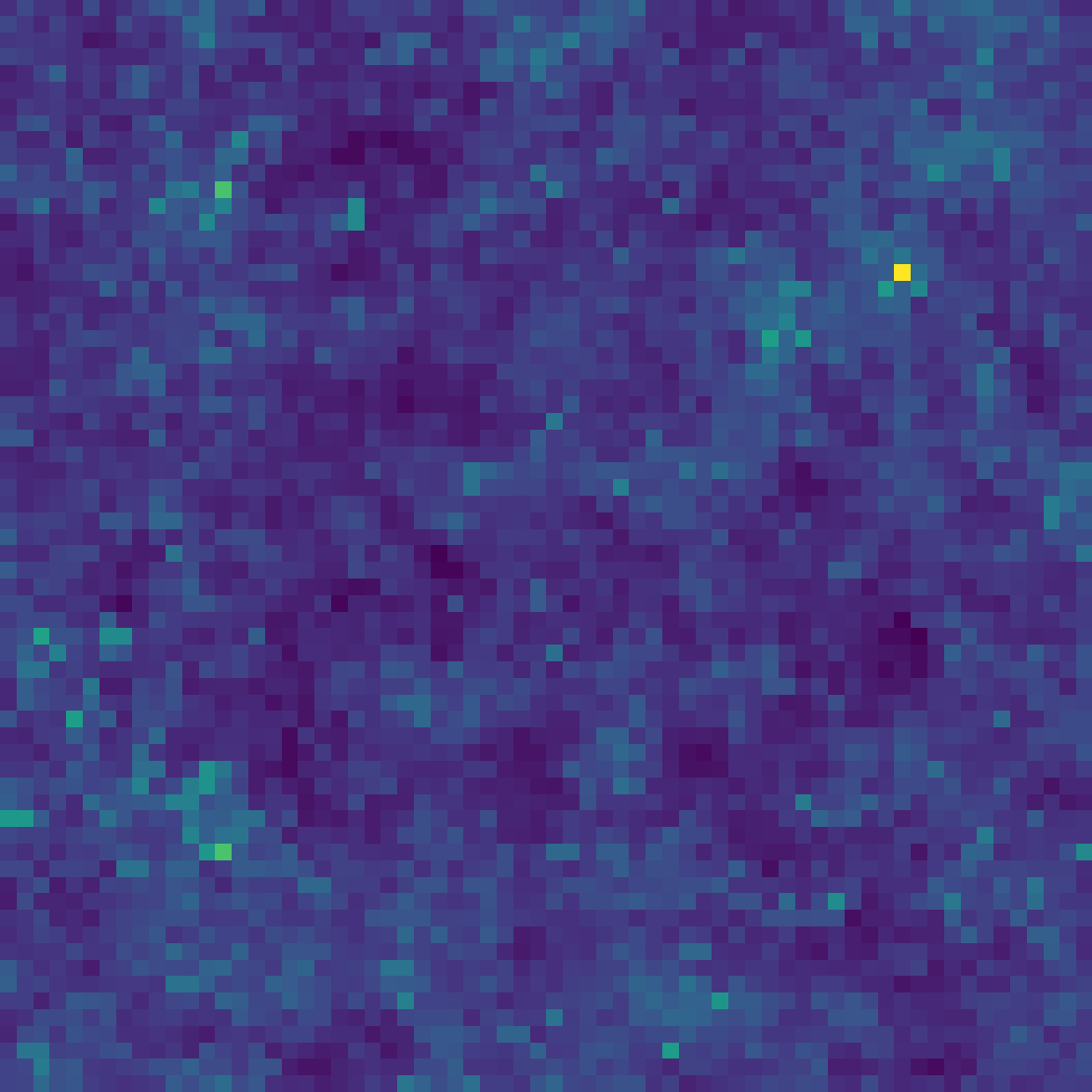}
        \captionsetup{justification=centering}
        \caption{Weak Lensing Map 1}
    \end{subfigure}
    \hfill
    \begin{subfigure}[t]{0.23\linewidth}
        \includegraphics[width=\linewidth]{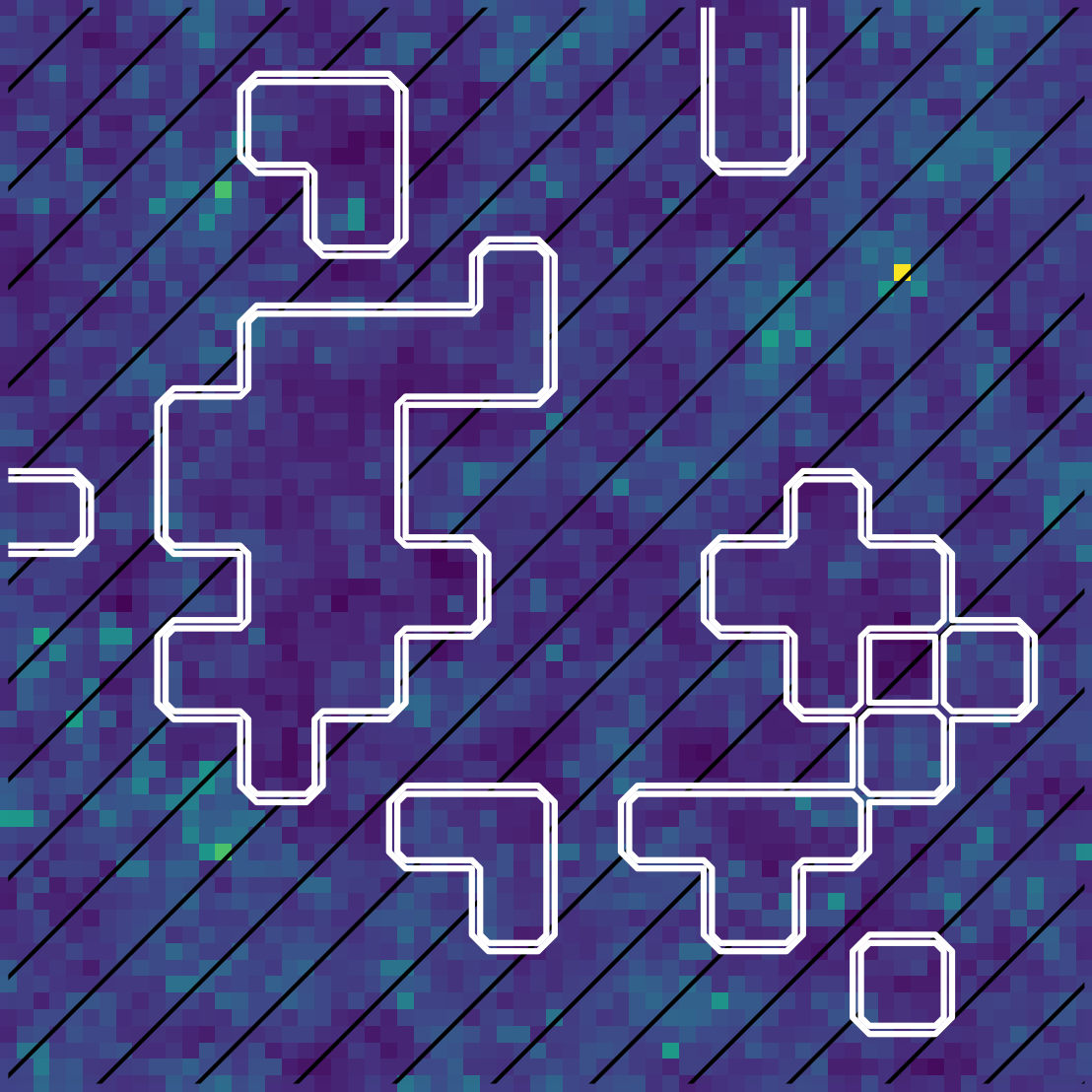}
        \captionsetup{justification=centering}
        \caption{Void ($< 0\sigma$: 76\%)}
    \end{subfigure}
    \hfill
    \begin{subfigure}[t]{0.23\linewidth}
        \includegraphics[width=\linewidth]{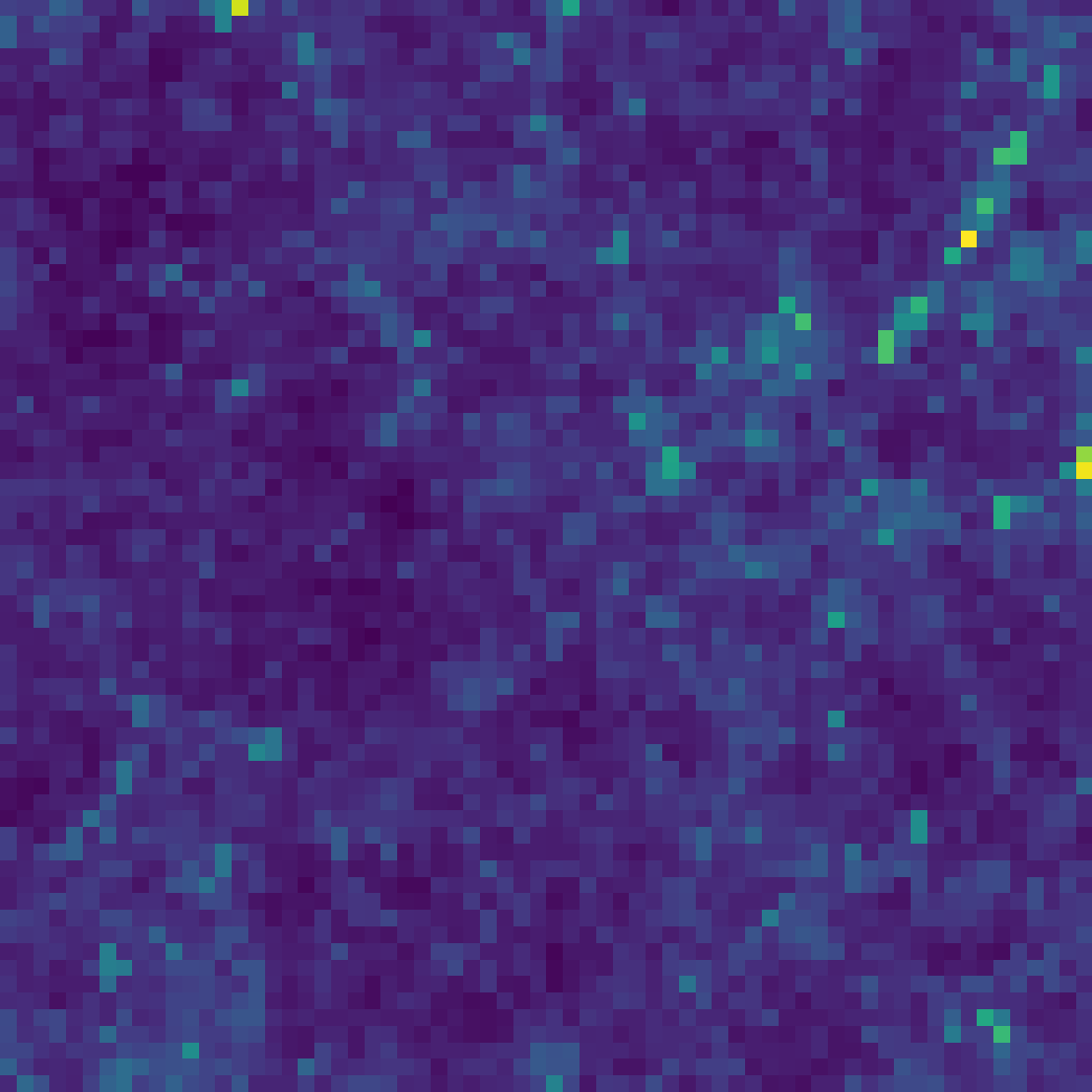}
        \captionsetup{justification=centering}
        \caption{Weak Lensing Map 2}
    \end{subfigure}
    \hfill
    \begin{subfigure}[t]{0.23\linewidth}
        \includegraphics[width=\linewidth]{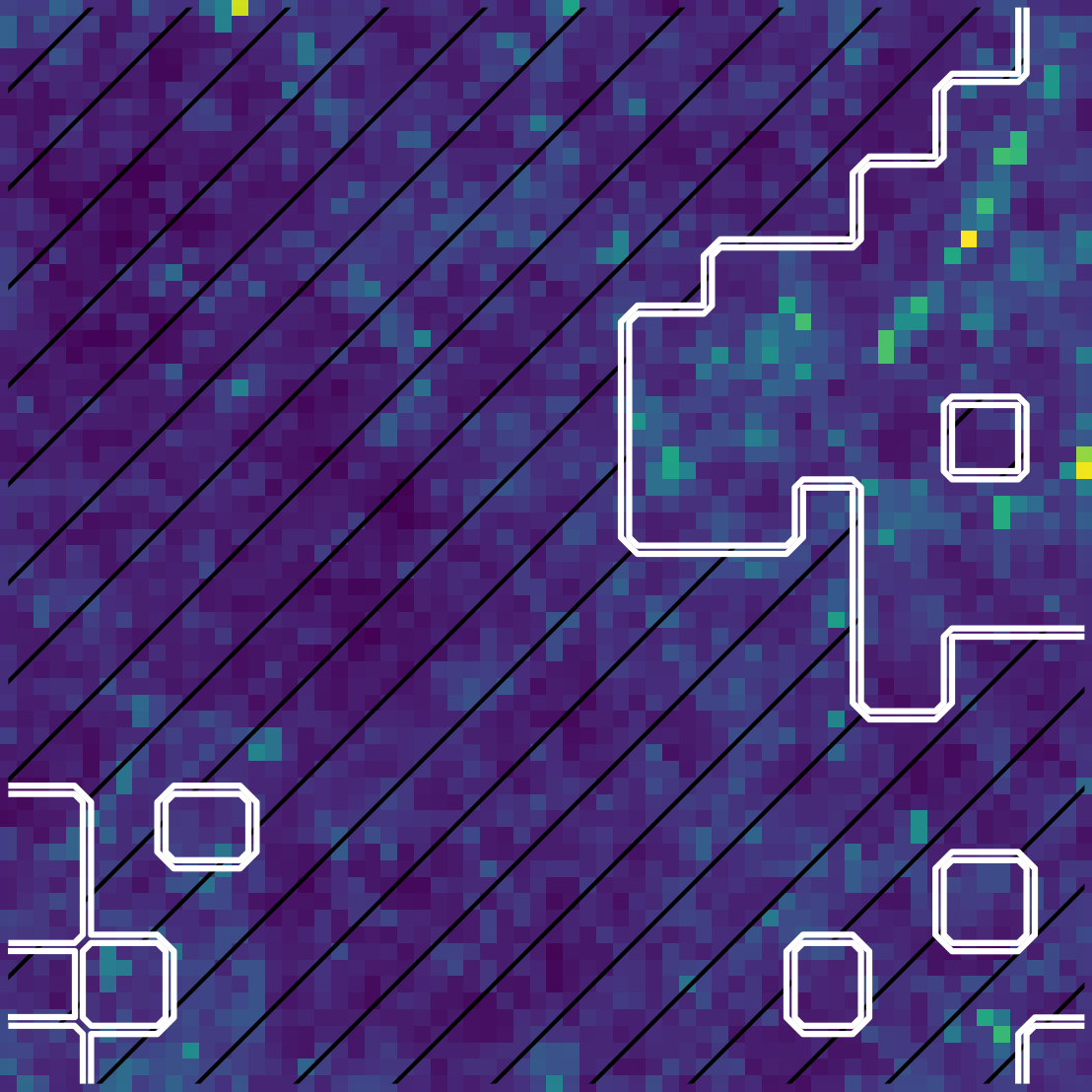}
        \captionsetup{justification=centering}
        \caption{Cluster ($> 3\sigma$: 6.4\%)}
    \end{subfigure}
         \caption{
         (a) and (b) show one weak lensing map with void (under-dense region) found by \wrapper{}.
         (c) and (d) show another map with cluster (areas of concentrated high density) found.
         }
         \label{fig:void-vs-clusters}
\end{figure}

\begin{figure}[t]
     \centering
         \subfloat[\Om{} vs. \seight{} Attributions]{\includegraphics[width=0.48\linewidth]{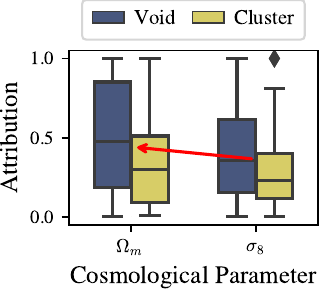}\label{fig:om_vs_s8}}
         \hfill
         \subfloat[Voids vs. Clusters Weights]{\includegraphics[width=0.48\linewidth]{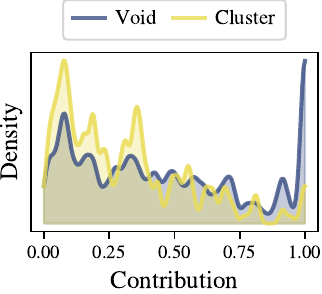}\label{fig:void_vs_cluster_contribution}}
         \caption{(a) \Om{} relies more on cosmological structures than \seight{}, as shown by box plots of group weights from \wrapper{} for predicting \Om{} and \seight{}. 
         Voids on average contribute 50.7\% to \Om{} and 41.5\% to \seight{}, and clusters contribute 34.0\% to \Om{} and 28.5\% to \seight{}. (b) Many voids contribute 100\% to the predictions, while clusters are weighted less, as shown by the density plot.}
         \label{fig:cosmo_results}
\end{figure}

\section{Conclusion}
\label{sec:conclusion}

We propose Sum-of-Parts (SOP), a new framework for converting an arbitrary pre-trained model to a self-attributing neural network.
Our framework is model-agnostic and uses end-to-end learned groups without supervision, overcoming theoretical performance limitations of per-feature or fixed-group \sem{}s.
\wrapper{} empirically achieves state-of-the-art performance among \sem{}s across multiple datasets, and is shown to be useful for model debugging and uncovering insights in cosmological discovery.
We hope this general framework allows people to build their own self-attributing models more easily and use the resulting explanations to extract meaningful insights from complex patterns.

\section*{Impact Statement}
\label{sec:impact}
This paper presents work aimed at providing interpretable explanations for machine learning models that are importantly, true to the model's reasoning process.
We show that the groups of features learned in our model-agnostic framework for group-based self-attributing models provide insights in scientific discovery such as cosmology.
The groups generated by our model can also be trustworthy references for decisions in other high-stakes domains such as medicine, law, and automation.
We hope our work can help improve the state of explainable and trustworthy machine learning models.

\section*{Acknowledgment}

This research was partially supported by a gift from AWS AI to Penn Engineering's ASSET Center for Trustworthy AI, by ASSET Center Seed Grant, ARPA-H program on Safe and Explainable AI under the award D24AC00253-00, by NSF award CCF 2442421, and by funding from the Defense Advanced Research Projects Agency's (DARPA) SciFy program (Agreement No. HR00112520300). The views expressed are those of the author and do not reflect the official policy or position of the Department of Defense or the U.S. Government.

We thank Anton Xue, Shreya Havaldar, and everyone else at BrachioLab, and additionally Simeng Sun, Chaitanya Malaviya, Yue Yang for valuable feedback on writing. We thank Jiayi Xin for valuable discussion about this work.

\bibliography{arxiv}
\bibliographystyle{icml2025}

\newpage
\appendix
\onecolumn

\renewcommand{\thetable}{A\arabic{table}}
\renewcommand{\thefigure}{A\arabic{figure}}

\section{Theory Details and Proofs}
\label{app:theory}

In addition to the insertion error, we can define the deletion error
for a \sem{} and a target true function when excluding a subset.
\begin{definition}
(Deletion Error) Let $\alpha_i = \theta (x)_i h(x_{G_i})$ be the total contribution of the $i$th feature group to the prediction of a \sem{}. Then the \emph{deletion error} of a self-attributing neural network $f(x)=\sum_{i=1}^m \theta(x)_i h(x_{G_i}) = \sum_i \alpha_i$ for a target function $f^*:\mathbb R^d\rightarrow\mathbb R$ when removing a subset of features $S$ from an input $x$ is 
\begin{equation*}
\begin{aligned}
    \mathrm{DelErr}(G, \alpha, S) &= \bigg|f^*(x) - f^*(x_{\lnot S}) - \sum_{\groupsubset} \alpha_i\bigg| \\
    &\textrm{where}\;\; (x_{\lnot S})_j = \begin{cases}
        x_j \quad \text{if}\;\; j \not \in S\\
        0 \quad \text{otherwise}
    \end{cases}
\end{aligned}
\end{equation*}
Let $\bracketd{}=\{1,\dots,d\}$. Then the total deletion error over all possible deletions is $\sum_{S\subseteq\bracketd{}} \mathrm{DelErr}(G, \alpha,S)$.
\end{definition}

We can solve the exact total deletion error for multilinear monomials with linear programs.
These lowerbounds are then the minimum error in performance of \sem{}s attempting to learn the simple monomials.

\begin{restatable}[Lower Bound on Deletion Error for Monomials]{lemma}{monomialexact}
\label{lem:monomial_exact}
    Let $p:\{0,1\}^d\rightarrow \{0,1\}$ be a multilinear monomial function of $d$ variables, $p(x) = \prod_{i=1}^d x_i$. 
     Then, $\sum_{S\subseteq\bracketd{}} \mathrm{DelErr}(G,\alpha,S) \geq D_{del}(\hat{\lambda})$, where $D_{del}(\hat{\lambda}) = (\hat{\lambda_1} - \hat{\lambda_2})^\top c$ is the lower bound, $\hat{\lambda}$ is a dual feasible point, and $c$ is a constant as defined in \eqref{eqn:monomial-lp-dual}.
\end{restatable}
\begin{proof}
Let $x=\mathbf{1}_d$, and let $f(x)=\sum_{i=1}^d \alpha_i$ with $\alpha\in \mathbb R^d$ be any per-feature self-attributing neural network. Consider the set of all possible perturbations to the input, or the power set of all features $\mathcal{P}([d])$, We can write the error of the self-attributing neural network under a given perturbation $S \in \mathcal P$ (or $S \subseteq \bracketd{}$) as 
\begin{equation}
    \textrm{error}(\alpha, S) = \left|1[S \neq \emptyset ] - \sum_{i\in S} \alpha_i\right| = \left| c_S - M_S^\top \alpha \right|
\end{equation}
where $(M_S,c_S)$ are defined as $(M_S)_i = \begin{cases}
    1 \quad \text{if}\;\; i \in S\\
    0 \quad \text{otherwise,}
\end{cases}$ and $c_S$ contains the remaining constant terms.

This captures the faithfulness notion that $\alpha_i$ is faithful if it reflects a contribution of $x_i$ to the prediction of the target function. Then, the self-attributing neural network $f(x)=\sum_{i=1}^d \alpha^*_i$ with $\alpha^*$ that achieves the lowest possible faithfulness error over all possible subsets is 
\begin{equation}
    \alpha^* = \argmin_\alpha \sum_{S \in \mathcal P} \textrm{error}(\alpha, S) 
\end{equation}
This can be more compactly written as 
\begin{equation}
    \alpha^* = \argmin_\alpha \mathbf 1^\top \left| c  - M\alpha \right|
\end{equation}

The minimum total deletion error can then be solved by the following linear program
\begin{equation}
    \begin{aligned}
        P_{ins}(\alpha, \beta) = \min_{\alpha,\beta} M^\top \beta\\
        \beta \geq c - M \alpha\\
        \beta \geq M \alpha - c
    \end{aligned}
    \label{eqn:monomial_lp}
\end{equation}
To obtain the lower bound for total deletion error, we can solve the dual of this linear program.
Given the above primal linear program, we can find the Lagrangian
\begin{equation}
    \begin{aligned}
        L(\alpha,\beta,\lambda_1,\lambda_2) &= \mathbf{1}^\top \beta + \lambda_1^\top (c - M \alpha - \beta) + \lambda_2^\top (M\alpha - c - \beta )\\
        &= \mathbf{1}^\top \beta - \lambda_1^\top \beta - \lambda_2^\top \beta + \lambda_1^\top (c - M\alpha) + \lambda_2^\top (M \alpha - c)\\
        &=\mathbf{1}^\top \beta - (\lambda_1 + \lambda_2)^\top \beta + \lambda_1^\top c - \lambda_1^\top M\alpha + \lambda_2^\top M \alpha - \lambda_2^\top c\\
        &= (\mathbf{1} - \lambda_1 - \lambda_2)^\top \beta + (\lambda_1 - \lambda_2)^\top M \alpha + (\lambda_1^\top - \lambda_2^\top)c.
    \end{aligned}
\end{equation}
For the dual function to be bounded below, the coefficients of $\alpha$ and $\beta$ must be zero:
\begin{gather}
\begin{aligned}
    &\frac{\partial L}{\partial \alpha} = M^\top (\lambda_2 - \lambda_1) = 0\\
    &\frac{\partial L}{\partial \beta}\mathbf{1} - \lambda_1 - \lambda_2 = 0 \Rightarrow \lambda_1 + \lambda_2 = \mathbf{1}
\end{aligned}
\end{gather}
Since we are minimizing over $\alpha$, $\beta$, the dual objective is to maximize
\begin{equation}
    \begin{aligned}
        &(\lambda_1^\top - \lambda_2^\top) c\\ 
        \text{subject to: } &\lambda_1 + \lambda_2 = \mathbf{1}, M^\top (\lambda_2 - \lambda_1) = 0, \lambda_1, \lambda_2 \geq 0.
    \end{aligned}
\end{equation}
In summary, the dual problem is
\begin{equation}
    \begin{aligned}
        &D_{del}(\hat{\lambda}) = \max_{\lambda_1,\lambda_2} (\lambda_1 - \lambda_2)^\top c \\
        \text{subject to: } &\lambda_1 + \lambda_2 = \mathbf{1}, M^\top (\lambda_2 - \lambda_1) = 0, \lambda_1, \lambda_2 \geq 0.
    \end{aligned}
    \label{eqn:monomial-lp-dual}
\end{equation}
Let $\hat{\lambda}$ be feasible, then $D_{ins}(\hat{\lambda}) = (\lambda_1 - \lambda_2)^\top c \leq \sum_{S\in\mathcal{P}}\mathrm{InsErr}(\alpha, S)$.
We can then maximize the lower bound to the primal program~\eqref{eqn:monomial_lp} in linear programming solvers such as CVXPY which maximizes the dual program~\eqref{eqn:monomial-lp-dual}.
\end{proof}

\begin{restatable}[Deletion Error for Monomials Grows Exponentially with Dimension]{theorem}{monomial}
\label{thm:monomial}
Let $p:\{0,1\}^d\rightarrow \{0,1\}$ be a multilinear monomial function, $p(x) = \prod_{i=1}^d x_i$. 
Then, the lower bound of total deletion error for $p$ follows an exponential trend as dimension $d$ grows, where the lower bound is approximately $\gamma_0 + e^{\gamma_1 + \gamma_2 d}$, where $(\gamma_0, \gamma_1,\gamma_2) = (-1.030, -1.171, 0.665)$. 
\end{restatable}
We solve for $\alpha^*$ in \eqref{eqn:monomial_lp} using ECOS in the \texttt{cvxpy} library for $d\in \{2, \dots, 20\}$. To fit the exponential function, we fit a linear model to the log transform of the output which has high degree of fit (with a relative absolute error of 0.012), with the resulting exponential function shown in Figure~\ref{fig:monomial}.  

In other words, \LemmaTitle{}~\ref{lem:monomial_exact} states that we can find the exact lower bound of the total deletion error of a monomial, and \TheoremTitle{}~\ref{thm:monomial} posits that lower bound of the total deletion error of any feature attribution of a monomial grows exponentially with respect to the dimension, as visualized in Figure~\ref{fig:monomial}

For high-dimensional problems, this suggests that there does not exist a feature attribution that satisfies all possible deletion tests. On the other hand, monomials can easily achieve low insertion error, as formalized in \LemmaTitle{}~\ref{lem:monomialinsert}. 
\begin{restatable}[Insertion Error for Monomials]{lemma}{monomialinsert}
\label{lem:monomialinsert}
    Let $p:\{0,1\}^d\rightarrow \{0,1\}$ be a multilinear monomial function of $d$ variables, $p(x) = \prod_{i=1}^d x_i$. Then, for all $x$, there exists a self-attributing neural network $f(x)=\sum_{i=1}^d \alpha_i$ for $p$ at $x$ that incurs at most $1$ total insertion error. 
\end{restatable}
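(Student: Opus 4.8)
The plan is to exhibit a single feature attribution---the all-zeros vector $\alpha = 0_d$---and to verify the bound directly, without any optimization. The reason to expect this to succeed is the structural asymmetry between insertion and deletion for a monomial: since $p$ is the product of \emph{all} $d$ coordinates, masking even one coordinate to $0$ forces the whole product to vanish. Deletion removes only the coordinates in $S$ and can leave the product intact, which is what drives the exponential lower bound in Theorem~\ref{thm:monomial}; insertion instead zeroes out every coordinate \emph{outside} $S$, so it collapses the monomial for all but one subset.

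First I would compute $p(x_S)$ for an arbitrary $S \in \mathcal{P}$. By definition $(x_S)_j = 0$ whenever $j \notin S$, so if $S \neq \{1,\dots,d\}$ the product $\prod_{j=1}^d (x_S)_j$ contains a zero factor and hence $p(x_S) = 0$. Only for the full index set $S = \{1,\dots,d\}$ do we recover $x_S = x$ and thus $p(x_S) = p(x)$. I would also record that $p(0_d) = \prod_{i=1}^d 0 = 0$.

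Next I would substitute $\alpha = 0_d$ into the insertion error. The attribution sum $\sum_{i \in S}\alpha_i$ then vanishes for every $S$, so $\mathrm{InsErr}(0_d, S) = \left| p(x_S) - p(0_d) \right| = p(x_S)$, where the last equality uses that $p(x_S) \in \{0,1\}$ is nonnegative. By the previous step this equals $0$ for every proper subset (including $\emptyset$) and equals $p(x)$ for the single set $S = \{1,\dots,d\}$.

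Finally I would sum over the powerset: every term is zero except the full-set term, so the total insertion error is exactly $p(x) \in \{0,1\}$, which is at most $1$, as claimed. Note that this bound holds uniformly in $x$, so the same zero attribution works for all inputs. There is essentially no hard step here---the only care needed is the bookkeeping over $\mathcal{P}$ and the clean observation that insertion annihilates the monomial on all but one subset, which is precisely the feature that separates this trivial upper bound from the exponential deletion lower bound.
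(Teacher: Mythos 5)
Your proposal is correct and matches the paper's own argument: both exhibit the zero attribution $\alpha = 0_d$ and observe that insertion annihilates the monomial on every subset except the full index set, leaving a single term of at most $1$. The only cosmetic difference is that you handle all $x$ uniformly via the value $p(x)\in\{0,1\}$, whereas the paper splits into the cases $x \neq \mathbf{1}_d$ and $x = \mathbf{1}_d$.
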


\begin{proof}
    Consider $\alpha = 0_d$. If $x \neq \mathbf{1}_d$ then this achieves 0 insertion error. Otherwise, suppose $x=\mathbf{1}_d$. Then, for all subsets $S \neq \bracketd{}$, $p(x_S) = 0 = \sum_{i\in S} \alpha_i$ so $\alpha$ incurs no insertion error for all but one subset. For the last subset $S=\bracketd{}$, the insertion error is $1$. Therefore, the total insertion error is at most 1 for $\alpha = 0_d$. 
\end{proof}

However, once we slightly increase the function complexity to binomials, we find that the total insertion error of any feature attribution 
likely follows an exponential trend with respect to $d$, as shown in Figure~\ref{fig:polynomial}.

\binomialexact*
\begin{proof}
    Consider $x=\mathbf{1}_d$. The addition error for a binomial function can be written as
\begin{equation}
\begin{aligned}
    \textrm{error}(\alpha, S) &= \left|\sum_{i\in S} \alpha_i - 1[S_1\cup S_2 \subseteq S] - 1[S_2\cup S_3 \subseteq S]\right| &= |M_S^\top \alpha - c_S|
\end{aligned}
\end{equation}
where $(M_S,c_S)$ are defined as $(M_S)_i = \begin{cases}
    1 \quad \text{if}\;\; i \in S\\
    0 \quad \text{otherwise,}
\end{cases}$ and $c_S$ contains the remaining constant terms. 
Then, the least possible insertion error that any attribution can achieve is 
\begin{equation}
    \alpha^* = \argmin_\alpha \sum_{S \in \mathcal P} \textrm{error}(\alpha, S) = \argmin_\alpha \mathbf 1^\top|c - M\alpha|
\end{equation}

{
\color{myblueeee}
The minimum total insertion error can then be solved by the following linear program
\begin{equation}
    \begin{aligned}
        P_{ins}(\alpha, \beta) = \min_{\alpha,\beta} M^\top \beta\\
        \beta \geq c - M \alpha\\
        \beta \geq M \alpha - c
    \end{aligned}
    \label{eqn:binomial_lp}
\end{equation}
To obtain the lower bound for total insertion error, we can solve the dual of this linear program.
Given the above primal linear program, we can find the Lagrangian
\begin{equation}
    \begin{aligned}
        L(\alpha,\beta,\lambda_1,\lambda_2) &= \mathbf{1}^\top \beta + \lambda_1^\top (c - M \alpha - \beta) + \lambda_2^\top (M\alpha - c - \beta )\\
        &= \mathbf{1}^\top \beta - \lambda_1^\top \beta - \lambda_2^\top \beta + \lambda_1^\top (c - M\alpha) + \lambda_2^\top (M \alpha - c)\\
        &=\mathbf{1}^\top \beta - (\lambda_1 + \lambda_2)^\top \beta + \lambda_1^\top c - \lambda_1^\top M\alpha + \lambda_2^\top M \alpha - \lambda_2^\top c\\
        &= (\mathbf{1} - \lambda_1 - \lambda_2)^\top \beta + (\lambda_1 - \lambda_2)^\top M \alpha + (\lambda_1^\top - \lambda_2^\top)c.
    \end{aligned}
\end{equation}
For the dual function to be bounded below, the coefficients of $\alpha$ and $\beta$ must be zero:
\begin{gather}
\begin{aligned}
    &\frac{\partial L}{\partial \alpha} = M^\top (\lambda_2 - \lambda_1) = 0\\
    &\frac{\partial L}{\partial \beta}\mathbf{1} - \lambda_1 - \lambda_2 = 0 \Rightarrow \lambda_1 + \lambda_2 = \mathbf{1}
\end{aligned}
\end{gather}
Since we are minimizing over $\alpha$, $\beta$, the dual objective is to maximize
\begin{equation}
    \begin{aligned}
        &(\lambda_1^\top - \lambda_2^\top) c\\ 
        \text{subject to: } &\lambda_1 + \lambda_2 = \mathbf{1}, M^\top (\lambda_2 - \lambda_1) = 0, \lambda_1, \lambda_2 \geq 0.
    \end{aligned}
\end{equation}
In summary, the dual problem is
\begin{equation}
    \begin{aligned}
        &D_{ins}(\hat{\lambda}) = \max_{\lambda_1,\lambda_2} (\lambda_1 - \lambda_2)^\top c \\
        \text{subject to: } &\lambda_1 + \lambda_2 = \mathbf{1}, M^\top (\lambda_2 - \lambda_1) = 0, \lambda_1, \lambda_2 \geq 0.
    \end{aligned}
    \label{eqn:binomial-lp-dual}
\end{equation}
Let $\hat{\lambda}$ be feasible, then $D_{ins}(\hat{\lambda}) = (\lambda_1 - \lambda_2)^\top c \leq \sum_{S\in\mathcal{P}}\mathrm{InsErr}(\alpha, S)$.
We can then maximize the lower bound to the primal program~\eqref{eqn:binomial_lp} in linear programming solvers such as CVXPY which maximizes the dual program~\eqref{eqn:binomial-lp-dual}.
}
\end{proof}

\begin{restatable}[Insertion Error for Binomials Grows Exponentially with Dimension]{theorem}{binomial}
\label{thm:binomial}
Let $p$ be a multilinear binomial function of $d$ variables as defined in \LemmaTitle{}~\ref{lem:binomial_exact}.
    Then, the lower bound of total insertion error for $p$ follows an exponential trend as dimension $d$ grows, where the lower bound is approximately $\lambda_0 + e^{\lambda_1 + \lambda_2 d}$, where $(\lambda_0,\lambda_1,\lambda_2) = (4.778, 1.332, 0.198)$. 
\end{restatable}
We maximize the lower bound by solving the dual linear program in \eqref{eqn:binomial-lp-dual} using ECOS in the \texttt{cvxpy} library for $d\in \{2,\dots, 20\}$.
To get the exponential function, we fit a linear model to the log transform of the output, doing a grid search over the auxiliary bias term. The resulting function has a high degree of fit (with a relative absolute error of 0.188), with the resulting exponential function shown in Figure~\ref{fig:polynomial}.

\subsection{Groups}

\begin{restatable}[Insertion and Deletion Error for Groups]{lemma}{group}
\label{lem:group}
    Consider $p_1$ and $p_2$, the polynomials from \TheoremTitle{}~\ref{thm:monomial} and \TheoremTitle{}~\ref{thm:binomial}. Then, there exists a group-based self-attributing neural network with zero deletion and insertion error for both polynomials. 
\end{restatable}
\begin{proof}
    Let $[d]$ denote $\bracketdlong{}$. First let $p_1(x) = \prod_i x_i$ and consider a self-attributing neural network with one group, $f(x) = \sum_{i=1}^1 \theta(x)_i h(x_{[d]}) = \sum_{i=1}^1 1$ has one group $G=\{[d]\}$ with contribution $\alpha=\{1\}$.
    If $S=\emptyset$,
    \begin{gather*}
        \begin{aligned}
            \mathrm{DelErr}(G, \alpha, S) &= \left|f^*(x) - f^*(x_{\lnot S}) - \sum_{i : \groupsubset} \alpha_i \right| &= |1 - 1 - 0| = 0
        \end{aligned}
    \end{gather*}
    Otherwise, no matter what subset $S$ is being tested, $S \subseteq [d]$ is always true, thus: 
    \begin{gather*}
        \begin{aligned}
            \mathrm{DelErr}(G, \alpha, S) &= \left|f^*(x) - f^*(x_{\lnot S}) - \sum_{i : \groupsubset} \alpha_i \right| &= |1 - 0 - 1| = 0
        \end{aligned}
    \end{gather*}
    Therefore the total grouped deletion error for $p_1$ is 0.
    Next let $p_2(x) = \prod_{i\in S_1 \cup S_2} x_i + \prod_{j\in S_2\cup S_3} x_j$ and consider a self-attributing neural network with two groups, 
    $G=\{S_1 \cup S_2, S_2 \cup S_3\}$ with contributions $\alpha=\{1,1\}$.
    If $S = [d]$, then 
    \begin{gather*}
        \begin{aligned}
            \mathrm{InsErr}(G, \alpha,S) &= \left|f^*(x_S) - f^*(0) - \sum_{i:\groupsubsetins{}} \alpha_i\right| 
            &= 2 - 0 - (1+1) = 0
        \end{aligned}
    \end{gather*}
    If $S$ empty, then the insertion error is trivially 0. Otherwise suppose $S$ is missing an element from one of $S_1$ or $S_3$. WLOG suppose it is from $S_1$ but not $S_2$ or $S_3$. Then, 
    \begin{gather*}
        \begin{aligned}
            \mathrm{InsErr}(G, \alpha,S) &= \left|f^*(x_S) - f^*(0) - 
    \sum_{i:\groupsubsetins{}} \alpha_i\right| 
    &= 1 - 0 - (1) = 0
        \end{aligned}
    \end{gather*}
    Otherwise, suppose we are missing elements from both $S_1$ and $S_3$. Then, 
    \begin{gather*}
        \begin{aligned}
            \mathrm{InsErr}(G, \alpha,S) &= \left|f^*(x_S) - f^*(0) - 
    \sum_{i:\groupsubsetins{}} \alpha_i\right| 
    &= 0 - 0 - (0) = 0
        \end{aligned}
    \end{gather*}
    Lastly, suppose we are missing elements from $S_2$. Then,
    \begin{gather*}
        \begin{aligned}
            \mathrm{InsErr}(G, \alpha,S) &= \left|f^*(x_S) - f^*(0) - 
    \sum_{i:\groupsubsetins{}} \alpha_i\right| 
    &= 0 - 0 = 0
        \end{aligned}
    \end{gather*}
    Thus by exhaustively checking all cases, $p_2$ has zero grouped insertion error. 
    {
    \color{myblue}
    Therefore the total grouped insertion error for $p_2$ is 0.
    }
\end{proof}

{
\color{myblue}
\begin{lemma}[Detailed Statement: Insertion and Deletion Error for Groups for General $m$-nomial Polynomials]\label{lem:group-m}
    Let $p: \mathbb{R}^d \rightarrow \mathbb{R}$ be any general $m$-nomial polynomial function of order $d$ with $m$ terms, $p(x) = \sum_{i=1}^m \sum_{k\in K_i} a_{ik} \prod_{j\in G_i} x_j^{b_{ijk}} $, where $(G_1, \dots, G_m) \subseteq \bracketd{}$, $G_i \neq G_{i'} \forall i, i' \in \{ 1, \dots, m \}$, $K_i\in\mathbb{Z}^+$ is the set of indices for terms associated with group $G_i$, and $b_{ijk}\in\mathbb{Z}^+$ is the exponent for feature $j$ in subset $i$ in the term indexed by $k\in K_i$. Then, a self-attributing neural network needs at most $m$ groups to achieve zero deletion and insertion error for polynomial $p$. 
\end{lemma}
\begin{proof}
    Let $\bracketd{}$ denote $\bracketdlong{}$.
    Let $q_i(x)=\sum_{k\in K_i} a_{ik} \prod_{j\in G_i} x_j^{b_{ijk}}$ be the $i$th polynomial with $K_i$ terms, and then we can rewrite $p(x)=q_1(x)+\dots+q_m(x)$.
    We prove by induction that we can have a self-attributing neural network of $m$ groups 
    $G=\{G_1,\dots,G_m\}$ with contributions $\alpha=\{ q_1(x), \dots, q_m(x)\}$
    to achieve zero deletion and insertion error for polynomial $p$.
    For groups and contribution scores for up to $m$ groups, we denote with $G^{(m)}$ and $\alpha^{(m)}$, and omit the superscripts when the context is clear.

        \paragraph{Insertion.}
    
    \textbf{Base Case:} Suppose $m=1$, then $p_1(x)= \sum_{k\in K_1} a_{1k}\prod_{j\in G_1} x_j^{b_{1jk}}=q_1(x)$ and consider a self-attributing neural network with one group,
    $G^{(1)}=\{G_1\}$ with contribtuions $\alpha^{(1)}=\{q_1(x)\}$.
    As there are no other input features, $G_1=\bracketd{}$.
    If $S$ is empty, then the insertion error is trivially 0.
    If $S = \bracketd{}$, then $G_1 \subseteq S$,
    \begin{gather*}
        \begin{aligned}
            \mathrm{InsErr}(G^{(1)}, \alpha^{(1)}, S) &= \left|f^*(x_S) - f^*(0) - \sum_{i:\groupsubsetins{}} \alpha_i\right| 
            = \left| q_1(x) - 0 - q_1(x) \right| = 0
        \end{aligned}
    \end{gather*}
    Otherwise, $S \subset \bracketd{} = G_1$, then $G_1 \nsubseteq S$,
    \begin{gather*}
        \begin{aligned}
            \mathrm{InsErr}(G^{(1)}, \alpha^{(1)},S) &= \left|f^*(x_S) - f^*(0) - \sum_{i:\groupsubsetins{}} \alpha_i\right| 
            = \left| 0 - 0 - 0 \right| = 0
        \end{aligned}
    \end{gather*}
    We proved that we can have a self-attributing neural network with 
    groups $G^{(1)}=\{G_1\}$ and group contributions $\alpha^{(1)}=\{q_1(x)\}$
    for polynomial $p_1(x)=q_1(x)$, which only has one group. Therefore, we can need at most one group to achieve zero grouped insertion error for monomial $p_1$.
    
    \textbf{Inductive Step:}
    Assume that it holds for $(m-1)$-nomial polynomial $p_{m-1}(x)=q_1(x)+\dots+q_{m-1}(x)$ that we need at most $(m-1)$ groups to achieve zero insertion error, where the groups are 
    $G^{(m-1)}=\{G_1, \dots, G_{m-1}\}$ with group contributions $\alpha^{(m-1)}=\{ q_1(x), \dots , q_{m-1}(x) \}$
    which means that 
    \begin{gather}
        \begin{aligned}
            &\mathrm{InsErr}(G^{(m-1)}, \alpha^{(m-1)},S) = \left|f^*(x_S) - f^*(0) - \sum_{i:\groupsubsetins{}} \alpha_i\right| 
            =  \left| p_{m-1}(x_S) - 0 - \sum_{i:\groupsubsetins{}\bigwedge i \neq m} q_i(x) \right|  = 0
        \end{aligned}
        \label{eqn:ins_induc}
    \end{gather}
    This holds for all $S\subseteq \bracketd{}$.

    Now, we prove it for $m$-nomial polynomial $p_m(x)=q_1(x)+\dots+q_m(x)$.
    There are two cases. First, if $G_m \nsubseteq S$, meaning that not all $G_m$ features are in $S$, but there are some parts of $G_m$ in $\lnot S$. Then $f^*(x_S)$ does not contain the polynomial term $q_m(x)$ that uses $G_m$. Thus,
    \begin{gather*}
        \begin{aligned}
            \mathrm{InsErr}(G^{(m)}, \alpha^{(m)},S) &= \left|f^*(x_S) - f^*(0) - \sum_{i:\groupsubsetins{}} \alpha_i\right| 
            = \left| p_{m-1}(x_S) - 0 - \sum_{i:\groupsubsetins{} \bigwedge i \neq m} q_i(x) \right| \\
            &= \mathrm{InsErr}(G^{(m-1)}, \alpha^{(m-1)},S) = 0
        \end{aligned}
    \end{gather*}
    Otherwise, if $G_m \subseteq S$, meaning that all $G_m$ features are in $S$, and no features that are in $G_m$ are contained in $S$, then $f^*(x_S)$ contains the polynomial term $q_m(x)$ that uses $G_m$. Thus,
    \begin{gather*}
        \begin{aligned}
            \mathrm{InsErr}(G^{(m)}, \alpha^{(m)},S) &= \left|f^*(x_S) - f^*(0) - \sum_{i:\groupsubsetins{}} \alpha_i\right| \\
            &= \left| \left(q_m(x_S) + p_{m-1}(x_S)\right) - 0 - \sum_{i:\groupsubsetins{}} q_i(x) \right| \\
            &= \Bigg| \left(q_m(x_S) + p_{m-1}(x_S)\right) - 0 - 
            \;\;\left(q_m(x_S) + \sum_{i:\groupsubsetins{} \bigwedge i \neq m} q_i(x)\right) \Bigg| \\
            &= \Bigg| \left(q_m(x_S) - q_m(x_S)\right) +
            \;\;\left(p_{m-1}(x_S) - 0 - \sum_{i:\groupsubsetins{} \bigwedge i \neq m} q_i(x)\right) \Bigg| \\
            &= \mathrm{InsErr}(G^{(m-1)}, \alpha^{(m-1)},S) = 0
        \end{aligned}
    \end{gather*}
    The last steps of the above derivations use the induction from \eqref{eqn:ins_induc}.
    Thus by exhaustively checking all cases, $p_m$ has zero group insertion error with self-attributing neural networks with groups $G^{(m)}$ and group contributions $\alpha^{(m)}$
    
    \paragraph{Deletion.}
    
    \textbf{Base Case:} Suppose $m=1$, then $p_1(x)= \sum_{k\in K_1} a_{1k}\prod_{j\in S_1} x_j^{b_{1jk}}=q_1(x)$ and consider a self-attributing neural network with one group $G^{(1)}=\{ G_1\}$ and contribution $\alpha^{(1)}=\{q_1(x)\}$.
    As there are no other input features, $G_1=\bracketd{}$.
    If $S$ is empty, then
    \begin{gather*}
        \begin{aligned}
            \mathrm{DelErr}(G^{(1)}, \alpha^{(1)},S) &= \left|f^*(x) - f^*(x_{\lnot S}) - \sum_{i:\groupsubset} \alpha_i\right| 
            &= \left| q_1(x) - q_1(x) - 0\right| = 0
        \end{aligned}
    \end{gather*}
    Otherwise, no matter what subset $S$ is being tested, $S \subseteq \bracketd{} = S_1$ is always true, thus $S \cap S_i \neq \emptyset$:
    \begin{gather*}
        \begin{aligned}
            \mathrm{DelErr}(G^{(1)}, \alpha^{(1)},S) &= \left|f^*(x) - f^*(x_{\lnot S}) - \sum_{i:\groupsubset} \alpha_i\right| 
            &= \left| q_1(x) - 0 - q_1(x) \right| = 0
        \end{aligned}
    \end{gather*}
    We proved that we can have a self-attributing neural network $f(x)=\theta(x)_i h(x_{[d]}) = q_1(x)$ with 
    groups $G^{(1)}=\{G_1\}$ and group contributions $\alpha^{(1)}=\{ q_1(x)\}$
    for polynomial $p_1(x)=q_1(x)$, which only has one group. Therefore, we can need at most one group to achieve zero grouped deletion error for monomial $p_1$.

    \textbf{Inductive Step:}
    Assume that it holds for $(m-1)$-nomial polynomial $p_{m-1}(x)=q_1(x)+\dots+q_{m-1}(x)$ that we need at most $(m-1)$ groups to achieve zero grouped deletion error, which means that 
    \begin{gather}
        \begin{aligned}
           \mathrm{DelErr}(G^{(m-1)}, \alpha^{(m-1)},S) &= \left|f^*(x) - f^*(x_{\lnot S}) - \sum_{i:\groupsubset} \alpha_i\right| \\
            &=  \left| p_{m-1}(x) - p_{m-1}(x_{\lnot S}) - \sum_{i:\groupsubset\bigwedge i \neq m} q_i(x) \right| = 0
        \end{aligned}
        \label{eqn:del_induc}
    \end{gather}
    This holds for all $S\subseteq \bracketd{}$.

    Now, we prove it for $m$-nomial polynomial $p_m(x)=q_1(x)+\dots+q_m(x)$.
    There are two cases. First, if $G_m \nsubseteq \lnot S$, meaning that not all $G_m$ features are in $\lnot S$, but there are some parts of $G_m$ in $S$. Then $f^*(x_{\lnot S})$ does not contain the polynomial term $q_m(x)$ that uses $S_m$, and $S\cap G_i \neq \emptyset$. Thus,
    \begin{gather*}
        \begin{aligned}
            \mathrm{DelErr}(G^{(m)}, \alpha^{(m)},S) &= \left|f^*(x) - f^*(x_{\lnot S}) - \sum_{i:\groupsubset} \alpha_i\right| \\
            &= \left| \sum_{i=1}^{m} q_i(x) - p_{m-1}(x_{\lnot S}) - \left(q_m(x) + \sum_{i:\groupsubset \bigwedge i \neq m} q_i(x)\right) \right| \\
            &= \Bigg| (q_m(x) + \sum_{i=1}^{m-1} q_i(x)) - p_{m-1}(x_{\lnot S}) -
            \;\;\left(q_m(x) + \sum_{i:\groupsubset \bigwedge i \neq m} q_i(x)\right) \Bigg| \\
            &= \Bigg| \left(q_m(x) + p_{m-1}(x)\right) - p_{m-1}(x_{\lnot S}) -
            \;\; \left(q_m(x) + \sum_{i:\groupsubset \bigwedge i \neq m} q_i(x)\right) \Bigg| \\
            &= \Bigg| \left(q_m(x) - q_m(x)\right) + 
            \;\;\left(p_{m-1}(x) - p_{m-1}(x_{\lnot S}) -\sum_{i:\groupsubset \bigwedge i \neq m} q_i(x)\right) \Bigg| \\
            &= \mathrm{DelErr}(G^{(m-1)}, \alpha^{(m-1)},S) = 0
        \end{aligned}
    \end{gather*}
    Otherwise, if $G_m \subseteq \lnot S$, meaning that all $S_m$ features are in $\lnot S$, and no features that are in $G_m$ are contained in $S$, then $f^*(x_{\lnot S})$ contains the polynomial term $q_m(x)$ that uses $G_m$, and $S\cap G_i = \emptyset$. Thus,
    \begin{gather*}
        \begin{aligned}
            \mathrm{DelErr}(G^{(m)}, \alpha^{(m)},S) = &\left|f^*(x) - f^*(x_{\lnot S}) - \sum_{i:\groupsubset} \alpha_i\right| \\
            = &\Bigg| \sum_{i=1}^{m} q_i(x) - (q_m(x) + p_{m-1}(x_{\lnot S})) - \sum_{i:\groupsubset \bigwedge i \neq m} q_i(x) \Bigg| \\
            = &\Bigg|(q_m(x) + p_{m-1}(x)) - (q_m(x) + p_{m-1}(x_{\lnot S})) -
              \sum_{i:\groupsubset \bigwedge i \neq m} q_i(x) \Bigg| \\
            = &\Bigg| (q_m(x) - q_m(x)) + 
            \left(p_{m-1}(x) - p_{m-1}(x_{\lnot S}) -\sum_{i:\groupsubset \bigwedge i \neq m} q_i(x)\right) \Bigg| \\
            = &\mathrm{DelErr}(G^{(m-1)}, \alpha^{(m-1)},S) = 0
        \end{aligned}
    \end{gather*}
    The last steps of the above derivations use the induction from \eqref{eqn:del_induc}.
    Thus by exhaustively checking all cases, $p_m$ has zero grouped deletion error with self-attributing neural network with 
    groups $G^{(m)}$ and group contributions $\alpha^{(m)}$.

\end{proof}
}

\begin{figure*}[t]
    \centering
    
     \begin{subfigure}[t]{0.49\linewidth}
        \centering
        \includegraphics{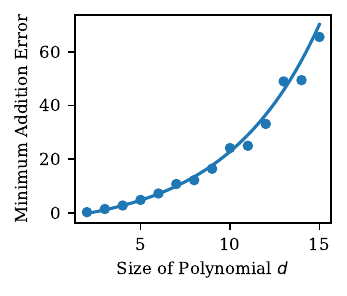}
        \caption{Minimum total mean square insertion error for binomials.  
        } 
        \label{fig:polynomial-mse}
     \end{subfigure}
     \hfill
     \begin{subfigure}[t]{0.49\linewidth}
         \centering
        \includegraphics{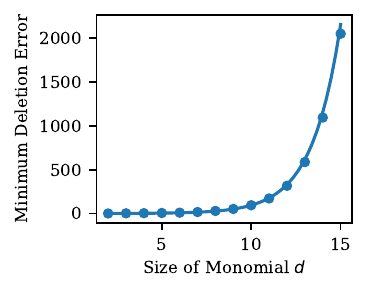}
        \caption{Minimum total mean square deletion error for monomials. 
        }
        \label{fig:monomial-mse}
     \end{subfigure}
     \caption{Errors for per-feature \sem{}s grow fast unavoidably. The minimum (a) total mean square insertion error of monomials of size $d$ and (b) total mean square deletion errors of binomials of size $d$ are the minima over all possible per-feature self-explaining models. The dots are the exact minima computed analytically, while the line is a best-fit exponential function. 
     }
     \label{fig:perturbations-mse}
\end{figure*}

{
\color{myblue}
\subsection{Alternative Mean Square Error Insertion/Deletion Errors}
\label{app:mse}
In the main paper, we use mean absolute error for Insertion Error and Deletion Error.
Here we also compute the exact Deletion Errors for monomials and Insertion Errors for binomials up to $d=20$.
We conjecture that the two errors will also follow an exponential growth.

\begin{definition}
(Mean Square Deletion Error) Let $\alpha_i \theta (x)_i h(x_{G_i})$ be the total contribution of the $i$th feature group to the prediction of a \sem{}. Then the \emph{mean square deletion error} of a self-attributing neural network $f(x)=\sum_{i=1}^m \theta(x)_i h(x_{G_i}) = \sum_i \alpha_i$ for a target function $f^*:\mathbb R^d\rightarrow\mathbb R$ when removing a subset of features $S$ from an input $x$ is 
\begin{equation*}
\begin{aligned}
    \mathrm{DelErr_{MSE}}(G, \alpha, S) &= \norm{f^*(x) - f^*(x_{\lnot S}) - \sum_{\groupsubset} \alpha_i}^2 \\
    &\textrm{where}\;\; (x_{\lnot S})_j = \begin{cases}
        x_j \quad \text{if}\;\; j \not \in S\\
        0 \quad \text{otherwise}
    \end{cases}
\end{aligned}
\end{equation*}
Let $\bracketd{}=\{1,\dots,d\}$. Then the total mean square deletion error over all possible deletions is $\sum_{S\subseteq\bracketd{}} \mathrm{DelErr}(G, \alpha,S)$.
\end{definition}


\begin{restatable}[Lower Bound on Mean Square Deletion Error for Monomials]{lemma}{monomialexactmse}
\label{lem:monomial_exactmse}
    Let $p:\{0,1\}^d\rightarrow \{0,1\}$ be a multilinear monomial function of $d$ variables, $p(x) = \prod_{i=1}^d x_i$. 
    Then, $\sum_{S\subseteq [d]}\mathrm{DelErr_{MSE}}(G, \alpha, S) \geq \alpha^*$, where $\alpha^*_{del}=(M^\top M)^{-1} M^\top c$ is the lower bound, and $M$ and $c$ are constants as defined in \eqref{eqn:qp_optimal}.
\end{restatable}

\begin{proof}
Let $x=\mathbf{1}_d$, and let $\alpha \in \mathbb R^d$ be any feature attribution. Consider the set of all possible perturbations to the input, or the power set of all features $\mathcal P$, We can write the error of the attribution under a given perturbation $S \in \mathcal P$ as 
\begin{equation}
    \textrm{error}(\alpha, S) = \norm{1[S \neq \emptyset ] - \sum_{i\in S} \alpha_i}^2 = \norm{ c_S - M_S^\top \alpha }^2
\end{equation}
where $(M_S,c_S)$ are defined as $(M_S)_i = \begin{cases}
    1 \quad \text{if}\;\; i \in S\\
    0 \quad \text{otherwise,}
\end{cases}$ and $c_S$ contains the remaining constant terms.

This captures the faithfulness notion that $\alpha_i$ is faithful if it reflects a contribution of $\alpha_i$ to the prediction. Then, the feature attribution $\alpha^*$ that achieves the lowest possible faithfulness error over all possible subsets is 
\begin{equation}
    \alpha^* = \argmin_\alpha \sum_{S \in \mathcal P} \textrm{error}(\alpha, S) = \argmin_\alpha \mathbf 1^\top \norm{ \mathbf c  - M\alpha }^2
    \label{eqn:monomial_qp}
\end{equation}
where $M_{ij} = \begin{cases}
    1 \quad \text{if}\;\; j \in S_i\\
    0 \quad \text{otherwise}
\end{cases}$ for an enumeration of all elements $S_i\in \mathcal P$. 

This is a quadratic function without constraint, and thus we can analytically solve the exact minimum by finding where the gradient is zero.

To solve
\begin{equation}
    \min_\alpha \mathbf 1^\top \norm{ \mathbf c  - M\alpha }^2
\end{equation}

We first expand the squared norm as 
\begin{equation}
    \norm{ \mathbf c  - M\alpha }^2 = (c - M\alpha)^\top (c-M\alpha)
\end{equation}
Substituting into the objective function:
\begin{equation}
    \min_\alpha \mathbf{1}^\top (c-M\alpha)^\top (c-M\alpha)
\end{equation}
Since $\mathbf{1}^\top$ is a summation operator over all elements, this simplifies to:
\begin{equation}
    \min_\alpha \sum_i \norm{c_i - (M \alpha)_i}^2
\end{equation}

Then, we compute the gradient.
We define
\begin{equation}
    f(\alpha) = \sum_i \norm{c_i - (M \alpha)_i}^2
\end{equation}
Taking the derivative with respect to $\alpha$:
\begin{equation}
    \nabla f(\alpha) = -2M^\top (c - M \alpha)
\end{equation}
Setting the gradient to zero:
\begin{equation}
    M^\top M \alpha = M^\top c
\end{equation}

Finally, we can solve for $\alpha$.
If $M^\top M$ is invertible, we obtain the optimal solution:
\begin{equation}
    \alpha^*_{del} = (M^\top M)^{-1} M^\top c.
    \label{eqn:qp_optimal}
\end{equation}

As $M$ is the enumeration of all elements $S_i\in\mathcal P$, the columns in $M$ are linearly independent to each other, and thus $M\in\{0,1\}^{2^d \times d}$ is invertible with column rank $d$.
\begin{equation}
    \mathrm{rank}(M) = d.
\end{equation}
By the fundamental rank theorem,
\begin{equation}
    \mathrm{rank}(M^\top M) = \mathrm{rank}(M) = d
\end{equation}
Since $M^\top M\in\mathbb{Z}^{d\times d}$, it is then invertible.

Thus we prove that we can solve the optimal solution with \eqref{eqn:qp_optimal}.

\end{proof}

\begin{restatable}[Mean Square Deletion Error for Monomials Grows Exponentially with Dimension]{theorem}{monomialmse}
\label{thm:monomialmse}
Let $p:\{0,1\}^d\rightarrow \{0,1\}$ be a multilinear monomial function of $d\leq 20$ variables, $p(x) = \prod_{i=1}^d x_i$. 
Then, the lower bound of the total mean square deletion error for $p$ follows an exponential trend as dimension $d$ grows, where the lower bound of total mean square deletion error is approximately $\gamma_0 + e^{\gamma_1 + \gamma_2 d}$, where $(\gamma_0,\gamma_1,\gamma_2) = (-1.104, -1.720, 0.626)$. 
\end{restatable}
We solve for $\alpha^*$ in \eqref{eqn:monomial_qp} 
using \eqref{eqn:qp_optimal}
for $d\in \{2, \dots, 20\}$. To fit the exponential function, we fit a linear model to the log transform of the output which has high degree of fit (with a relative square error of -0.293), with the resulting exponential function shown in Figure~\ref{fig:monomial-mse}.  


\begin{definition}
(Mean Square Insertion Error) Let $\alpha_i = \theta(x)_i h(x_{G_i}) $ be the total contribution of the $i$th feature group to the prediction of a \sem{}. Then, the \emph{mean square insertion error} of a self attributing neural network $f(x)=\sum_{i=1}^m \theta(x)_i h(x_{G_i}) = \sum_i \alpha_i$ for a target function $f^*:\mathbb R^d\rightarrow\mathbb R$ when inserting a subset of features $S$ from an input $x$ is 
\begin{equation*}
\begin{aligned}
    \mathrm{InsErr_{MSE}}(G, \alpha, S) &= \norm{f^*(x_{S}) - f^*(0_d) - \sum_{G_i \subseteq S} \alpha_i}^2
        \quad\\
        &\textrm{where}\;\; (x_{S})_j = \begin{cases}
        x_j \quad \text{if}\;\; j \in S\\
        0 \quad \text{otherwise}
    \end{cases}
\end{aligned}
\end{equation*}
The total mean square insertion error over all possible insertions is $\sum_{S\subseteq\bracketd{}} \mathrm{InsErr}(G, \alpha,S)$. 
\end{definition}

\begin{restatable}[Lower Bound on Mean Square Insertion Error for Binomials]{lemma}{binomialexactmse}
\label{lem:binomial_exact_mse}
    Let $p:\{0,1\}^d\rightarrow \{0,1,2\}$ be a multilinear binomial polynomial function of $d$ variables. Furthermore suppose that the features can be partitioned into $(S_1,S_2,S_3)$ of equal sizes where $p(x) = \prod_{i\in S_1 \cup S_2} x_i + \prod_{j\in S_2\cup S_3} x_j$. 
    Then, $\sum_{S\subseteq [d]}\mathrm{InsErr_{MSE}}(G, \alpha, S) \geq \alpha^*$, where $\alpha^*_{del}=(M^\top M)^{-1} M^\top c$ is the lower bound, and $M$ and $c$ are constants as defined in \eqref{eqn:binomial_qp_optimal}.
\end{restatable}

\begin{proof}
    Consider $x=\mathbf{1}_d$. The addition error for a binomial function can be written as
\begin{equation}
\begin{aligned}
    \textrm{error}(\alpha, S) &= \norm{\sum_{i\in S} \alpha_i - 1[S_1\cup S_2 \subseteq S] - 1[S_2\cup S_3 \subseteq S]}^2 
    &= \norm{M_S^\top \alpha - c_S}^2
\end{aligned}
\end{equation}
where $(M_S,c_S)$ are defined as $(M_S)_i = \begin{cases}
    1 \quad \text{if}\;\; i \in S\\
    0 \quad \text{otherwise,}
\end{cases}$ and $c_S$ contains the remaining constant terms. 
Then, the least possible insertion error that any attribution can achieve is 
\begin{equation}
    \alpha^* = \argmin_\alpha \sum_{S \in \mathcal P} \textrm{error}(\alpha, S) = \argmin_\alpha \mathbf 1^\top\norm{c - M\alpha}^2
    \label{eqn:binomial_qp}
\end{equation}
where $(M,c)$ are constructed by stacking $(M_S, c_S)$ for some enumeration of $S \in \mathcal P$. 

This is a quadratic function without constraint, and thus we can analytically solve the exact minimum by finding where the gradient is zero.

To solve
\begin{equation}
    \min_\alpha \mathbf 1^\top \norm{ \mathbf c  - M\alpha }^2
\end{equation}

We first expand the squared norm as 
\begin{equation}
    \norm{ \mathbf c  - M\alpha }^2 = (c - M\alpha)^\top (c-M\alpha)
\end{equation}
Substituting into the objective function:
\begin{equation}
    \min_\alpha \mathbf{1}^\top (c-M\alpha)^\top (c-M\alpha)
\end{equation}
Since $\mathbf{1}^\top$ is a summation operator over all elements, this simplifies to:
\begin{equation}
    \min_\alpha \sum_i \norm{c_i - (M \alpha)_i}^2
\end{equation}

Then, we compute the gradient.
We define
\begin{equation}
    f(\alpha) = \sum_i \norm{c_i - (M \alpha)_i}^2
\end{equation}
Taking the derivative with respect to $\alpha$:
\begin{equation}
    \nabla f(\alpha) = -2M^\top (c - M \alpha)
\end{equation}
Setting the gradient to zero:
\begin{equation}
    M^\top M \alpha = M^\top c
\end{equation}

Finally, we can solve for $\alpha$.
If $M^\top M$ is invertible, we obtain the optimal solution:
\begin{equation}
    \alpha^*_{ins} = (M^\top M)^{-1} M^\top c.
    \label{eqn:binomial_qp_optimal}
\end{equation}

As $M$ is the enumeration of all elements $S_i\in\mathcal P$, the columns in $M$ are linearly independent to each other, and thus $M\in\{0,1\}^{2^d \times d}$ is invertible with column rank $d$.
\begin{equation}
    \mathrm{rank}(M) = d.
\end{equation}
By the fundamental rank theorem,
\begin{equation}
    \mathrm{rank}(M^\top M) = \mathrm{rank}(M) = d
\end{equation}
Since $M^\top M\in\mathbb{Z}^{d\times d}$, it is then invertible.

Thus we prove that we can solve the optimal solution with \eqref{eqn:binomial_qp_optimal}.

\end{proof}

\begin{restatable}[Mean Square Insertion Error for Binomials Grows Exponentially with Dimension]{theorem}{binomialmse}
\label{thm:binomialmse}
    Let $p:\{0,1\}^d\rightarrow \{0,1,2\}$ be a multilinear binomial polynomial function of $d\leq 20$ variables. Furthermore suppose that the features can be partitioned into $(S_1,S_2,S_3)$ of equal sizes where $p(x) = \prod_{i\in S_1 \cup S_2} x_i + \prod_{j\in S_2\cup S_3} x_j$. 
    Then, the lower bound of total mean square insertion error for $p$ follows an exponential trend as dimension $d$ grows, where the lower bound is approximately $\lambda_0 + \exp(\lambda_1 + \lambda_2 d)$ total insertion error, where $(\lambda_0,\lambda_1,\lambda_2) = (6.451, 1.457, 0.192)$. 
\end{restatable}

We solve for $\alpha^*$ in \eqref{eqn:binomial_qp} 
using \eqref{eqn:binomial_qp_optimal}
for $d\in \{2, \dots, 20\}$. To fit the exponential function, we fit a linear model to the log transform of the output which has high degree of fit (with a relative square error of -0.110), with the resulting exponential function shown in Figure~\ref{fig:polynomial-mse}.  


Additionally, the coefficients for the most critical exponential terms $\gamma_2$ and $\lambda_2$ are very close between mean absolute error and mean square error.
This shows that with both definitions of insertion/deletion errors we have consistent results.
}

\subsection{Discussion on Theoretical Proofs}
\label{app:proof_discuss}

While our theorems and that of \citet{Bilodeau2022ImpossibilityTF} both present impossibility results for feature attributions, the assumptions and resulting theorem are different.

\citet{Bilodeau2022ImpossibilityTF} put forth a result that says (put simply) that linear models cannot accurately capture complex models, where complexity is measured by having a large number of piece-wise linear components. Indeed, if we had shown that a linear model is not a good approximation of a highly non-linear model, then this would not be a novel contribution. This is also an unsurprising result (it is not surprising that a linear model cannot approximate a \textit{highly non-linear} model). 

However, our result paints a significantly bleaker picture: we show that a linear feature attribution is unable to model the extremely \textit{simple} functions in our theorems. Our examples distill the problem to the fundamental issue in its purest form: correlated features. Specifically, we show feature attribution is impossible with only \textit{one} group of correlated features. This is the \textbf{polar opposite} assumption than that of \citet{Bilodeau2022ImpossibilityTF}, and we argue that it is more surprising for feature attribution to be impossible for simpler functions than for complex functions. 

Second, we provide not only a negative impossibility result for standard feature attributions, but also a positive result for grouped feature attributions that provides a path forward and motivates the approach in our submission. This is in contrast to \citet{Bilodeau2022ImpossibilityTF}, which only presents negative impossibility results in standard feature attributions without clear suggestions on where to go. 

In summary, our theoretical results differ in 
Assumption (we assume simple functions with a single correlation whereas \citet{Bilodeau2022ImpossibilityTF} assume complex functions with many piece-wise linearities)
Theoretical results (we show both positive and negative results, whereas \citet{Bilodeau2022ImpossibilityTF} only show negative results).

\section{Method Details}
\label{app:method}

\subsection{Group Generator Details}
\label{app:groupgen}

The group generator $\Gamma:\mathbb R^d \rightarrow \{0, 1\}^{m\times d}$ takes in an input $x \in\mathbb{R}^d$ and outputs $m$ groups $(g_1, \dots, g_m)$,
Each $g_i \in \{0, 1\}^d$ is a binary mask such that if the feature $j$ is included in group $g_i$, then $g_{ij}=1$, otherwise $0$.
\begin{align}
    \Gamma(x) &= (g_1, \dots, g_m) 
\end{align}
To generate these groups, we first project the input $x$ into embedding dimension $e$ with embedding function $h_e:\mathbb{R}^d\rightarrow \mathbb{R}^{d\times k}$.
In our experiments, we typically use up to the second to last layer in the backbone model as the embedding function to obtain the contextualized embedding of each feature, and we finetune this copy of projection layer while keeping the original backbone model unchanged.

\paragraph{Soft Multiheaded Self-Attention.}
Then, we use a self-attention mechanism $a:\mathbb{R}^{d\times e}\rightarrow [0, 1]^{d\times d}$ \citep{vaswani2017attention} to parameterize a probability distributions over features. 
\begin{gather}
    a(x) = \mathrm{softmax}\left(\alpha \cdot h_e(x)W_Q(h_e(x)W_K )^\top \right)
    \label{eq:groupgen-attn-long}
\end{gather}
where $W_Q, W_K\in \mathbb{R}^k$ are learned parameters, and $\alpha$ is a temperature scaling hyperparameter.
In practice, one might not need that many groups and can use $m<d$ groups, resulting in $a(x)\in[0,1]^{m\times d}$.
However, the outputs of self-attention are continuous and dense.
To make groups sparse, we binarize the attention by taking top $\tau=0.2$ features for each group.
Each group $g_i$ is then the binarized $a(x)$ where $g_{ij}=1$ if $j$th feature is within top $\tau$ according to $a(x)$, and otherwise $g_{ij}=0$.

\paragraph{Input Embedding Encoder.}
In practice, the embedding function $h_e$ is initialized with parts of the backbone predictor $h$. The embedding function $h_e$ can be finetuned as part of the group generator while not changing the backbone predictor.

\paragraph{Binary Groups and the Risk of Out-of-Distribution Data.}
The groups (i.e. the image patch subsets or language token subsets) are binary. However, the scores for each group are real valued. Since we are using discrete groups of features for making each prediction, the group masks need to be binarized. Nonbinary groups (i.e. close to zero but non-zero) will have information leakage from the nonzero input features.

A potential risk of blacking out pixels is that it will create Out-of-Distribution (OOD) data. 
However, modern Transformers are not significantly affected by this bias due to the significant data augmentations used during pretraining. For example, \citet{jain2022missingness} show that ResNet suffers more from masking out tokens while ViT mostly is able to maintain its original prediction even when some parts are blacked out. In our ImageNet and MultiRC experiments, we use these transformer based models, and all baselines use the same transformer in fairness, except XDNN, BCos and BagNet which do not have Transformer counterparts. For CosmoGrid, though we are using a CNN-based model, the data form is very different from natural images, and zeroing parts of the map is not as OOD as it would be for natural images.

\subsection{Group Selector Details}
\label{app:groupsel}

\paragraph{Sparse Multiheaded Cross-Attention.}

The group selector $\theta:\{0, 1\}^{m\times d} \times \mathbb{R}^{d}\rightarrow [0, 1]^{m}$ then assigns a weight to each group such that a sparse subset of the groups get nonzero weights.
The scores $(c_1, \dots, c_m)$ are produced by a cross attention using the target class's weights $C_h\in\mathbb{R}^{k}$ as the query and all groups' hidden states $z =(h_h(g_1 \odot x), \dots, h_h(g_m \odot x)) \in \mathbb{R}^{m\times k}$ as the key, where $h_h:\mathbb{R}^d \rightarrow \mathbb{R}^k$ is usually the same backbone as $h$ but outputs the hidden states with hidden dimension $k$:
\begin{gather}
    \theta(\Gamma(x), x) = (c_1, \dots, c_m) = \mathrm{sparsemax} \left( \beta \cdot C_h W_{Q'}(zW_{K'})^\top \right)
    \label{eq:groupsel-long}
\end{gather}
where $W_{Q'}, W_{K'}\in \mathbb{R}^{k\times k}$ and $C_h\in \mathbb{R}^k$  are learned parameters, and $\beta$ is a temperature scaling factor.

Having scores on a sparse subset of groups helps avoid overloading human users.
As the outputs of softmax are continuous and dense, we use a sparse variant, the sparsemax operator \citep{Martins2016FromST}, to assign scores to the groups.
The sparsemax operator uses a simplex projection to make the attention weights sparse and only assigns nonzero scores to a few groups.

In practice, we can initialize the query $C_h$ to a row of the weight matrix in the linear classifier of the pretrained model $h$, and $W_{Q'}$ and $W_{K'}$ to the identity matrix, since the pretrained model already learns a relation between the hidden states $z$ and class weights $C_h$.

\paragraph{Hidden State Encoder.}
In practice, the embedding function $h_e$, and encoder for hidden states $h_h$ are both initialized as parts of the backbone predictor $h$. The embedding function $h_e$ can be finetuned as part of the group generator, while the hidden state encoder $h_h$ is always kept frozen and computed at the same time of making the prediction with $h$. Therefore, $h_h$ does not incur additional forward passes.

\subsection{Scaling the Loss}
\label{app:scale}

Since the gradient cannot pass through the thresholded binary mask, we use a scale $\lambda\in [-1, 1]$ that is the difference between sums of attention weights of selected and unselected features and multiply that to the logits for optimization.
\begin{gather}
    \lambda_i = \sum_{j=1}^d a(x)_{ij}^{\text{top\;}\tau} - \sum_{j=1}^d a(x)_{ij}^{\lnot \text{top\;}\tau}, \quad i = 1, \dots, m
\end{gather}

The gradient-passing scales $(\lambda_1, \dots, \lambda_m)$, the group scores $(c_1, \dots, c_m)$, and predictions $(y_1, \dots, y_m)$ are then combined with a weighted average to make the final prediction
\begin{gather}
    f(x) = y = \lambda_1 c_1 y_1 + \dots +  \lambda_m c_m y_m
\end{gather}

\paragraph{Classification.}
For classification, we multiply the scaler $\lambda$ on the logits.
If the prediction from group $i$ is prefered, then $\lambda_i$ will also increase and thus up-weigh the selected features in the group.
Conversely, if the prediction from group $i$ is not prefered, then $\lambda_i$ will decrease and thus down-weigh the selected features and up-weigh other features in the group.
\begin{equation}
\begin{aligned}
    \mathrm{Scaled\;CrossEntropyLoss}(\vec{y},y^*, \vec{c}, \vec{\lambda}) 
    = \mathrm{CrossEntropyLoss}(\sum_{i=1}^d \lambda_i c_i  y_i,y^*)
\end{aligned}
\end{equation}
where $y_i$ is the predicted logit from each group and $y^*$ is the ground truth label.


\paragraph{Regression.}
For regression, we multiply the negated scaler on the loss instead of the logits because we need to scale the loss instead of the absolute value.
\begin{equation}
    \mathrm{Scaled\;MSELoss}(\vec{y},y^*, \vec{c}, \vec{\lambda}) = \min (\sum \sum_i c_i (y_i - y^*)(- \lambda_i))^2
\end{equation}

\section{Experiments Details}
\label{app:experiments}
\subsection{Datasets}
\label{app:datasets}
We evaluate \wrapper{} on two vision tasks and one language task. For vision, we use ImageNet for image classification and CosmoGrid for image regression. For language, we use MultiRC for reading comprehension framed as binary classification.

\subsubsection{ImageNet and ImageNet-S}
ImageNet \citep{ILSVRC15} is a standard image classification benchmark with 1000 classes.
\begin{itemize}
    \item \textbf{Task:} Image classification.
    \item \textbf{Backbone Model:} We use a Vision Transformer (ViT) \citep{dosovitskiy2021an} (``google/vit-base-patch16-224'')\footnote{\url{https://huggingface.co/google/vit-base-patch16-224}} pretrained on ImageNet 21k and finetuned on ImageNet 1k. For the projection layer in the group generator, we use the second-to-last layer of this ViT model.
    \item \textbf{Input Features:} We use patches of size $3\times 16\times 16$.
    \item \textbf{Ground-Truth Segmentation for Purity:} We use ImageNet-S~\citep{imagenet-s}, which contains ground-truth object segment annotations for 919 classes of ImageNet. Purity scores are computed using a subset of the ImageNet validation set, with experiments on ImageNet-S done with one example for each class.
    \item \textbf{License:} ImageNet permits non-commercial use.
\end{itemize}

\subsubsection{CosmoGrid}
CosmoGridV1 \citep{cosmogrid1} is an image regression dataset for predicting two cosmological parameters related to the initial state of the universe: $\Omega_m$ (related to energy density) and $\sigma_8$ (related to matter fluctuation)~\citep{Abbott_2022}. The task uses weak lensing maps, which are projected distributions of galaxy masses onto a 2D image~\citep{y3-shapecatalog,y3-massmapping}.

The underlying data comes from the CosmoGridV1 suite, a set of cosmological N-body simulations produced with PKDGRAV3, a high-performance N-body treecode for self-gravitating astrophysical simulations. These simulations span different cosmological parameters, including $\Omega_m$ and $\sigma_8$. The output of the simulations are snapshots representing the distribution of matter particles as a function of position on the sky at different cosmic times (representing different distances from the observer). These simulation outputs are then post-processed to produce weak lensing mass maps, which are weighted, projected maps of the mass distribution that can be estimated from current weak lensing observations (e.g., \cite{y3-massmapping}). The data we use follows the post-processing from \citet{y3-massmapping}, and we refer to it as CosmoGrid.

\begin{itemize}
    \item \textbf{Task:} Image regression to predict cosmological parameters $\Omega_m$ and $\sigma_8$. Input size: $(1\times 66\times 66)$, output size: $2$.
    \item \textbf{Backbone Model:} We use a CNN \citep{matilla2020weaklensing} trained on the CosmoGrid training set.
    We use the second to last layer's output as hidden states, which has the dimension $14\times 14\times 32$, meaning that the image of size $66\times 66$ is divided into $14\times 14$ grids, each grid with approximately $5\times 5$ or $4\times 4$ pixels.
    \item \textbf{License:} CosmoGrid \citep{cosmogrid1}\footnote{\url{http://www.cosmogrid.ai/}} has a CC BY 4.0 DEED license.
\end{itemize}

\subsubsection{MultiRC}
MultiRC~\citep{MultiRC2018}, sourced from the ERASER benchmark~\citep{deyoung-etal-2020-eraser}, is a reading comprehension dataset adapted for binary classification. Each example consists of a passage, a question, and an answer; the task is to predict if the answer is correct (\textit{True} or \textit{False}) based on the passage.
\begin{itemize}
    \item \textbf{Task:} Binary classification for reading comprehension.
    \item \textbf{Input Features:} We use tokens as individual features.
    \item \textbf{Ground-Truth Annotations for Purity:} For text experiments, we use annotations from the ERASER benchmark~\citep{deyoung-etal-2020-eraser} for purity computation.
\end{itemize}

\subsection{Model Training}
\label{app:model_training}
\subsubsection{Backbone Models}
We use Vision Transformer \citep{dosovitskiy2021an} (google/vit-base-patch16-224) for the ImageNet backbone.
For CosmoGrid experiments, we use postprocessed data from \citet{y3-massmapping} and a CNN  \citep{matilla2020weaklensing} trained on the training set of the CosmoGrid backbone.
{
\color{myblue}
We use BERT~\citep{devlin-etal-2019-bert} (bert-base-uncased) finetuned on MultiRC for the text backbone.
}

\subsubsection{\wrapper{} Models}
For \wrapper{} model training, we use learning rate of 5e-6 for all experiments.
For ImageNet-S and MultiRC, we use one head in the attention in the group generator.
For CosmoGrid, we use four heads.
We observe that using more heads result in more diverse groups, while the groups generated based on queries from the same head are similar.
For all experiments, we select only 20 groups from all the possible groups, by choosing attentions created by queries spaced out evenly.
We find that we do not need more than 20 groups for good performance.
For ImageNet, we train for 1 epoch, but taking the checkpoint at 0.5 epoch as it already converges, while we train for 3 epochs for CosmoGrid and 20 epochs for MultiRC.
For ImageNet, we use Gaussian blurring of kernel size 5 on the patches, and kernel size 15 on MultiRC, so that we obtain more contiguous groups.
For CosmoGrid though, we do not use Gaussian blur because we do not assume that connected groups are better for weak lensing maps.

\subsubsection{Compute Resources}
We use NVIDIA A100 with 80G memory and NVIDIA A6000 with 48G memory on our internal cluster for experiments.
Typically, one training for ImageNet finish in a day with one epoch.
The training for CosmoGrid finish in 2 hours and converge in one epoch.
The training for MultiRC also converges in one epoch.
The evaluation takes around 5 minutes for each method.
There were preliminary experiments that take more time to debug and modify the architecture.

\subsection{Baselines}
\label{app:baselines}

\begin{table*}[t]
\small
\centering
\begin{tabular}{llcccl}
  \toprule
  Method & Abbr & Model-Agnostic & Self-Attributing & Learnable Groups & Self-Attributing Ver.\\
  \midrule
  LIME~\citep{Ribeiro2016WhySI} & LIME & \cmark & \xmark & \xmark & LIME-F \\
  SHAP~\citep{shap} & SHAP & \cmark & \xmark & \xmark & SHAP-F \\
  IntGrad~\citep{intgrad} & IG & \cmark & \xmark & \xmark & IG-F \\
  GradCAM~\citep{gradcam} & GC & \cmark & \xmark & \xmark & GC-F \\
  FullGrad~\citep{srinivas2019fullgrad} & FG & \cmark & \xmark & \xmark & FG-F \\
  RISE~\citep{Petsiuk2018RISERI} & RISE & \cmark & \xmark & \xmark & RISE-F \\
  Archipelago~\citep{tsang2020how} & Archi. & \cmark & \xmark & \xmark & Archi.-F \\
  MFABA~\citep{zhu2023mfaba} & MFABA & \cmark & \xmark & \xmark & MFABA-F \\
  AGI~\citep{agi} & AGI & \cmark & \xmark & \xmark & AGI-F \\
  AMPE~\citep{zhu2023attexplore} & AMPE & \cmark & \xmark & \xmark & AMPE-F \\
  BCos~\citep{bcos} & BCos & \xmark & \xmark & \xmark & BCos-F \\
  XDNN~\citep{xdnn} & XDNN & \xmark & \cmark & \xmark & XDNN \\
  BagNet~\citep{brendel2018bagnets} & BagNet & \xmark & \cmark & \xmark & BagNet \\
  FRESH~\citep{Jain2020LearningTF} & FRESH & \xmark & \cmark & \xmark & FRESH \\
  SOP (ours) & SOP & \cmark & \cmark & \cmark & SOP \\
  \bottomrule
\end{tabular}
\caption{Properties of all post-hoc feature attributions and self-attributing neural networks we use. \wrapper{} is the only attribution method that is both model agnostic and self-attributing and has learnable groups.} 
\label{tab:compare_baselines}
\end{table*}

As we are building a new type of self-attributing neural networks, we compare with self-attributing neural networks that attribute to input features for all main experiments.
The baselines we compare with are either already self-attributing neural networks, or converted from post-hoc attributions.
The models that already fall under the class of self-attributing neural networks are XDNN~\citep{xdnn}, BagNet~\citep{brendel2018bagnets}, FRESH~\citep{Jain2020LearningTF} and \wrapper{} (ours).
For post-hoc baselines, we construct a self-explaining version by passing thresholded post-hoc attributions into the backbone model to make predictions, following the framework of FRESH~\citep{Jain2020LearningTF}.
We show all the methods we use in Table~\ref{tab:compare_baselines}.
We can see that \wrapper{} is the only method that is both model-agnostic and a self-attributing neural network.

\paragraph{Self-Attributing Neural Networks.}
For faithful baseline, the closest previous work we can compare with is FRESH \citep{Jain2020LearningTF}.
FRESH builds a two-stage system for faithful explanations for text.
It takes the attention output from a pretrained transformer, and binarize it by taking top $\tau=0.2$, select the input tokens based on the binary mask, and then train another transformer to take the selected group of input tokens and make prediction only based on the group.

In FRESH, they train a separate model to predict with the selected group, while we freeze the pretrained backbone and train the group generator and group selector end-to-end for prediction.
Since we build attentions outside the backbone model, we are able to extract groups with any pretrained backbone model (such as both transformer and CNN), instead of only transformer-based models.
FRESH \citep{Jain2020LearningTF} trains two separate components for group generation and the backbone.
We assume that we don't change the backbone model, thus we only compare with a similar version to FRESH where we take the attention from the Vision Transformer and pass directly into the backbone model.

Although NAM \citep{agarwal2021neural} and other faithful models also have faithful attributions to individual features, they require a separately trained submodule for each feature.
For example, BagNet~\citep{brendel2018bagnets} uses group attribution on fixed patches, while XDNN~\citep{xdnn} attributes to pixels.
When there are existing trained specialized models for a dataset (e.g. ImageNet), then we compare with them.
However, when there is no such trained model for a new dataset (e.g. CosmoGrid), we only compare with methods that can utilize already trained models.
This is because it is infeasible to train a separate model from scratch for all different tasks.

\paragraph{Post-hoc-converted Faithful Model.}
We then compile a series of FRESH-like baselines, which take the attribution scores from different post-hoc methods, including attention, and take top $\tau=0.2$ following \citet{Jain2020LearningTF}, and convert them into faithful explanations by making the prediction only based on these groups.
The post-hoc attributions are from LIME \citep{Ribeiro2016WhySI}, SHAP \citep{shap}, IntGrad  \citep{intgrad}, GradCAM \citep{gradcam}, FullGrad \citep{srinivas2019fullgrad}, RISE \citep{Petsiuk2018RISERI}, Archipelago \citep{tsang2020how}, MFABA~\citep{zhu2023mfaba}, AGI~\citep{agi}, AMPE~\citep{zhu2023attexplore}, among which Archipelago already produces group attribution and we take the top predicted groups.
We add ``-F'' after the name of each baseline to indicate that this is the faithful version using only the attributions for prediction, to differentiate with the original post-hoc methods.
For \wrapper{}, we also take top $\tau=0.2$ for each group.

\paragraph{Other Baseline Details.}
The baselines BCos, XDNN, and BagNet all depend on specific models and cannot be applied to any model backbone directly.
For BCos, we use the simple\_vit\_b\_patch16\_224 model that is a Vision Transformer model with the linear transformations replaced by their B-cos transformation.
For XDNN, they remove the bias turn in AlexNet, VGG16 and ResNet50.
As they do not have available version for Vision Transformers, we use their trained xfixup\_resnet50 model for ResNet50 for our experiments.
FRESH also depends on the attention mechanism inside the Transformer architecture, requiring the model backbone to be a Transformer model, and thus we use the attention from Vision Transformer for ImageNet experiments.
BagNet also depends on its specific CNN and cannot be applied to existing trained models.
Therefore, we only compare with BCos, XDNN, BagNet and FRESH for ImageNet and only compare with other model-agnostic baselines for CosmoGrid experiment.

\subsection{Evaluation}
\label{app:eval}

\subsubsection{Semantic Coherence}
\label{app:semantic}

\paragraph{Intersection-over-Union (IOU) for ImageNet-S. and MultiRC}
For ImageNet-S~\citep{imagenet-s}, there are ground truth segmentations for a subset of images in ImageNet.
For MultiRC~\citep{MultiRC2018}, there are ground truth human annotated explanations.
We measure intersection-over-union (IOU) of the group with the ground truth annotations (object for ImageNet-S and explanation for MultiRC).
Purity of a group $g_i\in\{0, 1\}^d$ with respect to the ground truth group $\varphi\in\{0, 1\}^d$ is then
\begin{gather}
    \mathrm{Purity}_{\text{MultiRC}}(g_i, \varphi) = 1 - \frac{|g_i \cap \varphi|}{|g_i \cup \varphi|})
\end{gather}

\paragraph{Threshold-based Purity for CosmoGrid.}
In our collaboration with cosmologists, we identified two cosmological structures learned in our group attributions: voids and clusters. 
Voids are large regions that are under-dense relative to the mean density and appear as dark regions in the weak lensing mass maps, whereas clusters are areas of concentrated high density and appear as bright dots. 
As there are no ground-truth segments for voids and clusters, we use a proxy function from our collaborator cosmologists to compute how much of a void or cluster a certain group is.
In this section, we describe how we extracted void and cluster labels from the group attributions. 

Let $S$ be a group from \wrapper{} when making predictions for an input $x$. 
Previous work \citep{matilla2020weaklensing} defined a cluster as a region with a mean intensity of greater than $+3 \sigma$, where $\sigma$ is the standard deviation of the intensity for each weak lensing map. This provides a natural threshold for our groups: we can identify groups containing clusters as those whose features have a mean intensity of $+3 \sigma$. Specifically, we calculate 
$$\textrm{Intensity}(x,S) = \frac{1}{|S|}\sum_{i : S_i > 0} x_i $$
Then, a group $S$ is labeled as a void if $\textrm{Intensity}(x,S) \geq 3\sigma$. Similarly, \citet{matilla2020weaklensing} define a void as a region with mean intensity less than $0$. Then, a group $S$ is labeled as a cluster if $\textrm{Intensity}(x,S) < 0$. 

In consultation with our cosmologists collaborators, we refine the criteria in the main paper to not just use the mean intensity.


The alignment of a group $g_i$ to void is the percentage of pixels in the group that are below 0, given the mass $g_i^\top x$ is below 0.
The alignment of a group $g_i$ to cluster is the percentage of pixels in the group that are above 3 standard deviation of $x$'s pixel values.
\begin{gather}
\begin{aligned}
    \mathrm{Align}_{\text{void}}(g_i, x) &= \frac{| g_i \odot x < 0|}{|g_i|} \cdot \mathbbm{1}[g_i^\top x < 0],\quad \\
    \mathrm{Align}_{\text{cluster}}(g_i, x) &= \frac{|g_i \odot x > 3\sigma(x)|}{|g_i|}
\end{aligned}
\end{gather}
The purity is computed as the mean of void alignment and cluster alignment scores that are above thresholds $\tau_v$ and $\tau_c$ and averaged for the two predictions \Om{} and \seight{}.
\begin{gather}
\begin{aligned}
    \mathrm{Purity}_{\text{CosmoGrid}}(g_i, x) &= \mathbbm{1}[\mathrm{Align}_{\text{void}}(g_i, x) > \tau_v] \\
    &+  \mathbbm{1}[\mathrm{Align}_{\text{cluster}}(g_i, x) > \tau_c]
\end{aligned}
\end{gather}




\paragraph{CosmoGrid Purity Threshold Ablations.}
In Table~\ref{tab:main_table}, we evaluate CosmoGrid purity with $\tau_v = 0.6$ and $\tau_c = 0.015$, so that we say a group is a void if more than 60\% of its pixels are below 0 and the total mass is below 0, and a group is a cluster if more than 1.5\% of its pixels are above $3\sigma(x)$.

We show more ablation of how changing the hyperparameters in CosmoGrid purity affects the results in Figure~\ref{fig:mse_loss_vs_purity_ablation}. The result is that SOP consistently has better purity and performance trade-off than other baselines across all varying levels of $\tau_c$ and $\tau_v$.

\begin{figure}[t]
    \centering
    \begin{subfigure}[b]{0.24\textwidth}
        \centering
        \includegraphics[width=\textwidth]{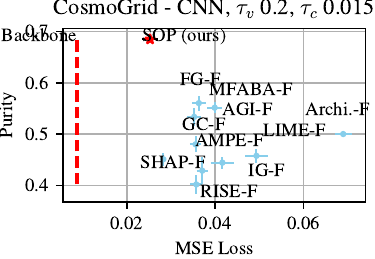}
        \label{fig:tv0.2_tc0.015}
    \end{subfigure}
    \hfill
    \begin{subfigure}[b]{0.24\textwidth}
        \centering
        \includegraphics[width=\textwidth]{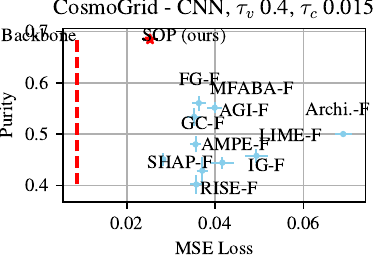}
        \label{fig:tv0.4_tc0.015}
    \end{subfigure}
    \hfill
    \begin{subfigure}[b]{0.24\textwidth}
        \centering
        \includegraphics[width=\textwidth]{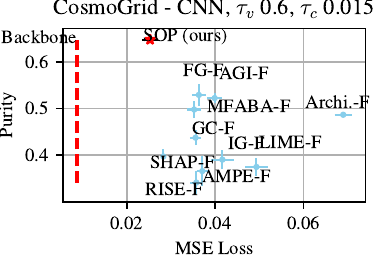}
        \label{fig:tv0.6_tc0.015}
    \end{subfigure}
    \hfill
    \begin{subfigure}[b]{0.24\textwidth}
        \centering
        \includegraphics[width=\textwidth]{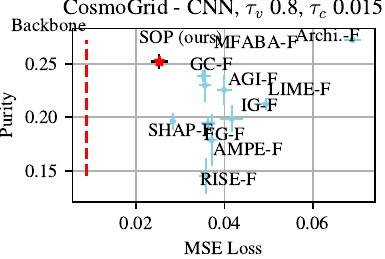}
        \label{fig:tv0.8_tc0.015}
    \end{subfigure}
    \hfill

    \centering
    \begin{subfigure}[b]{0.24\textwidth}
        \centering
        \includegraphics[width=\textwidth]{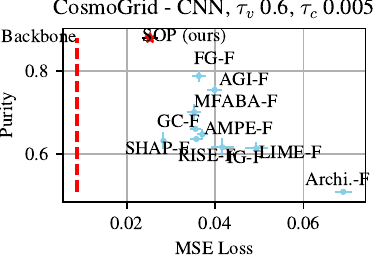}
        \label{fig:tv0.6_tc0.005}
    \end{subfigure}
    \hfill
    \begin{subfigure}[b]{0.24\textwidth}
        \centering
        \includegraphics[width=\textwidth]{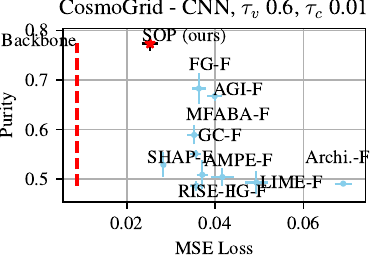}
        \label{fig:tv0.6_tc0.01}
    \end{subfigure}
    \hfill
    \begin{subfigure}[b]{0.24\textwidth}
        \centering
        \includegraphics[width=\textwidth]{neurips2024_imgs/mse_loss_vs_purity_cosmogrid_ablation/mse_loss_vs_purity_cosmogrid_tv0.6_tc0.015.pdf}
        \label{fig:tv0.6_tc0.015b}
    \end{subfigure}
    \hfill
    \begin{subfigure}[b]{0.24\textwidth}
        \centering
        \includegraphics[width=\textwidth]{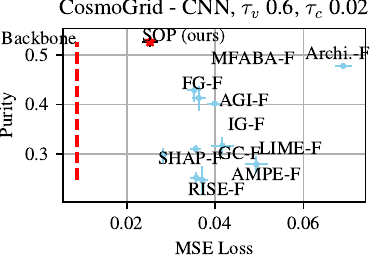}
        \label{fig:tv0.6_tc0.02}
    \end{subfigure}
    \caption{(CosmoGrid Purity Threshold Ablation) Ablation of void/cluster thresholds $\tau_v$ and $\tau_c$ for MSE Loss vs Purity. The first four plots show that when we fix the $\tau_c=0.015$, SOP has better purity until $\tau_v=0.8$, and the later four plots show that when we fix the $\tau_v=0.6$, SOP consistently have the best pareto frontier than all baselines.
SOP is the best on average.}
    \label{fig:mse_loss_vs_purity_ablation}
    \foo{}
\end{figure}

\paragraph{Additional Examples}
We show additional examples in Figure~\ref{fig:eg-2}, \ref{fig:eg-4}, \ref{fig:eg-5}, \ref{fig:eg-6}, \ref{fig:eg-7}.
The groups obtained by \wrapper{} are the most semantically localized and coherent, and thus easy to interpret.

\begin{figure*}[t]
    \centering
    \begin{subfigure}[t]{0.112\textwidth}
        \includegraphics[width=\linewidth]{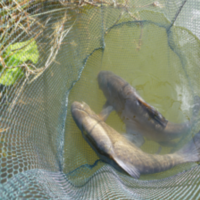}
        \captionsetup{justification=centering}
        \caption{\scriptsize Image: Tench}
    \end{subfigure}
    \hfill
    \begin{subfigure}[t]{0.112\textwidth}
        \includegraphics[width=\linewidth]{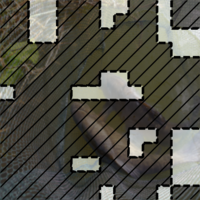}
        \captionsetup{justification=centering}
        \caption{\scriptsize LIME-F \\ \cite{Ribeiro2016WhySI}}
    \end{subfigure}
    \hfill
    \begin{subfigure}[t]{0.112\textwidth}
        \includegraphics[width=\linewidth]{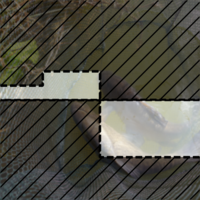}
        \captionsetup{justification=centering}
        \caption{\scriptsize SHAP-F \\ \cite{shap}}
    \end{subfigure}
    \hfill
    \begin{subfigure}[t]{0.112\textwidth}
        \includegraphics[width=\linewidth]{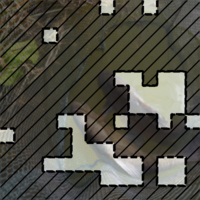}
        \captionsetup{justification=centering}
        \caption{\scriptsize IG-F \\ \cite{intgrad}}
    \end{subfigure}
    \hfill
    \begin{subfigure}[t]{0.112\textwidth}
        \includegraphics[width=\linewidth]{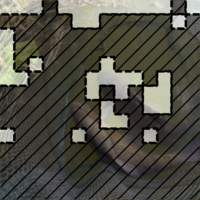}
        \captionsetup{justification=centering}
        \caption{\scriptsize GC-F \\ \cite{gradcam}}
    \end{subfigure}
    \hfill
    \begin{subfigure}[t]{0.112\textwidth}
        \includegraphics[width=\linewidth]{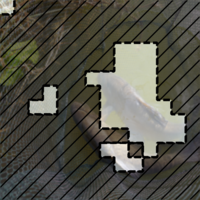}
        \captionsetup{justification=centering}
        \caption{\scriptsize FG-F \\ \cite{srinivas2019fullgrad}}
    \end{subfigure}
    \hfill
    \begin{subfigure}[t]{0.112\textwidth}
        \includegraphics[width=\linewidth]{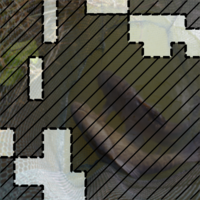}
        \captionsetup{justification=centering}
        \caption{\scriptsize RISE-F \\ \cite{Petsiuk2018RISERI}}
    \end{subfigure}
    \hfill
    \begin{subfigure}[t]{0.112\textwidth}
        \includegraphics[width=\linewidth]{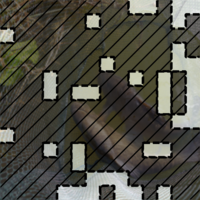}
        \captionsetup{justification=centering}
        \caption{\scriptsize Archi.-F \\ \cite{tsang2020how}}
    \end{subfigure}
    \hfill 

    \begin{subfigure}[t]{0.112\textwidth}
        \includegraphics[width=\linewidth]{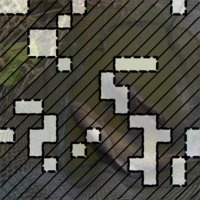}
        \captionsetup{justification=centering}
        \caption{\scriptsize MFABA-F \\ \cite{zhu2023mfaba}}
    \end{subfigure}
    \hfill
    \begin{subfigure}[t]{0.112\textwidth}
        \includegraphics[width=\linewidth]{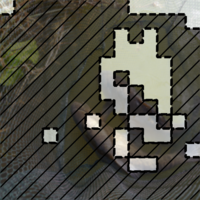}
        \captionsetup{justification=centering}
        \caption{\scriptsize AGI-F \\ \cite{agi}}
    \end{subfigure}
    \hfill
    \begin{subfigure}[t]{0.112\textwidth}
        \includegraphics[width=\linewidth]{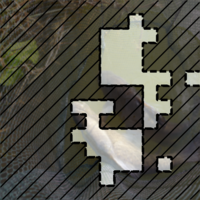}
        \captionsetup{justification=centering}
        \caption{\scriptsize AMPE-F \\ \cite{zhu2023attexplore}}
    \end{subfigure}
    \hfill
    \begin{subfigure}[t]{0.112\textwidth}
        \includegraphics[width=\linewidth]{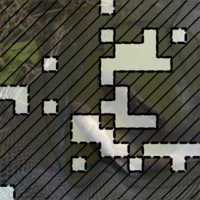}
        \captionsetup{justification=centering}
        \caption{\scriptsize BCos-F \\ \cite{bcos}}
    \end{subfigure}
    \hfill
    \begin{subfigure}[t]{0.112\textwidth}
        \includegraphics[width=\linewidth]{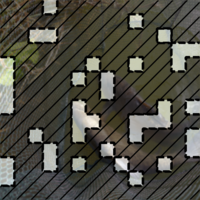}
        \captionsetup{justification=centering}
        \caption{\scriptsize XDNN \\ \cite{xdnn}}
    \end{subfigure}
    \hfill
    \begin{subfigure}[t]{0.112\textwidth}
        \includegraphics[width=\linewidth]{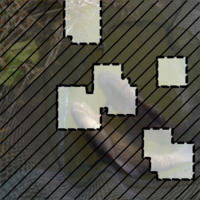}
        \captionsetup{justification=centering}
        \caption{\scriptsize BagNet \\ \cite{brendel2018bagnets}}
    \end{subfigure}
    \hfill
    \begin{subfigure}[t]{0.112\textwidth}
        \includegraphics[width=\linewidth]{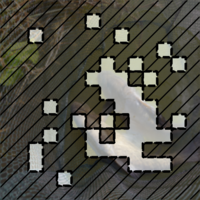}
        \captionsetup{justification=centering}
        \caption{\scriptsize FRESH \\ \cite{Jain2020LearningTF}}
    \end{subfigure}
    \hfill
    \begin{subfigure}[t]{0.112\textwidth}
        \includegraphics[width=\linewidth]{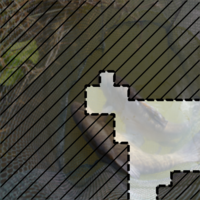}
        \captionsetup{justification=centering}
        \caption{\scriptsize \textbf{SOP (Ours)}}
    \end{subfigure}
    \hfill 
    \foo{}
    \caption{Example groups from different feature attribution methods for an image of Tench. ``-F'' indicates self-attributing models converted from post-hoc methods. The highlights show the groups selected by each method (top 20\% attributed patches), with unused patches hatched-out.}
    \label{fig:eg-2}
    \foo{}
\end{figure*}
\begin{figure*}[t]
    \centering
    \begin{subfigure}[t]{0.112\textwidth}
        \includegraphics[width=\linewidth]{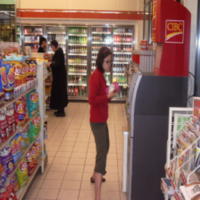}
        \captionsetup{justification=centering}
        \caption{\scriptsize Image: Grocery store}
    \end{subfigure}
    \hfill
    \begin{subfigure}[t]{0.112\textwidth}
        \includegraphics[width=\linewidth]{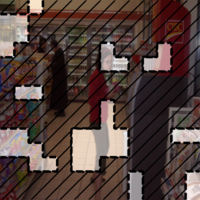}
        \captionsetup{justification=centering}
        \caption{\scriptsize LIME-F \\ \cite{Ribeiro2016WhySI}}
    \end{subfigure}
    \hfill
    \begin{subfigure}[t]{0.112\textwidth}
        \includegraphics[width=\linewidth]{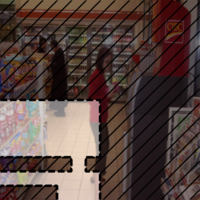}
        \captionsetup{justification=centering}
        \caption{\scriptsize SHAP-F \\ \cite{shap}}
    \end{subfigure}
    \hfill
    \begin{subfigure}[t]{0.112\textwidth}
        \includegraphics[width=\linewidth]{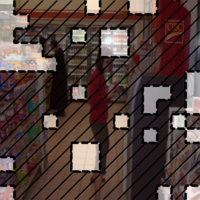}
        \captionsetup{justification=centering}
        \caption{\scriptsize IG-F \\ \cite{intgrad}}
    \end{subfigure}
    \hfill
    \begin{subfigure}[t]{0.112\textwidth}
        \includegraphics[width=\linewidth]{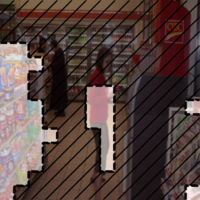}
        \captionsetup{justification=centering}
        \caption{\scriptsize GC-F \\ \cite{gradcam}}
    \end{subfigure}
    \hfill
    \begin{subfigure}[t]{0.112\textwidth}
        \includegraphics[width=\linewidth]{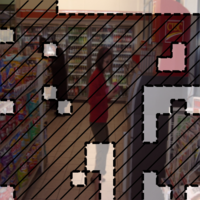}
        \captionsetup{justification=centering}
        \caption{\scriptsize FG-F \\ \cite{srinivas2019fullgrad}}
    \end{subfigure}
    \hfill
    \begin{subfigure}[t]{0.112\textwidth}
        \includegraphics[width=\linewidth]{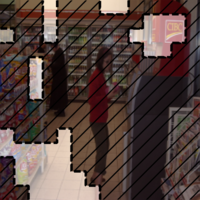}
        \captionsetup{justification=centering}
        \caption{\scriptsize RISE-F \\ \cite{Petsiuk2018RISERI}}
    \end{subfigure}
    \hfill
    \begin{subfigure}[t]{0.112\textwidth}
        \includegraphics[width=\linewidth]{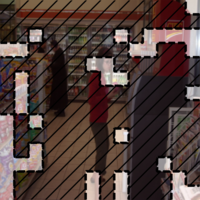}
        \captionsetup{justification=centering}
        \caption{\scriptsize Archi.-F \\ \cite{tsang2020how}}
    \end{subfigure}
    \hfill

    \begin{subfigure}[t]{0.112\textwidth}
        \includegraphics[width=\linewidth]{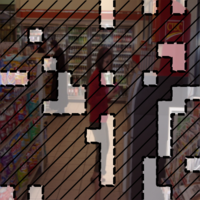}
        \captionsetup{justification=centering}
        \caption{\scriptsize MFABA-F \\ \cite{zhu2023mfaba}}
    \end{subfigure}
    \hfill
    \begin{subfigure}[t]{0.112\textwidth}
        \includegraphics[width=\linewidth]{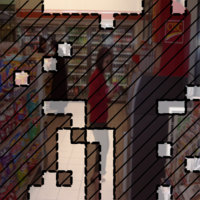}
        \captionsetup{justification=centering}
        \caption{\scriptsize AGI-F \\ \cite{agi}}
    \end{subfigure}
    \hfill
    \begin{subfigure}[t]{0.112\textwidth}
        \includegraphics[width=\linewidth]{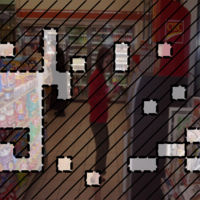}
        \captionsetup{justification=centering}
        \caption{\scriptsize AMPE-F \\ \cite{zhu2023attexplore}}
    \end{subfigure}
    \hfill
    \begin{subfigure}[t]{0.112\textwidth}
        \includegraphics[width=\linewidth]{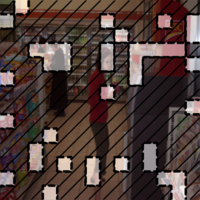}
        \captionsetup{justification=centering}
        \caption{\scriptsize BCos-F \\ \cite{bcos}}
    \end{subfigure}
    \hfill
    \begin{subfigure}[t]{0.112\textwidth}
        \includegraphics[width=\linewidth]{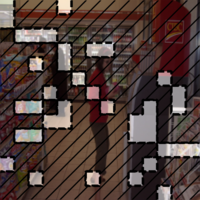}
        \captionsetup{justification=centering}
        \caption{\scriptsize XDNN \\ \cite{xdnn}}
    \end{subfigure}
    \hfill
    \begin{subfigure}[t]{0.112\textwidth}
        \includegraphics[width=\linewidth]{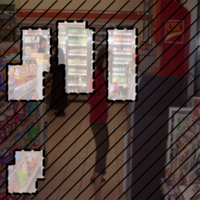}
        \captionsetup{justification=centering}
        \caption{\scriptsize BagNet \\ \cite{brendel2018bagnets}}
    \end{subfigure}
    \hfill
    \begin{subfigure}[t]{0.112\textwidth}
        \includegraphics[width=\linewidth]{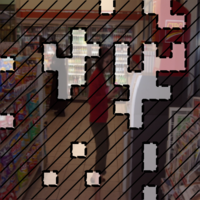}
        \captionsetup{justification=centering}
        \caption{\scriptsize FRESH \\ \cite{Jain2020LearningTF}}
    \end{subfigure}
    \hfill
    \begin{subfigure}[t]{0.112\textwidth}
        \includegraphics[width=\linewidth]{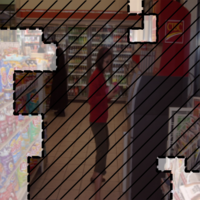}
        \captionsetup{justification=centering}
        \caption{\scriptsize \textbf{SOP (Ours)}}
    \end{subfigure}
    \hfill
    \foo{}
    \caption{Example groups from different feature attribution methods for an image of Grocery store. ``-F'' indicates self-attributing models converted from post-hoc methods. The highlights show the groups selected by each method (top 20\% attributed patches), with unused patches hatched-out.}
    \label{fig:eg-4}
    \foo{}
\end{figure*}
\begin{figure*}[t]
    \centering
    \begin{subfigure}[t]{0.112\textwidth}
        \includegraphics[width=\linewidth]{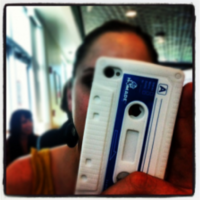}
        \captionsetup{justification=centering}
        \caption{\scriptsize Image: Cassette tape}
    \end{subfigure}
    \hfill
    \begin{subfigure}[t]{0.112\textwidth}
        \includegraphics[width=\linewidth]{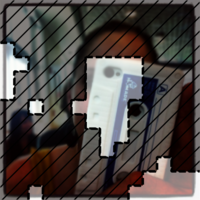}
        \captionsetup{justification=centering}
        \caption{\scriptsize LIME-F \\ \cite{Ribeiro2016WhySI}}
    \end{subfigure}
    \hfill
    \begin{subfigure}[t]{0.112\textwidth}
        \includegraphics[width=\linewidth]{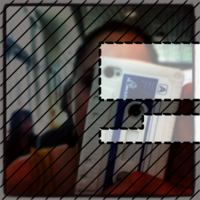}
        \captionsetup{justification=centering}
        \caption{\scriptsize SHAP-F \\ \cite{shap}}
    \end{subfigure}
    \hfill
    \begin{subfigure}[t]{0.112\textwidth}
        \includegraphics[width=\linewidth]{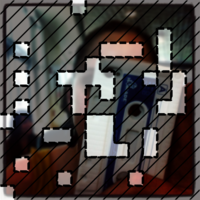}
        \captionsetup{justification=centering}
        \caption{\scriptsize IG-F \\ \cite{intgrad}}
    \end{subfigure}
    \hfill
    \begin{subfigure}[t]{0.112\textwidth}
        \includegraphics[width=\linewidth]{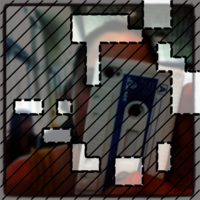}
        \captionsetup{justification=centering}
        \caption{\scriptsize GC-F \\ \cite{gradcam}}
    \end{subfigure}
    \hfill
    \begin{subfigure}[t]{0.112\textwidth}
        \includegraphics[width=\linewidth]{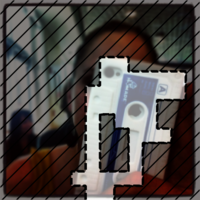}
        \captionsetup{justification=centering}
        \caption{\scriptsize FG-F \\ \cite{srinivas2019fullgrad}}
    \end{subfigure}
    \hfill
    \begin{subfigure}[t]{0.112\textwidth}
        \includegraphics[width=\linewidth]{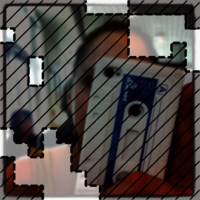}
        \captionsetup{justification=centering}
        \caption{\scriptsize RISE-F \\ \cite{Petsiuk2018RISERI}}
    \end{subfigure}
    \hfill
    \begin{subfigure}[t]{0.112\textwidth}
        \includegraphics[width=\linewidth]{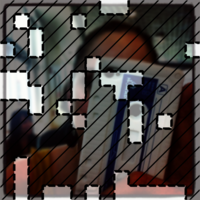}
        \captionsetup{justification=centering}
        \caption{\scriptsize Archi.-F \\ \cite{tsang2020how}}
    \end{subfigure}
    \hfill

    \begin{subfigure}[t]{0.112\textwidth}
        \includegraphics[width=\linewidth]{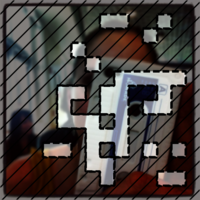}
        \captionsetup{justification=centering}
        \caption{\scriptsize MFABA-F \\ \cite{zhu2023mfaba}}
    \end{subfigure}
    \hfill
    \begin{subfigure}[t]{0.112\textwidth}
        \includegraphics[width=\linewidth]{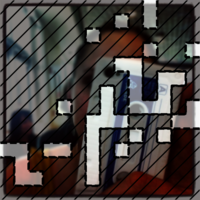}
        \captionsetup{justification=centering}
        \caption{\scriptsize AGI-F \\ \cite{agi}}
    \end{subfigure}
    \hfill
    \begin{subfigure}[t]{0.112\textwidth}
        \includegraphics[width=\linewidth]{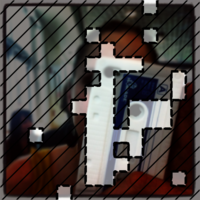}
        \captionsetup{justification=centering}
        \caption{\scriptsize AMPE-F \\ \cite{zhu2023attexplore}}
    \end{subfigure}
    \hfill
    \begin{subfigure}[t]{0.112\textwidth}
        \includegraphics[width=\linewidth]{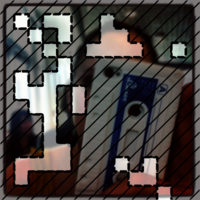}
        \captionsetup{justification=centering}
        \caption{\scriptsize BCos-F \\ \cite{bcos}}
    \end{subfigure}
    \hfill
    \begin{subfigure}[t]{0.112\textwidth}
        \includegraphics[width=\linewidth]{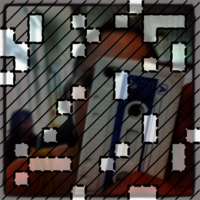}
        \captionsetup{justification=centering}
        \caption{\scriptsize XDNN \\ \cite{xdnn}}
    \end{subfigure}
    \hfill
    \begin{subfigure}[t]{0.112\textwidth}
        \includegraphics[width=\linewidth]{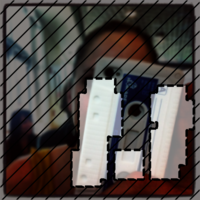}
        \captionsetup{justification=centering}
        \caption{\scriptsize BagNet \\ \cite{brendel2018bagnets}}
    \end{subfigure}
    \hfill
    \begin{subfigure}[t]{0.112\textwidth}
        \includegraphics[width=\linewidth]{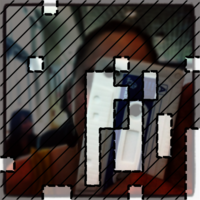}
        \captionsetup{justification=centering}
        \caption{\scriptsize FRESH \\ \cite{Jain2020LearningTF}}
    \end{subfigure}
    \hfill
    \begin{subfigure}[t]{0.112\textwidth}
        \includegraphics[width=\linewidth]{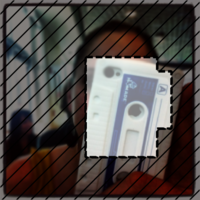}
        \captionsetup{justification=centering}
        \caption{\scriptsize \textbf{SOP (Ours)}}
    \end{subfigure}
    \hfill
    \foo{}
    \caption{Example groups from different feature attribution methods for an image of Cassette tape. ``-F'' indicates self-attributing models converted from post-hoc methods. The highlights show the groups selected by each method (top 20\% attributed patches), with unused patches hatched-out.}
    \label{fig:eg-5}
    \foo{}
\end{figure*}
\begin{figure*}[t]
    \centering
    \begin{subfigure}[t]{0.112\textwidth}
        \includegraphics[width=\linewidth]{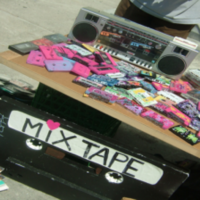}
        \captionsetup{justification=centering}
        \caption{\scriptsize Image: Cassette player}
    \end{subfigure}
    \hfill
    \begin{subfigure}[t]{0.112\textwidth}
        \includegraphics[width=\linewidth]{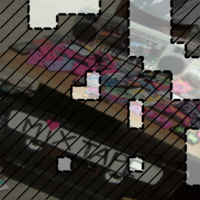}
        \captionsetup{justification=centering}
        \caption{\scriptsize LIME-F \\ \cite{Ribeiro2016WhySI}}
    \end{subfigure}
    \hfill
    \begin{subfigure}[t]{0.112\textwidth}
        \includegraphics[width=\linewidth]{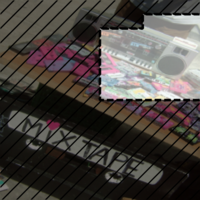}
        \captionsetup{justification=centering}
        \caption{\scriptsize SHAP-F \\ \cite{shap}}
    \end{subfigure}
    \begin{subfigure}[t]{0.112\textwidth}
        \includegraphics[width=\linewidth]{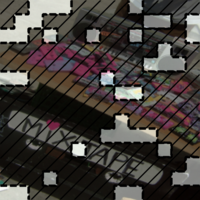}
        \captionsetup{justification=centering}
        \caption{\scriptsize IG-F \\ \cite{intgrad}}
    \end{subfigure}
    \hfill
    \begin{subfigure}[t]{0.112\textwidth}
        \includegraphics[width=\linewidth]{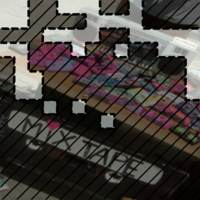}
        \captionsetup{justification=centering}
        \caption{\scriptsize GC-F \\ \cite{gradcam}}
    \end{subfigure}
    \hfill
    \begin{subfigure}[t]{0.112\textwidth}
        \includegraphics[width=\linewidth]{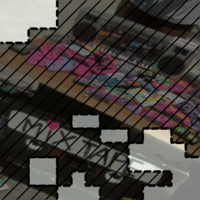}
        \captionsetup{justification=centering}
        \caption{\scriptsize FG-F \\ \cite{srinivas2019fullgrad}}
    \end{subfigure}
    \hfill
    \begin{subfigure}[t]{0.112\textwidth}
        \includegraphics[width=\linewidth]{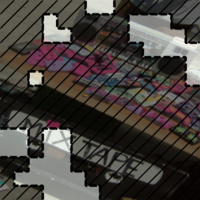}
        \captionsetup{justification=centering}
        \caption{\scriptsize RISE-F \\ \cite{Petsiuk2018RISERI}}
    \end{subfigure}
    \hfill
    \hfill
    \begin{subfigure}[t]{0.112\textwidth}
        \includegraphics[width=\linewidth]{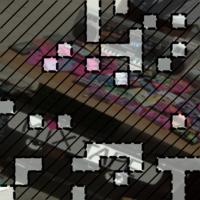}
        \captionsetup{justification=centering}
        \caption{\scriptsize Archi.-F \\ \cite{tsang2020how}}
    \end{subfigure}
    \hfill

    \begin{subfigure}[t]{0.112\textwidth}
        \includegraphics[width=\linewidth]{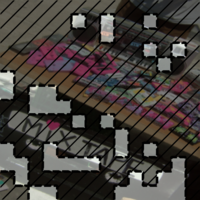}
        \captionsetup{justification=centering}
        \caption{\scriptsize MFABA-F \\ \cite{zhu2023mfaba}}
    \end{subfigure}
    \hfill
    \begin{subfigure}[t]{0.112\textwidth}
        \includegraphics[width=\linewidth]{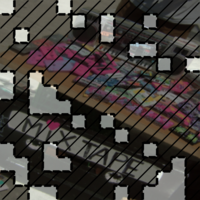}
        \captionsetup{justification=centering}
        \caption{\scriptsize AGI-F \\ \cite{agi}}
    \end{subfigure}
    \hfill
    \begin{subfigure}[t]{0.112\textwidth}
        \includegraphics[width=\linewidth]{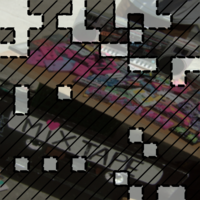}
        \captionsetup{justification=centering}
        \caption{\scriptsize AMPE-F \\ \cite{zhu2023attexplore}}
    \end{subfigure}
    \hfill
    \begin{subfigure}[t]{0.112\textwidth}
        \includegraphics[width=\linewidth]{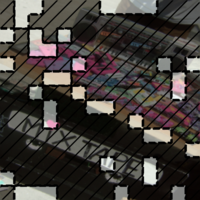}
        \captionsetup{justification=centering}
        \caption{\scriptsize BCos-F \\ \cite{bcos}}
    \end{subfigure}
    \hfill
    \begin{subfigure}[t]{0.112\textwidth}
        \includegraphics[width=\linewidth]{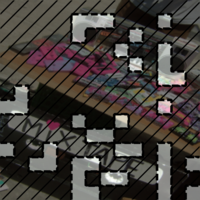}
        \captionsetup{justification=centering}
        \caption{\scriptsize XDNN \\ \cite{xdnn}}
    \end{subfigure}
    \hfill
    \begin{subfigure}[t]{0.112\textwidth}
        \includegraphics[width=\linewidth]{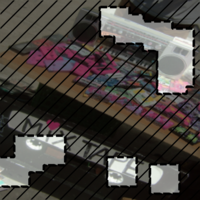}
        \captionsetup{justification=centering}
        \caption{\scriptsize BagNet \\ \cite{brendel2018bagnets}}
    \end{subfigure}
    \hfill
    \begin{subfigure}[t]{0.112\textwidth}
        \includegraphics[width=\linewidth]{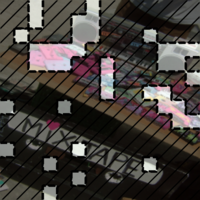}
        \captionsetup{justification=centering}
        \caption{\scriptsize FRESH \\ \cite{Jain2020LearningTF}}
    \end{subfigure}
    \hfill
    \begin{subfigure}[t]{0.112\textwidth}
        \includegraphics[width=\linewidth]{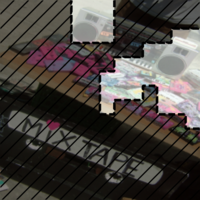}
        \captionsetup{justification=centering}
        \caption{\scriptsize \textbf{SOP (Ours)}}
    \end{subfigure}
    \hfill
    \foo{}
    \caption{Example groups from different feature attribution methods for an image of Cassette player. ``-F'' indicates self-attributing models converted from post-hoc methods. The highlights show the groups selected by each method (top 20\% attributed patches), with unused patches hatched-out.}
    \label{fig:eg-6}
    \foo{}
\end{figure*}
\begin{figure*}[t]
    \centering
    \begin{subfigure}[t]{0.112\textwidth}
        \includegraphics[width=\linewidth]{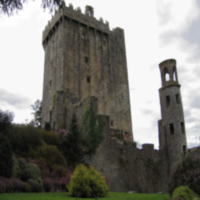}
        \captionsetup{justification=centering}
        \caption{\scriptsize Image: Castle}
    \end{subfigure}
    \hfill
    \begin{subfigure}[t]{0.112\textwidth}
        \includegraphics[width=\linewidth]{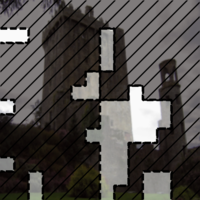}
        \captionsetup{justification=centering}
        \caption{\scriptsize LIME-F \\ \cite{Ribeiro2016WhySI}}
    \end{subfigure}
    \hfill
    \begin{subfigure}[t]{0.112\textwidth}
        \includegraphics[width=\linewidth]{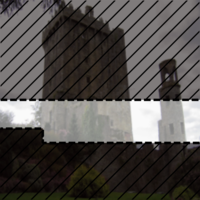}
        \captionsetup{justification=centering}
        \caption{\scriptsize SHAP-F \\ \cite{shap}}
    \end{subfigure}
    \hfill
    \begin{subfigure}[t]{0.112\textwidth}
        \includegraphics[width=\linewidth]{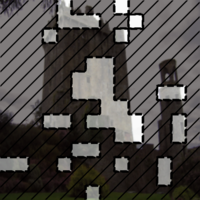}
        \captionsetup{justification=centering}
        \caption{\scriptsize IG-F \\ \cite{intgrad}}
    \end{subfigure}
    \hfill
    \begin{subfigure}[t]{0.112\textwidth}
        \includegraphics[width=\linewidth]{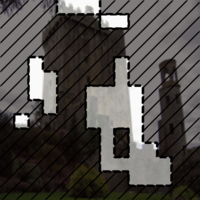}
        \captionsetup{justification=centering}
        \caption{\scriptsize GC-F \\ \cite{gradcam}}
    \end{subfigure}
    \hfill
    \begin{subfigure}[t]{0.112\textwidth}
        \includegraphics[width=\linewidth]{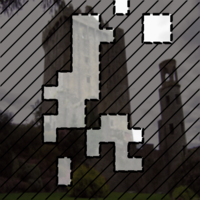}
        \captionsetup{justification=centering}
        \caption{\scriptsize FG-F \\ \cite{srinivas2019fullgrad}}
    \end{subfigure}
    \hfill
    \begin{subfigure}[t]{0.112\textwidth}
        \includegraphics[width=\linewidth]{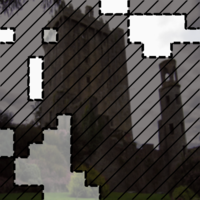}
        \captionsetup{justification=centering}
        \caption{\scriptsize RISE-F \\ \cite{Petsiuk2018RISERI}}
    \end{subfigure}
    \hfill
    \begin{subfigure}[t]{0.112\textwidth}
        \includegraphics[width=\linewidth]{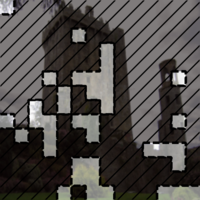}
        \captionsetup{justification=centering}
        \caption{\scriptsize Archi.-F \\ \cite{tsang2020how}}
    \end{subfigure}
    \hfill

    \begin{subfigure}[t]{0.112\textwidth}
        \includegraphics[width=\linewidth]{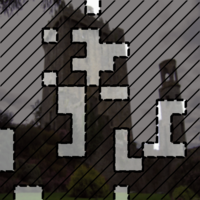}
        \captionsetup{justification=centering}
        \caption{\scriptsize MFABA-F \\ \cite{zhu2023mfaba}}
    \end{subfigure}
    \hfill
    \begin{subfigure}[t]{0.112\textwidth}
        \includegraphics[width=\linewidth]{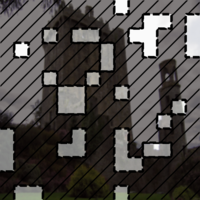}
        \captionsetup{justification=centering}
        \caption{\scriptsize AGI-F \\ \cite{agi}}
    \end{subfigure}
    \hfill
    \begin{subfigure}[t]{0.112\textwidth}
        \includegraphics[width=\linewidth]{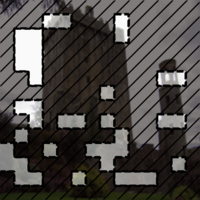}
        \captionsetup{justification=centering}
        \caption{\scriptsize AMPE-F \\ \cite{zhu2023attexplore}}
    \end{subfigure}
    \hfill
    \begin{subfigure}[t]{0.112\textwidth}
        \includegraphics[width=\linewidth]{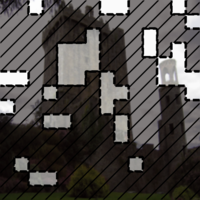}
        \captionsetup{justification=centering}
        \caption{\scriptsize BCos-F \\ \cite{bcos}}
    \end{subfigure}
    \hfill
    \begin{subfigure}[t]{0.112\textwidth}
        \includegraphics[width=\linewidth]{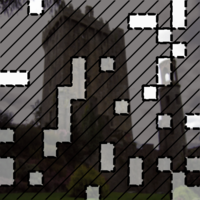}
        \captionsetup{justification=centering}
        \caption{\scriptsize XDNN \\ \cite{xdnn}}
    \end{subfigure}
    \hfill
    \begin{subfigure}[t]{0.112\textwidth}
        \includegraphics[width=\linewidth]{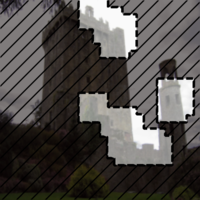}
        \captionsetup{justification=centering}
        \caption{\scriptsize BagNet \\ \cite{brendel2018bagnets}}
    \end{subfigure}
    \hfill
    \begin{subfigure}[t]{0.112\textwidth}
        \includegraphics[width=\linewidth]{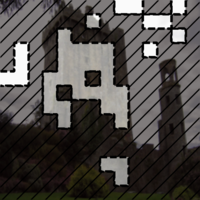}
        \captionsetup{justification=centering}
        \caption{\scriptsize FRESH \\ \cite{Jain2020LearningTF}}
    \end{subfigure}
    \hfill
    \begin{subfigure}[t]{0.112\textwidth}
        \includegraphics[width=\linewidth]{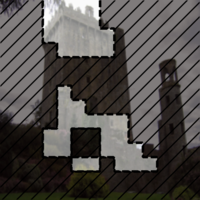}
        \captionsetup{justification=centering}
        \caption{\scriptsize \textbf{SOP (Ours)}}
    \end{subfigure}
    \hfill
    \foo{}
    \caption{Example groups from different feature attribution methods for an image of Castle. ``-F'' indicates self-attributing models converted from post-hoc methods. The highlights show the groups selected by each method (top 20\% attributed patches), with unused patches hatched-out.}
    \label{fig:eg-7}
    \foo{}
\end{figure*}

\subsubsection{Sparsity}
\label{app:sparsity}
Additional plots comparing how different sparsities in groups affect errors for CosmoGrid and MultiRC are shown in Figure~\ref{fig:sparsity_app}.
We can see that for CosmoGrid, \wrapper{} has the lowest MSE on average, and the best at sparsity 0.8 (keeping top 20\% features) where it is trained.
For MultiRC, \wrapper{} is competitive with FRESH while being better at the sparsity it is trained (sparsity 0.8).

\subsubsection{Fidelity}
\label{app:fidelity}

Fidelity assesses if attributions sum up to be the same as the model's prediction~\citep{Nauta_2023}.
Faithful attributions should have low fidelity for the attributions of each feature to accurately represent the contribution from the feature.

Fidelity can be measured by the KL-Divergence between the model predicted probabilities $\hat{p}(x)\in[0, 1]^k$ for all $k$ classes and the probability using summed attributions $\Tilde{p}(x)\in[0, 1]^k$ for all $k$ classes~\citep{yu2017towards,chen2019scalable,anders2020fairwashing}.
\begin{equation}
    \mathrm{Fidelity} = \mathbb{E}_{x\sim \mathcal{D}}[\mathrm{KL}(\hat{p}(x)\;||\;\Tilde{p}(x))]
    \label{eqn:fidelity}
\end{equation}
where $\hat{p} = \mathrm{softmax}(f(x))$ is the predicted probability of the model, and $\Tilde{p}=\mathrm{softmax}(\sum_i \alpha_i)$ is the probability of summed attribution scores for data $x$ in distribution $\mathcal{D}$.
As self-attributing neural networks all follow the design $f(x) = \sum_{i=1}^m \theta(x)_i h(x)_i$ and the attribution scores $\alpha_i = \theta(x)_i h(x)_i$, self-attributing neural networks by design achieve 0 fidelity.
We report fidelity of post-hoc models in \cref{tab:combined_fidelity} and show that self-attributing neural networks are the only ones that achieve perfect fidelity.

For \wrapper{}, the sum of attribution scores is the sum of group predictions weighted by the group selector scores $\sum_i \alpha_i = \sum_i c_i y_i$.
For FRESH~\citep{Jain2020LearningTF}, they only have one group, so the score for each class is the same as the prediction from that one group.
All the self-attributing neural networks achieves fidelity of 0 by construction.

For most post-hoc baselines, the sum of attribution scores is simply the sum of all the scores for each feature.
We report the fidelity scores for post-hoc methods in \cref{tab:combined_fidelity}.
We can see that no post-hoc method achieves perfect fidelity of 0.

\begin{table}[t]
\small
\centering
\begin{tabular}{l|c|c|c}
\toprule
\multirow{2}{*}{Method} & \multicolumn{1}{c|}{\textbf{ImageNet}} & \multicolumn{1}{c|}{\textbf{CosmoGrid}} & \multicolumn{1}{c}{\textbf{MultiRC}} \\
& Fidelity$\downarrow$ & Fidelity$\downarrow$ & Fidelity$\downarrow$ \\
\midrule
LIME & 3.866 $\pm$ 0.244 & \textbf{0.100 $\pm$ 0.000} & \textbf{1.550 $\pm$ 0.271} \\
SHAP & \textit{0.015 $\pm$ 0.006} & \textbf{0.100 $\pm$ 0.000} & \textit{1.628 $\pm$ 0.283} \\
IG & 7.161 $\pm$ 0.212 & \textbf{0.100 $\pm$ 0.000} & 1.921 $\pm$ 0.276 \\
GC & 10.406 $\pm$ 1.098 & 1.697 $\pm$ 0.277 & 1.697 $\pm$ 0.277 \\
FG & 13.567 $\pm$ 0.158 & 1.921 $\pm$ 0.277 & 1.921 $\pm$ 0.277 \\
RISE & 0.884 $\pm$ 0.533 & \textbf{0.100 $\pm$ 0.000} & 2.771 $\pm$ 0.231 \\ 
Archi. & 10.850 $\pm$ 0.354 & 1.684 $\pm$ 0.277 & 1.684 $\pm$ 0.277 \\
MFABA & 6.674 $\pm$ 0.166 & 1.690 $\pm$ 0.267 & 1.690 $\pm$ 0.267 \\
AGI & 5.416 $\pm$ 0.549 & \textit{1.665 $\pm$ 0.254} & 1.665 $\pm$ 0.254 \\
AMPE & 13.671 $\pm$ 0.326 & 2.532 $\pm$ 0.493 & 2.532 $\pm$ 0.493 \\
BCos & 13.372 $\pm$ 0.373 & -- & -- \\ 
\bottomrule
\end{tabular}
\caption{Fidelity scores for post-hoc methods on ImageNet, CosmoGrid, and MultiRC. Lower values indicate better fidelity. We omit numbers for self-explaining models as they by design achieve fidelity of 0. Post-hoc attribution methods are not able to achieve fidelity of 0, which empirically shows that they are not faithful to the model prediction. For CosmoGrid, to compute KL-divergence for fidelity, an additional softmax operation is applied on top of the CNN predicted logits as KL-divergence only works on probability distributions.}
\label{tab:combined_fidelity}
\end{table}

\subsubsection{Insertion and Deletion}
\label{app:ins_del}

\paragraph{Original Insertion and Deletion.}
Intuitively, if we add features from the most important to the least important one-by-one starting with a blank image, the probability for the predicted class should go up quickly.
Conversely, if we delete them from the most to least important, the probability should ideally drop instantly.
The mainstream insertion and deletion metrics \citep{samek2017evaluating,Petsiuk2018RISERI} intend to evaluate if the features are actually important by inserting or deleting them one-by-one and computing the area-under-the-curve (AUC).

As insertion and deletion criteria are designed for pixel-level instead of groups of features, we consider a modified version of insertion and deletion for groups, which grow or shrink the groups instead of adding pixels to the model.
We report the percentage probability AUC instead of the absolute, to accommodate the difference in accuracy when the models are different.
We use step size of 10\% following previous work~\citep{wu2024on}. 
Table~\ref{tab:imagenet_faithfulness} shows that \wrapper{} performs the best for both insertion and deletion on ImageNet.
Among other methods, the perturbation-based methods such as SHAP and LIME perform better than gradient-based methods.
Table~\ref{tab:combined_ins_del_single} shows that \wrapper{} performs the best for insertion on Cosmogrid and MultiRC.

The mainstream insertion and deletion metrics \citep{samek2017evaluating,Petsiuk2018RISERI} are approximations intended to evaluate the faithfulness of post-hoc methods~\citep{Nauta_2023}.
Nevertheless, we include results for insertion and deletion for completeness.

\paragraph{Adapting Insertion and Deletion for Groups.}
\begin{table*}[t]
\small
\centering
\begin{tabular}{c|c|cc|cc}
\toprule
\multirow{2}{*}{Category} & \multirow{2}{*}{Method} & \multicolumn{2}{c|}{\textbf{MultiRC}} & \multicolumn{2}{c}{\textbf{CosmoGrid}} \\
& & Ins.$\uparrow$ & Del.$\downarrow$ & Ins.$\downarrow$ & Del.$\uparrow$ \\
\midrule
\multirow{11}{*}{Post-Hoc-Converted} & LIME & 0.869 $\pm$ 0.020 & \textit{0.778 $\pm$ 0.005} & 0.028 $\pm$ 0.001 & \underline{0.028 $\pm$ 0.001} \\
& SHAP & 0.840 $\pm$ 0.016 & 0.839 $\pm$ 0.017 & 0.023 $\pm$ 0.001 & 0.023 $\pm$ 0.001 \\
& IG & 0.878 $\pm$ 0.009 & 0.852 $\pm$ 0.019 & 0.027 $\pm$ 0.001 & 0.027 $\pm$ 0.001 \\
& GC & 0.882 $\pm$ 0.011 & 0.876 $\pm$ 0.014 & 0.026 $\pm$ 0.001 & 0.026 $\pm$ 0.001 \\
& FG & 0.928 $\pm$ 0.009 & 0.834 $\pm$ 0.014 & 0.025 $\pm$ 0.001 & 0.025 $\pm$ 0.001 \\
& RISE & \textit{0.961 $\pm$ 0.017} & 0.832 $\pm$ 0.016 & 0.027 $\pm$ 0.001 & 0.027 $\pm$ 0.001 \\ 
& Archi. & 0.669 $\pm$ 0.021 & 0.920 $\pm$ 0.015 & 0.036 $\pm$ 0.002 & \textbf{0.036 $\pm$ 0.002} \\
& MFABA & 0.863 $\pm$ 0.011 & 0.873 $\pm$ 0.007 & \underline{0.023 $\pm$ 0.004} & 0.023 $\pm$ 0.004 \\
& AGI & 0.929 $\pm$ 0.013 & 0.901 $\pm$ 0.007 & 0.024 $\pm$ 0.004 & 0.024 $\pm$ 0.004 \\
& AMPE & 0.868 $\pm$ 0.007 & 0.892 $\pm$ 0.018 & 0.027 $\pm$ 0.004 & 0.027 $\pm$ 0.004 \\
\midrule
\multirow{2}{*}{Self-Attributing} & FRESH & 0.937 $\pm$ 0.013 & \textbf{0.710 $\pm$ 0.028} & -- & -- \\
& SOP & \textbf{1.018 $\pm$ 0.022} & 0.949 $\pm$ 0.007 & \textbf{0.020 $\pm$ 0.001} & 0.027 $\pm$ 0.002 \\
\bottomrule
\end{tabular}
\caption{Insertion/Deletion metrics for MultiRC and CosmoGrid. For MultiRC (left columns), higher insertion (Ins.$\uparrow$) and lower deletion (Del.$\downarrow$) are better. Best result is bolded, second-best is italicized. As FRESH is designed for text, it is unsurprising that it achieves better results on some metrics in text. We compute percent insertion/deletion scores as discussed in Appendix~\ref{app:ins_del}, which is why it is possible for \wrapper{} to obtain an insertion score larger than 1. For CosmoGrid (right columns), lower insertion (Ins.$\downarrow$) and higher deletion (Del.$\uparrow$) are better due to MSE loss. Best result is bolded, second-best is underlined. All tables report percent insertion and deletion scores with an interval of 10\%. FRESH is not applicable to CosmoGrid.}
\label{tab:combined_ins_del_single}
\end{table*}
The mainstream insertion and deletion criteria has a problem when directly used to evaluate group attribution. When inserting a single group of features, it is unclear what order one should use to insert individual features. In images, inserting pixels by row versus randomly can result in very different results. As a result, it is not immediately obvious how to directly apply the standard insertion and deletion criteria to the group setting.
Nevertheless, we can consider a modified version of insertion criteria for groups, where instead of adding a single pixel at a time, we instead gradually grow the size of all groups simultaneously. This procedure is equivalent to the classic insertion criteria when there is a single group. 
Specifically, we grow the size of the groups gradually from with an interval of 10\% of all features (approximately around 5017 pixels), and compute the AUCs of the highest predicted probability of the faithful models. 

For deletion, we perform a similar procedure by shrinking the groups gradually with the same interval of 10\% pixels and computing the AUCs on the highest predicted probability of the faithful models.
Because each faithful model's prediction start with a different value, it is hard to compare the deletion score.
Therefore, we make a small modification to the original deletion metric and penalize the cases where the prediction is wrong by setting the score of that point to 1.

Also, as we are using different self-attributing neural networks, the base prediction can be different, making it unfair when comparing the insertion and deletion scores.
We thus use the percent probability comparing with using all the input instead of the raw probability.

For self-attributing neural networks that select groups by thresholding attention, such as \wrapper{} and FRESH, we grow the size of the group according to the attention scores.
For models that have fixed group sizes, such as XDNN which uses groups of single pixels, and BagNet which uses groups of fixed patches, we perform insertion/deletion by adding/removing patches from the aggregation, following the same percentage for step sizes.
Similar things are performed when evaluating accuracy for different sparsity levels.

\begin{table*}[t]
\small
\centering
\begin{tabular}{c|c|cc|cc}
\toprule
\multirow{3}{*}{Category} & \multirow{3}{*}{Method} & \multicolumn{4}{c}{\textbf{ImageNet - ViT}} \\
& & \multicolumn{2}{c|}{\textbf{Step size 256 pix.}} & \multicolumn{2}{c}{\textbf{Occlusion w. Colors}} \\
& & Ins.$\uparrow$ & Del.$\downarrow$ & Ins.$\uparrow$ & Del.$\downarrow$ \\
\midrule
\multirow{11}{*}{Post-Hoc-Converted} & LIME-F & 0.815 $\pm$ 0.005 & 0.428 $\pm$ 0.004 & 0.797 $\pm$ 0.007 & 0.555 $\pm$ 0.023 \\
& SHAP-F & \textit{0.831 $\pm$ 0.006} & 0.373 $\pm$ 0.008 & \textit{0.909 $\pm$ 0.018} & 0.524 $\pm$ 0.023 \\
& IG-F & 0.611 $\pm$ 0.006 & 0.617 $\pm$ 0.008 & 0.679 $\pm$ 0.030 & 0.786 $\pm$ 0.059 \\
& GC-F & 0.772 $\pm$ 0.007 & 0.366 $\pm$ 0.007 & 0.789 $\pm$ 0.021 & 0.540 $\pm$ 0.031 \\
& FG-F & 0.759 $\pm$ 0.005 & 0.383 $\pm$ 0.004 & 0.830 $\pm$ 0.029 & 0.447 $\pm$ 0.026 \\
& RISE-F & 0.590 $\pm$ 0.008 & 0.661 $\pm$ 0.004 & 0.652 $\pm$ 0.043 & 0.776 $\pm$ 0.019 \\
& Archi.-F & 0.676 $\pm$ 0.003 & 0.501 $\pm$ 0.004 & 0.824 $\pm$ 0.058 & 0.608 $\pm$ 0.025 \\
& MFABA-F & 0.674 $\pm$ 0.006 & 0.499 $\pm$ 0.010 & 0.855 $\pm$ 0.048 & 0.610 $\pm$ 0.085 \\
& AGI-F & 0.735 $\pm$ 0.006 & 0.462 $\pm$ 0.008 & 0.875 $\pm$ 0.048 & 0.585 $\pm$ 0.050 \\
& AMPE-F & 0.675 $\pm$ 0.007 & 0.534 $\pm$ 0.005 & 0.738 $\pm$ 0.078 & 0.645 $\pm$ 0.041 \\
& BCos-F & 0.257 $\pm$ 0.005 & 0.288 $\pm$ 0.008 & 0.574 $\pm$ 0.175 & 0.380 $\pm$ 0.032 \\
\midrule
\multirow{4}{*}{Self-Explaining} & XDNN & 0.199 $\pm$ 0.007 & \textit{0.156 $\pm$ 0.003} & 0.245 $\pm$ 0.038 & 0.254 $\pm$ 0.044 \\
& BagNet & 0.560 $\pm$ 0.006 & 0.417 $\pm$ 0.007 & 0.878 $\pm$ 0.023 & \textit{0.228 $\pm$ 0.022} \\
& FRESH & 0.713 $\pm$ 0.002 & 0.369 $\pm$ 0.004 & 0.746 $\pm$ 0.033 & 0.512 $\pm$ 0.046 \\
& SOP & \textbf{0.890 $\pm$ 0.004} & \textbf{0.014 $\pm$ 0.000} & \textbf{0.910 $\pm$ 0.010} & \textbf{0.106 $\pm$ 0.000} \\
\bottomrule
\end{tabular}
\caption{ImageNet Insertion (Ins.$\uparrow$) and Deletion (Del.$\downarrow$) metrics. Higher Ins. and lower Del. are better. Best results are \textbf{bolded}, second-best are \textit{italicized}. Step size results use $16\times 16$ patches (256 pixels, ~5\% of image). Occlusion uses a random color. All tables report percent scores with an interval of 10\%.}
\label{tab:imagenet_ins_del_ablations}
\end{table*}

\paragraph{Different Step Sizes.}
Here in the Appendix, we also report more finegrained insertion and deletion scores using step sizes of $16\times 16$ patches (which equals 256 pixels) in Table~\ref{tab:imagenet_ins_del_ablations}.
The result is very similar and \wrapper{} still achieves the best insertion and deletion scores.

\paragraph{Different Occlusion Strategies.}
There can be different occlusion values to use when masking out the groups.
To test if the choice of feature deletion values affects the results, we ran additional insertion/deletion experiments using an alternative deletion value as used in \citet{bluecher2024decoupling} (specifically, randomly sampled color from the image). The results in Table~\ref{tab:imagenet_ins_del_ablations} aligns with our previous results of replacing the deletion value with 0 and SOP still performs the best for both insertion and deletion on ImageNet. This shows that the evaluation is consistent with other feature deletion values.

\paragraph{Discussion: Biases in Insertion and Deletion Tests.}
\wrapper{} performs well on the insertion test for all tasks, while being only the best on deletion task for one task.
We notice a bias in the deletion test.
As the deletion test favors \sem{}s such that deleting the most important feature results in a sharp performance drop,
it biases towards models that rely primarily on a small number of features.
On the other hand, if a model distributes its dependence to multiple different groups of features, removing the new most important features will not lead to a large performance drop.
While deletion test is initially designed to evaluate post-hoc feature attributions that attempt to explain the same backbone model, it is not well-suited to evaluate self-attributing models, as it will score models that depend on a few features more, regardless of the underlying faithfulness of the explanation.

For example, if there are multiple flowers in an image, and the model only looks at the upper left corner, while ignoring all other parts of the image, then removing the flower in the upper left corner will reduce the model predicted probability for the flower to 0.
However, if the model averages the prediction from different parts of the image, its predicted probability for the flower will only drop a little bit if its most used flower is removed.
This doesn't mean that the second model's explanation is any less faithful than the first one.
On the other hand, the model's score for the one flower could be a smaller number that faithfully reflects the small amount of confidence reduced.

Figure~\ref{fig:del_curves_all} shows average deletion curves for \wrapper{}.
We see that even as we delete features from groups, \wrapper{} is able to maintain relatively high performance, resulting in a worse deletion score.
Such behavior is natural because the training objective in \wrapper{} encourages the group selector to select highly predictive groups, and multiple groups can compensate for the information missing in another group.

\paragraph{Insertion/Deletion Results for CosmoGrid and MultiRC.}
\cref{tab:combined_ins_del_single} shows insertion and deletion results for MultiRC and CosmoGrid.
We see that \wrapper{} is consistently good on the insertion metric while LIME is better at deletion.
We conjecture that this is due to different groups in \wrapper{} compensating for each other.

\begin{figure}[t]
    \centering

    \begin{minipage}[t]{0.32\linewidth}
        \centering
        \includegraphics[width=\linewidth]{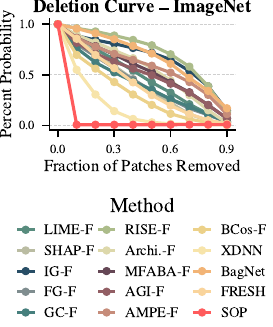}
        \label{fig:del_curve_imagenet}
    \end{minipage}
    \hfill
    \begin{minipage}[t]{0.32\linewidth}
        \centering
        \includegraphics[width=\linewidth]{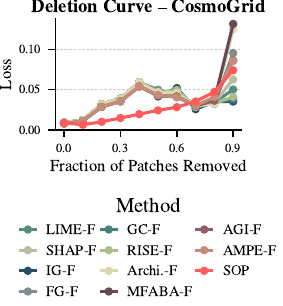}
        \label{fig:del_curve_cosmogrid}
    \end{minipage}
    \hfill
    \begin{minipage}[t]{0.32\linewidth}
        \centering
        \includegraphics[width=\linewidth]{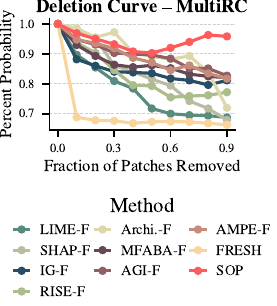}
        \label{fig:del_curve_multirc}
    \end{minipage}

    \caption{Deletion curves across datasets. SOP’s multi-group approach allows it to maintain low loss when only a few input features are preserved. Other methods show unstable behavior, especially in CosmoGrid.}
    \label{fig:del_curves_all}
\end{figure}

\subsubsection{Information Leak from the Group Generator}
\label{app:info_leak}

We show all the results for probing if the groups contain the labels in Table~\ref{tab:info-leak}.
We can see that models trained on the \wrapper{} groups are not able to predict the labels well.
While results from linear and ViT models are not differentiating different methods, CNN probing models show that the groups from other methods like FG-F and AGI-F are leaking a lot of information.
Archipelago is omitted because of the significant computational cost.
    to generate explanations for training examples, while the experiments with all other baselines already demonstrate the interpretability of linear combination in \wrapper{}.

\begin{table*}
    \centering
    \small
\setlength{\tabcolsep}{3pt}
\begin{tabular}{lrrrrrrrrrrrrr}
\toprule
{} &  LIME-F &  SHAP-F &  IG-F &  GC-F &  FG-F &  MFABA-F &  AGI-F &  AMPE-F &  BCos-F &  XDNN &  BagNet &  FRESH &  SOP \\
Model  &         &         &       &       &       &          &        &         &         &       &         &        &      \\
\midrule
CNN    &    0.10 &    0.17 &  9.38 &  2.96 & 13.40 &     8.27 &  10.66 &    6.67 &    0.12 &  0.14 &    0.06 &   2.55 & 0.10 \\
Linear &    2.58 &    0.15 &  3.41 &  3.46 &  2.95 &     2.95 &   3.06 &    3.28 &    2.32 &  3.14 &    0.15 &   3.61 & 2.62 \\
ViT    &    0.13 &    0.11 &  0.10 &  0.07 &  0.16 &     0.11 &   0.08 &    0.12 &    0.13 &  0.09 &    0.28 &   0.08 & 0.07 \\
\bottomrule
\end{tabular}
  
    \caption{(ImageNet Group Probing Model Accuracy) 
    {\color{myblueee}
    A model trained on group masks from \wrapper{} is unable to obtain accuracies much more than random. This indicates that the powerful group generator in \wrapper{} is not doing all the work and not compromising \wrapper{}'s interpretability.
    RISE and Archipelago are omitted because of the significant computational cost to generate explanations for training examples.
    }
    }
    \label{tab:info-leak}
\end{table*}




\subsubsection{Human Distiction Test}
\label{app:human}

\begin{figure}[t]
    \centering
    \includegraphics[width=\linewidth]{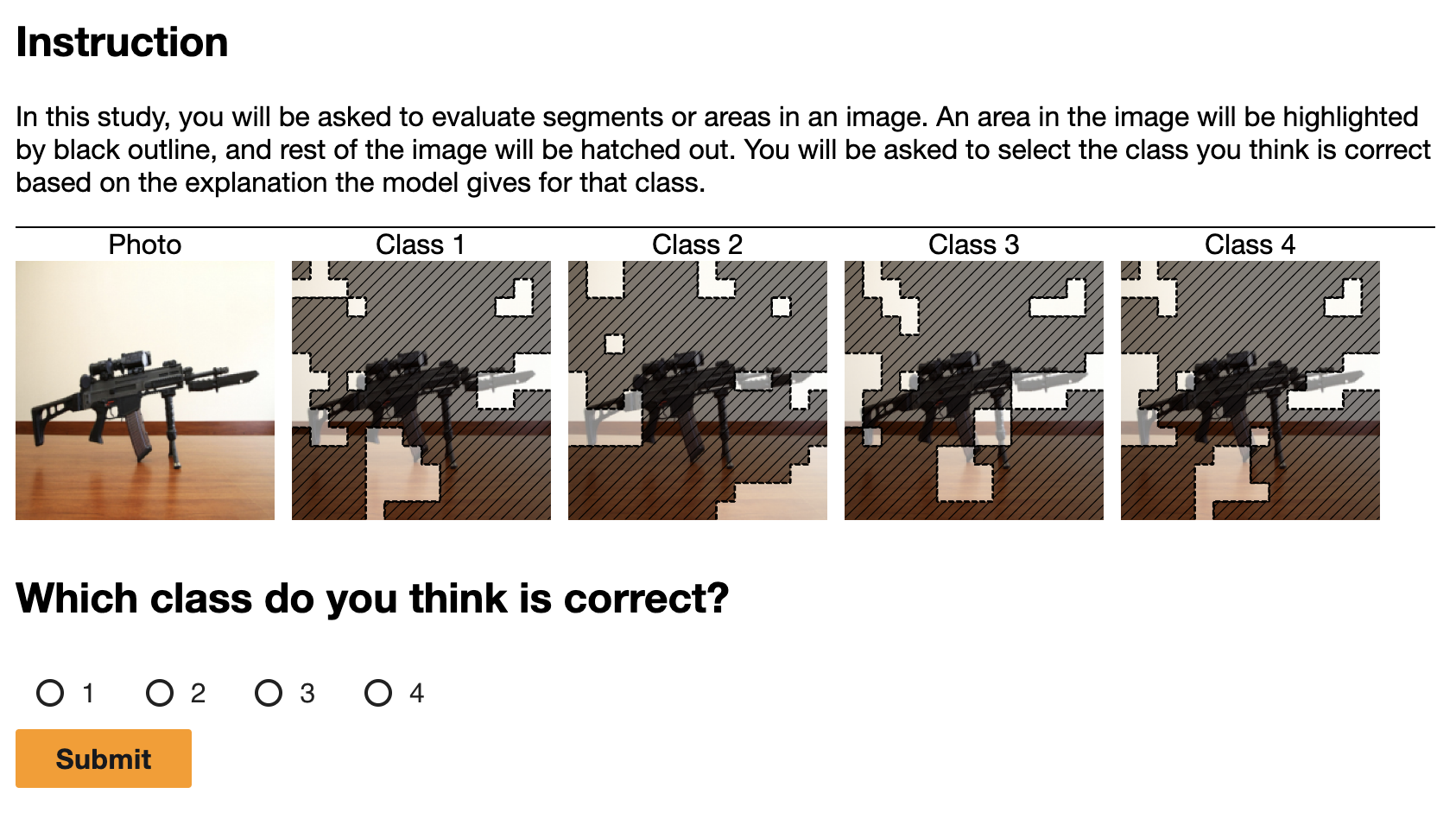}
    \caption{Human Distinction Task MTurk Interface. Each worker was shown the original image with four copies that have explanations for four classes highlighted. The predicted class is one of the four classes. The worker is asked to select the class that they think is correct based on the explanations for each class.}
    \label{fig:hive-mturk-interface}
\end{figure}
\begin{figure}[t]
    \centering
    \includegraphics[width=0.8\linewidth]{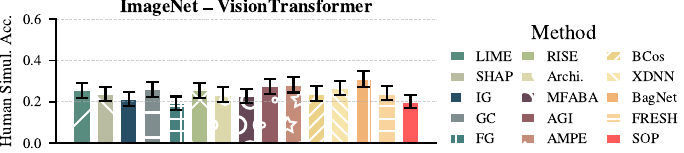}
    \caption{(ImageNet Human Distinction Task) We report humans distinction task accuracy for predicting which of four classes is the model prediction by looking at the attribution for each class~\citep{Kim2022HIVE}.
    Most methods are similar in the accuracy of human predicting the correct class based solely on the explanation, and the error bars are high, which is consistent with prior work HIVE while using a larger sample size than the original HIVE.
    }
    \label{fig:human-imagenet}
\end{figure}

\wrapper{} learns to use predictions from feature groups for the final prediction.
However, if the group explanations are the same for different classes, they are not semantically relevant to the classes.
We follow the HIVE protocol~\citep{Kim2022HIVE} and conduct a \textit{human distinction task} on ImageNet-S
to assess if people can predict which class the model predicts based solely on the group explanations for the classes.
In the study, 
human evaluators are shown an image with group explanations for four classes and asked to guess the model prediction.
Figure~\ref{fig:human-imagenet} shows a bar plot of human distinction task accuracies, where explanations from \wrapper{} is among the ones having most distinctive explanation for the predicted class.
The large error bars do not declare a single winner, while being consistent with original experiments in HIVE~\citep{Kim2022HIVE}.
\footnote{Here we show the attributions for different classes from the original post-hoc methods, since the converted \sem{}s only uses one group explanation--one for the highest predicted classes.}
Nevertheless, \wrapper{} groups for different classes are distinctive enough to provide meaningful explanations.

We follow HIVE~\citep{Kim2022HIVE} for constructing our distinction task for a human simulation test.
Figure~\ref{fig:human-imagenet} 
shows the accuracy of the human distinction task and its standard deviation bootstrapped 1000 times.
This human evaluation is conducted on Amazon Mechanical Turk \citep{mturk} on 100 examples each evaluated by 3 mturk users.

The large standard deviation is consistent with the literature and thus there is no one absolute winner method for human simulation~\citep{Kim2022HIVE}.
Figure~\ref{fig:hive-mturk-interface} show the interface we display to the human evaluators.

We pay each worker 0.05 per task, totaling 7.5 dollars per hour.
For each task, we have three workers to evaluate.
We show 100 images from 100 different classes (one image per class) for each explanation.

\begin{figure*}
    \centering
    \begin{subfigure}[t]{0.49\textwidth}
         \centering
        \includegraphics[width=\textwidth]{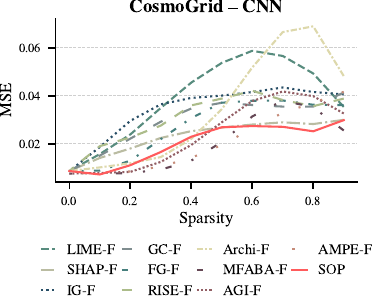}
        \caption{(CosmoGrid Sparsity vs. MSE) We report how mean square error increases when sparsity increases (fewer input feature are included in each group), where SOP's slowest increase is the most desired.}
        \label{fig:sparsity-cosmogrid}
     \end{subfigure}
     \hfill
     \begin{subfigure}[t]{0.49\textwidth}
         \centering
        \includegraphics[width=\textwidth]{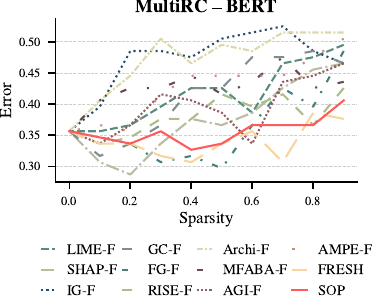}
        \caption{(MultiRC Sparsity vs. Error) We report how  error increases when sparsity increases (fewer input feature are included in each group), where SOP and FRESH have the desired slowest increases.} 
        \label{fig:sparsity-multirc}
     \end{subfigure}
     \caption{(Sparsity vs Error) for CosmoGrid and MultiRC}
     \label{fig:sparsity_app}
\end{figure*}

\subsection{Computation Cost}
\label{app:computation_cost}
Table~\ref{tab:computation_cost} shows the computation cost of \wrapper{} and other models.
All the baseline attribution methods whose extra number of forward passes can be controlled use at most 20 forward passes---the same number as \wrapper{} . 
We can see that other perturbation-based baselines also require multiple forward passes.
Gradient-based methods use one forward pass and we keep them that way.
For Archipelago-F, it requires pairwise comparison and thus incurs a higher cost of $O(d^2)$.

\subsection{Additional Baselines}

Table~\ref{tab:additional_baselines} shows results of three additional baselines on ImageNet-S: FastShap-F~\citep{jethani2021fastshap}, ViT-Shapley-F~\citep{covertlearning}, AutoGnothi-F~\citep{wang2025gnothi}. They are all post-hoc methods that attempt to approximate Shapley values faster, and we convert them as other post-hoc methods to a self-attributing version.
However, as they are approximations of Shapley values, they still fall short against \wrapper{}.

\begin{table}[t]
\centering
\begin{tabular}{lcccc}
\toprule
\textbf{Method} & \textbf{Error} & \textbf{IoU} & \textbf{Insertion} & \textbf{Deletion} \\
\midrule
FastShap-F    & 0.8270 & 0.0705 & 0.4556 & 0.7974 \\
ViT-Shapley-F    & 0.7323 & 0.2447 & 0.5329 & 0.5715 \\
AutoGnothi-F  & 0.7095 & 0.2229 & 0.5758 & 0.5337 \\
\bottomrule
\end{tabular}
\caption{(ImageNet-S) Comparison of methods by error, IoU, insertion, and deletion metrics for Additional Baselines.}
\label{tab:additional_baselines}
\end{table}

\section{Additional Cosmology Background}
\label{app:cosmo}






While outperforming other methods on standard metrics shows the advantage of our \groupa{}s, the ultimate goal of interpretability methods is for domain experts to use these tools and be able to use the explanations in real settings.
To validate the usability of our approach, we collaborated with domain experts and used \wrapper{} to discover new cosmological knowledge about the expansion of the universe and the growth of cosmic structure. 
 We find that the groups generated with \wrapper{} contain semantically meaningful structures to cosmologists.
 The resulting scores of these groups led to findings linking certain cosmological structures to the initial state of the universe, some of which were surprising and previously not known.

Weak lensing maps in cosmology calculate the spatial distribution of matter density in the universe using precise measurements of the shapes of $\sim$100 million galaxies \citep{y3-shapecatalog}. The shape of each galaxy is distorted (sheared and magnified) due to the curvature of spacetime induced by  mass inhomogenities as light travels towards us. Cosmologists have techniques that can infer the distribution of mass in the universe from these distortions, resulting in a weak lensing map \citep{y3-massmapping}. 

\paragraph{Problem Formulation.}
Cosmologists hope to use weak lensing maps to predict two key parameters related to the initial state of the universe: \Om{} and \seight{}. \Om{} captures the average energy density of all matter in the universe (relative to the total energy density which includes radiation and dark energy), while \seight{} describes the fluctuation of matter distribution (see e.g. \cite{Abbott_2022}). From these parameters, a cosmologist can simulate how cosmological structures, such as galaxies, superclusters and voids, develop throughout cosmic history. However, \Om{} and \seight{} are not directly measurable, and the inverse relation from cosmological structures in the weak lensing map to \Om{} and \seight{} is unknown. 

One approach to inferring \Om{} and \seight{} from weak lensing maps, as demonstrated for example by \citet{ribli2019weak,matilla2020weaklensing,Fluri_2022}, is to apply deep learning models that can compare  measurements to  simulated weak lensing maps. 
Even though these models have high performance, we do not fully understand how they predict \Om{} and \seight{}. As a result, the following remains an open question in cosmology: 

\begin{center}
\emph{What structures from weak lensing maps drive the inference of the cosmological parameters \Om{} and \seight{}?}
\end{center}

In collaboration with expert cosmologists, we use convolutional networks trained to predict  \Om{} and \seight{} as the backbone of an \wrapper{} model to get accurate predictions with faithful group attributions. Crucially, the guarantee of faithfulness in \wrapper{} provides confidence that the attributions reflect how the model makes its prediction, as opposed to possibly being a red herring. We then interpret and analyze these attributions and understand how structures in weak lensing maps of CosmoGridV1 \citep{cosmogrid1} influence \Om{}  and \seight{}. 

It will be interesting to explore how these results change as we mimic realistic data by adding noise and measurement artifacts. Other aspects worth exploring are the role of ``super-clusters'' that contain multiple clusters, and how to account for the fact that voids occupy much larger areas on the sky than clusters (i.e., should we be surprised that they perform better?). 


\end{document}